\documentclass{article}

\PassOptionsToPackage{numbers, compress}{natbib}

\usepackage[preprint]{neurips_2026}

\usepackage{mathtools}     
\usepackage{amssymb}

\usepackage{amsthm}
\usepackage{graphicx}
\usepackage{booktabs}
\usepackage{multirow}
\usepackage{colortbl}
\usepackage{array}
\usepackage{makecell}
\usepackage{bbm}
\usepackage{xcolor}
\usepackage{subcaption}
\usepackage{algorithm}
\usepackage{algorithmic}
\usepackage{tikz}
\usetikzlibrary{arrows.meta,positioning,calc,shapes.geometric,fit,backgrounds}
\usepackage{bm}
\usepackage{xspace}
\usepackage[accsupp]{axessibility}
\usepackage[colorlinks=true,allcolors=blue!60!black]{hyperref}
\hypersetup{
  colorlinks=true,
  linkcolor=[rgb]{0.60,0.00,0.00},
  citecolor=[rgb]{0.00,0.38,0.00},
  urlcolor=[rgb]{0.00,0.20,0.60}
}
\usepackage{cleveref}
\usepackage{orcidlink}
\usepackage{pgfplots}
\pgfplotsset{compat=1.18}
\usepackage{textcomp}

\providecommand{\eg}{\textit{e.g.}\@\xspace}

\providecommand{\cf}{\textit{cf.}\@\xspace}

\providecommand{\etal}{\textit{et al.}\@\xspace}

\usepackage{amsmath,amssymb,amsthm}

\newtheorem{theorem}{Theorem}[section]
\newtheorem{proposition}[theorem]{Proposition}
\newtheorem{lemma}[theorem]{Lemma}
\newtheorem{corollary}[theorem]{Corollary}
\theoremstyle{definition}
\newtheorem{definition}[theorem]{Definition}
\newtheorem{assumption}[theorem]{Assumption}
\theoremstyle{remark}
\newtheorem{remark}[theorem]{Remark}

\crefname{assumption}{Assumption}{Assumptions}
\Crefname{assumption}{Assumption}{Assumptions}

\DeclareMathOperator{\CKA}{CKA}
\DeclareMathOperator{\HSIC}{HSIC}
\DeclareMathOperator{\SNR}{SNR}

\newcommand{\R}{\mathbb{R}}
\newcommand{\E}{\mathbb{E}}

\newcommand{\Poof}{\mathbf{P}_{\mathrm{OOF}}}
\newcommand{\Peff}{\mathbf{P}_{\mathrm{eff}}}
\newcommand{\Xmeta}{\mathbf{X}_{\mathrm{meta}}}
\newcommand{\Chat}{\widehat{\mathbf{C}}}
\newcommand{\Sighat}{\widehat{\boldsymbol{\Sigma}}}
\newcommand{\wlam}{\hat{\mathbf{w}}_{\lambda}}
\newcommand{\tr}{\mathrm{tr}}
\newcommand{\diag}{\mathrm{diag}}
\newcommand{\kap}{\kappa}
\newcommand{\lmax}{\lambda_{\max}}
\newcommand{\lmin}{\lambda_{\min}^{+}}
\newcommand{\norm}[1]{\left\|#1\right\|}

\newcommand{\inner}[2]{\langle #1,\, #2\rangle}
\newcommand{\Rademacher}{\mathfrak{R}}

\newcommand{\KL}{\mathrm{KL}}

\newcommand{\ECE}{\mathrm{ECE}}

\newcommand{\method}{\textsc{CoRe-Stack}\xspace}
\newcommand{\methodpp}{\textsc{CoRe-Stack\textsuperscript{+}}\xspace}

\usepackage{pifont}

\definecolor{bestrow}{RGB}{236,252,243}
\newcolumntype{L}[1]{>{\raggedright\arraybackslash}p{#1}}
\newcolumntype{C}[1]{>{\centering\arraybackslash}p{#1}}

\title{\methodpp: Meta-Learning for Deep Stacked Generalization}

\author{%
  Noor Islam S. Mohammad\thanks{This work was conducted as independent research. The author declares no institutional or external research funding for this work.} \\
  Department of Computer Science \\
  Istanbul Technical University \\
  Maslak, Istanbul 34469, TR \\
  \texttt{islam23@itu.edu.tr}
}

\begin{document}

\maketitle

\begin{abstract}
Stacking heterogeneous vision backbones (CNNs, ViTs, hybrids) is the de facto recipe for accuracy, calibration, and robustness, yet two coupled pathologies limit its returns. \emph{Prediction-space multicollinearity} ill-conditions the meta-learner's Gram matrix, inflating weight variance and producing brittle solutions on a thin manifold. \emph{Calibration collapse} compounds constituent miscalibration through naive linear stacking, so adding more models can hurt expected calibration error (ECE). Existing remedies, ridge regularization, greedy selection, model soups, and SWAG address at most one of these issues, and none jointly target conditioning and calibration in heterogeneous prediction pools. We introduce \methodpp, a preconditioning pipeline with four components: \textbf{(i)} a \emph{kernelized redundancy filter} that removes non-linear inter-model dependencies invisible to Pearson correlation, using Centered Kernel Alignment (CKA)~\cite{kornblith2019cka}; \textbf{(ii)} a $<\!15$K-parameter \emph{differentiable meta-feature gate} that learns per-sample attention over ensemble statistics; \textbf{(iii)} a \emph{spectrum-adaptive Ridge penalty} $\lambda^{\star}=\lmax(\Chat)/\SNR(\Chat)$ derived from a Marchenko-Pastur signal-noise decomposition, eliminating nested cross-validation; and \textbf{(iv)} a \emph{Laplace-approximate Bayesian blender} replacing inverse-RMSE heuristics. We prove a PAC-Bayes excess-risk bound that, for the first time, jointly accounts for prediction-space redundancy and meta-learner capacity. Across six benchmarks, \methodpp delivers $+1.8\%$ top-1 on ImageNet-1K, $-4.2$ mCE on ImageNet-C, $+0.9$ mIoU on ADE20K, and $+1.3$ AP on COCO, while reducing retained models by 35--57\% and inference FLOPs by up to $41\%$. ECE improves $2.1\times$ over deep ensembles \emph{without} post hoc temperature scaling. 
\end{abstract}

\section{Introduction}
\label{sec:intro}

Stacking heterogeneous vision backbones, ResNets~\cite{he2016resnet}, ViTs~\cite{dosovitskiy2021vit}, Swin Transformers~\cite{liu2021swin}, ConvNeXts~\cite{liu2022convnext}, and their hybrids, is now standard practice. Off-the-shelf checkpoints span an enormous range of inductive biases, and combining them consistently improves accuracy, calibration, and out-of-distribution (OOD) robustness over any single model~\cite{lakshminarayanan2017deep,fort2019landscape,ovadia2019trust}. Yet practitioners quickly discover that adding more backbones does
\emph{not} monotonically improve the ensemble. We identify two coupled failure modes that explain this plateau and motivate our approach. \textit{Pathology 1: Prediction-space multicollinearity:} Backbones that share pre-training corpora, augmentation pipelines, or architectural lineage produce near-collinear out-of-fold (OOF) predictions. The resulting Gram matrix $\mathbf{G}=\Poof^{\!\top}\Poof$ becomes severely ill-conditioned ($\kap\!\gg\!10^{2}$), pushing the stacking optimum onto a thin, unstable manifold where small perturbations in OOF estimates produce wildly different meta-weights. \textbf{Pathology 2: Calibration collapse:} Naive linear stacking of softmax-correlated probabilities compounds constituent miscalibration. Empirically, stacked ensembles often exhibit \emph{worse} ECE than the worst single model, requiring expensive post-hoc temperature scaling to recover~\cite{guo2017calibration,wenzel2020hyperensembles}.

\paragraph{Why existing remedies are insufficient.}
Ridge and Lasso regularization~\cite{hoerl1970ridge,tibshirani1996lasso} shrink meta-weights but leave inter-model redundancy intact. Greedy ensemble selection~\cite{caruana2004ensemble} reduces redundancy but explores a combinatorial space at prohibitive cost. Weight-space methods, model soups~\cite{wortsman2022modelsoup}, SWA/SWAG~\cite{izmailov2018swa,maddox2019swag}, snapshot ensembles~\cite{huang2017snapshot}, require architectural
homogeneity and cannot mix CNNs with transformers. Pearson-based prediction-space pruning (the \method baseline, our own prototype) removes only \emph{linear} redundancy and misses non-linearly equivalent predictors that share failure modes. To our knowledge, no prior method explicitly targets the coupled conditioning-calibration problem in heterogeneous prediction pools.

\paragraph{Our approach.}
We propose \methodpp, a meta-learning pipeline that resolves both pathologies through deterministic prediction-space \emph{preconditioning} performed
\emph{before} optimization. Building on the \method prototype, we replace four heuristics with principled alternatives. \textbf{(i)} Pearson-correlation pruning is replaced by a \emph{kernelized redundancy filter} using Centered Kernel Alignment (CKA)~\cite{kornblith2019cka}, a normalized variant of HSIC~\cite{gretton2005hsic}, that detects non-linear dependencies with provable strict dominance over correlation under non-Gaussian prediction distributions. \textbf{(ii)} Hand-crafted interaction features are replaced by a \emph{differentiable meta-feature gate}, a $<\!15$K-parameter hypernetwork that learns per-sample attention over an enriched set of ensemble aggregators. \textbf{(iii)} Nested cross-validation for the ridge penalty is replaced by a \emph{closed-form spectrum-adaptive penalty} $\lambda^{\star}\!=\!\lmax(\Chat)/\SNR(\Chat)$, derived from a Marchenko-Pastur signal-noise decomposition. \textbf{(iv)} The inverse-RMSE blending heuristic is replaced by a \emph{Laplace-approximate Bayesian blender}, accompanied by the first PAC-Bayes excess-risk bound that jointly reflects redundancy and capacity.

\paragraph{Contributions.}
\textbf{C1.} A prediction-space \emph{preconditioning pipeline}
(\Cref{sec:method}) combining non-linear redundancy detection, learned meta-feature gating, closed-form regularization, and Bayesian blending. \textbf{C2.} A unified theoretical analysis (\Cref{sec:theory_main} and the appendix), including the first PAC-Bayes bound (\Cref{thm:pacbayes}) that explicitly couples prediction-space redundancy to meta-learner capacity. \textbf{C3.} A comprehensive empirical study across six benchmarks (ImageNet-1K, ImageNet-C, ADE20K, COCO, iNaturalist-2021, and DomainNet-126) reporting accuracy, ECE/NLL, mCE, OOD AUROC, and FLOP-normalized efficiency (\Cref{sec:experiments,sec:results}). \textbf{C4.} \emph{State-of-the-art Pareto fronts}: $+1.8\%$ top-1 on ImageNet-1K over deep ensembles at $41\%$ lower inference FLOPs, $2.1\times$ lower ECE without temperature scaling, and robustness that strictly dominates SWAG and model soups on ImageNet-C.

\section{Related Work}
\label{sec:related}

Ensemble methods in deep learning span three broad strategies; we situate ourselves \methodpp at the intersection of all three, while extending the state of the art on calibration and generalization theory. \textit{Weight-space fusion:} SWA/SWAG~\cite{izmailov2018swa,maddox2019swag} average iterates along stochastic gradient trajectories to approximate a Gaussian posterior over weights; snapshot ensembling~\cite{huang2017snapshot} cycles the learning rate to collect diverse checkpoints at negligible extra cost; model soups~\cite{wortsman2022modelsoup} interpolate fine-tuned variants of a shared backbone to recover accuracy lost by individual adapters. All three implicitly diversify representations by exploiting the geometry of a single loss landscape, yet they presuppose a shared architecture and training regime. This precludes heterogeneous pools in which members differ in depth, inductive bias, or modality, precisely the setting of \methodpp targets.

\paragraph{Prediction-space aggregation.}
Stacking originates with Wolpert~\cite{wolpert1992stacked,breiman1996stacked}, who showed that a meta-learner trained on out-of-fold predictions consistently outperforms uniform averaging. Modern AutoML systems (Auto-sklearn~\cite{NIPS2015_11d0e628}, AutoGluon~\cite{erickson2020autogluon}) adopt stacking as their final aggregation step, typically with vanilla Ridge~\cite{hoerl1970ridge} or simple averaging as the meta-learner. Lasso and elastic-net regularization~\cite{tibshirani1996lasso,zou2005elasticnet} stabilize meta-weights under multicollinearity but leave redundant members in the pool, merely suppressing their weights toward zero rather than removing them. Greedy forward selection~\cite{caruana2004ensemble} prunes combinatorially yet scales poorly and ignores non-linear dependence. The closest direct antecedent to \methodpp is the Pearson-based \method pipeline (our own prototype), which filters members by pairwise linear correlation before stacking; we generalize that criterion to the full non-linear regime via CKA/HSIC.

\paragraph{Diversity-aware training.}
MC dropout~\cite{gal2016dropout} repurposes dropout at inference to approximate Bayesian model averaging; hyperparameter ensembles~\cite{wenzel2020hyperensembles} vary architectural hyperparameters across members; repulsive ensembles~\cite{dangelo2021repulsive} add an explicit repulsion term to the training objective to maximize functional disagreement. These methods encourage diversity during optimization but provide no principled pruning criterion when a heterogeneous pool is supplied post-hoc, leaving redundancy unaddressed at aggregation time.

\paragraph{Calibration in ensembles.}
Confidence drift in stacked ensembles is well documented~\cite{ovadia2019trust}; standard post-hoc remedies such as temperature scaling and Platt scaling~\cite{guo2017calibration} are applied after aggregation and are decoupled from the meta-learning objective, offering no guarantee that calibration is preserved under distribution shift. \methodpp eliminates drift at the source of Bayesian blending and jointly optimizes accuracy and calibration within a single coherent objective, without requiring a held-out calibration set.

\paragraph{Information-theoretic redundancy.}
The closest antecedents to our pruning step are HSIC-based feature selection~\cite{song2012hsicfs} and self-supervised representation alignment via kernel dependence~\cite{tsai2021hsicssl}. Both lines demonstrate that kernel statistics capture non-linear structure systematically missed by second-order (Pearson) summaries. To our knowledge, CKA has not previously been applied to ensemble-member pruning directly in prediction space, nor has its Gram spectrum been used to inform a regularization strategy.

\paragraph{Generalization theory for ensembles.}
Classical PAC-Bayes bounds~\cite{mcallester1999pacbayes,maurer2004pacbayes} cover Gibbs classifiers and convex combinations of hypotheses but treat constituent models as statistically independent, thereby ignoring the correlation structure that dominates ensemble risk in practice. Masegosa~\etal~\cite{masegosa2020pacbayesensembles} tightens these bounds for weighted majority votes by explicitly capturing pairwise correlations between members; \cite{masegosa2020misspecification} further addresses prior misspecification. Neither framework couples the generalization penalty to prediction-space redundancy or to any measurable property of the member pool. Our approach in \Cref{thm:pacbayes} extends this line of work by deriving a PAC-Bayes bound whose KL penalty is directly controlled by the empirical Gram spectrum, providing a formal link between redundancy reduction and ensemble generalization guarantees.

\section{Method: \methodpp}
\label{sec:method}

\subsection{Preliminaries and Pipeline Overview}
\label{sec:prelim}

Let $\mathcal{D}=\{(x_i,y_i)\}_{i=1}^{N}$ be the training set, $\{F_\ell\}_{\ell=1}^{L}$ a stratified partition of $[N]$, and $\{f_k\}_{k=1}^{K}$ a pool of pre-trained base predictors. For $C$-way classification each $f_k$ outputs $\boldsymbol{\pi}^{(k)}(x_i)\!\in\!\Delta^{C-1}$; for dense prediction, per-pixel or per-anchor probability maps. We flatten outputs to vectors $\mathbf{p}_k\!\in\!\R^{NC}$ (classification), $\R^{N\!\cdot\!H\!\cdot\!W\!\cdot\!C}$ (segmentation), or $\R^{N_{\mathrm{box}}\!\cdot\!C}$ (detection). The leakage-free OOF matrix is
\(
\Poof\!\in\!\R^{N\times K},
\)
with $[\Poof]_{ik}=\hat p^{(k)}(x_i;\,\mathcal{D}\setminus F_{\ell(i)})$. Given $\Poof$, \methodpp performs four steps: \textbf{(S1)} Kernel-based redundancy projection
$\mathcal{S}=\Pi_{\tau}^{\mathrm{CKA}}(\{\mathbf{p}_k\},\mathbf{y})$; \textbf{(S2)} differentiable meta-feature gating to produce $\Xmeta$; \textbf{(S3)} spectrum-adaptive Ridge/Elastic-Net fitting with closed-form $\lambda^{\star}$; \textbf{(S4)} Laplace-approximate Bayesian blending across $M$ fitted meta-learners.

\subsection{Kernelized Redundancy Projection using CKA}
\label{sec:hsic}

\paragraph{Why correlation is insufficient.}
Pearson correlation captures only second-order co-variation. Two predictors can $\rho\!\approx\!0$ yet be functionally equivalent through non-linear coupling (\eg, calibrated vs.\ miscalibrated variants of the same network), and conversely, two genuinely diverse models can show high $\rho$ purely due to easy-example dominance. We replace $\rho$ with Centered Kernel Alignment (CKA)~\cite{kornblith2019cka}, a normalized Hilbert-Schmidt Independence Criterion (HSIC)~\cite{gretton2005hsic}.

\paragraph{Empirical CKA.}
Let $\mathbf{K}_{k}\!\in\!\R^{N\times N}$ be a Gaussian kernel matrix on the OOF predictions of the model $k$ with bandwidth $\sigma_k$ set by the median heuristic, and let $\mathbf{H}\!=\!\mathbf{I}\!-\!\tfrac{1}{N}\mathbf{1}\mathbf{1}^{\!\top}$ be the centering matrix. The empirical HSIC between models $k,k'$ is
\begin{equation}
\widehat{\HSIC}(\mathbf{p}_k,\mathbf{p}_{k'})
\;=\;\frac{1}{(N-1)^{2}}\,\tr\!\big(\mathbf{K}_k\mathbf{H}\mathbf{K}_{k'}\mathbf{H}\big).
\label{eq:hsic}
\end{equation}
We use its normalized form, CKA:
\begin{equation}
\CKA_{k,k'}\;:=\;
\frac{\widehat{\HSIC}(\mathbf{p}_k,\mathbf{p}_{k'})}
     {\sqrt{\widehat{\HSIC}(\mathbf{p}_k,\mathbf{p}_k)\,
            \widehat{\HSIC}(\mathbf{p}_{k'},\mathbf{p}_{k'})}}
\;\in\;[0,1].
\label{eq:cka}
\end{equation}
With universal characteristic kernels, $\CKA_{k,k'}=0$ \emph{iff} $\mathbf{p}_k\!\perp\!\mathbf{p}_{k'}$. The projection retains models in ascending order of OOF risk, removing a model $k$ whenever $\max_{k'\in\mathcal{S}}\CKA_{k,k'}\!>\!\tau_{\mathrm{CKA}}$ and there is some retained $k'$ with strictly lower risk.

\paragraph{Scalability.}
The naive cost is $\mathcal{O}(K^{2}N^{2})$. We use a Nystr\"om approximation with $m=\lceil\sqrt{N}\rceil$ landmarks (\cite{williams2001nystrom}; full justification in \Cref{sec:complexity}), reducing to $\mathcal{O}(K^{2}N\log N)$. For ultra-large pools ($K\!\gtrsim\!200$), we further bucket candidate pairs via locality-sensitive hashing (LSH), giving $\mathcal{O}(KN\log N\log K)$, on par with the original Pearson pipeline to within a $\log$ factor.

\begin{proposition}[CKA strictly dominates correlation.]
\label{prop:hsic_vs_corr}
Let $(P,Q)$ be two real-valued random variables with finite variance, and let CKA be computed with a Gaussian kernel. \textup{(i)}~If $(P,Q)$ are jointly Gaussian, then $\CKA(P,Q)=0\!\iff\!\rho(P,Q)=0$. \textup{(ii)}~For universal characteristic kernels, $\CKA(P,Q)=0\!\iff\!P\!\perp\!Q$.
\textup{(iii)}~There exist $(P,Q)$ with $\rho(P,Q)=0$ and $\CKA(P,Q)>0$ (\eg, $Q=P^{2}$, $P\!\sim\!\mathcal{N}(0,1)$).
\end{proposition}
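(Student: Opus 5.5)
We work at the population level: $\CKA(P,Q)=\HSIC(P,Q)/\sqrt{\HSIC(P,P)\HSIC(Q,Q)}$, with $\HSIC$ the squared Hilbert--Schmidt norm of the cross-covariance operator $C_{PQ}$ between the Gaussian RKHSs. The denominators are strictly positive whenever $P$ and $Q$ are non-degenerate, so every question of the form $\CKA=0$ reduces to $\HSIC=0$. The plan is to prove (ii) first and derive (i) and (iii) from it together with elementary Gaussian facts.

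\textbf{Step (ii).} Write $\HSIC(P,Q)=\|\mu_{P\otimes Q}-\mu_{P}\otimes\mu_{Q}\|^{2}_{\mathcal{H}\otimes\mathcal{H}}$, the squared RKHS distance between the kernel mean embedding of the joint law and that of the product of marginals. The Gaussian kernel is characteristic on $\R$, and the product Gaussian kernel is characteristic on $\R^{2}$. Hence the embedding of Borel probability measures on $\R^{2}$ is injective, and $\HSIC=0$ holds iff $\mathcal{L}(P,Q)=\mathcal{L}(P)\otimes\mathcal{L}(Q)$, i.e.\ iff $P\perp Q$. This is the result of Gretton et al.~\cite{gretton2005hsic}, which I would cite rather than reprove. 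The only care needed is to state the non-degeneracy assumption ($P,Q$ not a.s.\ constant), which keeps the normalisation well defined.

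\textbf{Step (i).} For jointly Gaussian $(P,Q)$, uncorrelatedness is equivalent to independence: the joint density factorises exactly when the off-diagonal covariance vanishes. Combining this with (ii) gives $\CKA=0\iff P\perp Q\iff\rho=0$.

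\textbf{Step (iii).} Take $P\sim\mathcal{N}(0,1)$ and $Q=P^{2}$.
\begin{itemize}
\item \emph{Zero correlation.} We have $\operatorname{Cov}(P,P^{2})=\E P^{3}-\E P\,\E P^{2}=0$ by symmetry, so $\rho=0$.
\item \emph{Dependence.} $P$ and $Q$ are not independent. For example, $\Pr(|P|\le 1,\,Q>1)=0$, whereas $\Pr(|P|\le1)\Pr(Q>1)>0$.
\item \emph{Conclusion.} By (ii), $\HSIC>0$ and therefore $\CKA>0$.
\end{itemize}
If an explicit witness is preferred, $\HSIC$ can be computed in closed form with Gaussian integrals, but the characteristic-kernel argument avoids this.

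\textbf{Main obstacle.} The real difficulty is the fidelity of the statement rather than the argument. The displayed $\CKA$ is an empirical quantity, and with finite $N$ it is almost surely positive even under independence. The proposition therefore has to be read at the population level, or as a limit via consistency of the $\widehat{\HSIC}$ estimator at rate $O(N^{-1/2})$. I would make that interpretation explicit at the outset. I would also note that (i) and (ii) use the same Gaussian kernel, so ``universal characteristic'' in (ii) covers it, and that ``strict dominance'' means only that the zero set of $\CKA$ is contained in that of $\rho$, with strict containment witnessed by (iii).
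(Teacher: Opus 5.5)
Your proposal is correct and follows the same route as the paper: (ii) via injectivity of the kernel mean embedding for characteristic kernels (the paper simply cites Gretton et al.), (i) by combining (ii) with the fact that uncorrelated jointly Gaussian variables are independent, and (iii) via $\mathrm{Cov}(P,P^{2})=\E[P^{3}]=0$ together with dependence and (ii). Your explicit dependence witness $\Pr(|P|\le 1,\,Q>1)=0$ and your non-degeneracy and population-level caveats are slightly more careful than the paper's ``deterministic function of $P$'' remark, but the argument is the same.
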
 The proof is in \Cref{sec:cka_theory}. Together with the concentration of $\widehat{\HSIC}$~(\cite{gretton2005hsic}, \Cref{lem:hsic_conc}), this implies CKA-based redundancy projection is a proper generalization of Pearson-based projection.

\subsection{Differentiable Meta-Feature Gate}
\label{sec:gate}

Rather than hand-crafting interaction features such as $\phi^{(1)}_i\!=\!\mu_i\sigma_i$ and $\phi^{(2)}_i\!=\!r_i\sigma_i$ as in the \method prototype, we introduce a learned, sample-conditioned gate $g_\theta:\R^{K_{\mathrm{eff}}}\!\to\![0,1]^{|\mathcal{A}|}$ that produces per-sample attention over a rich pool of candidate aggregators
\begin{equation}
\mathcal{A}_i\;=\;\big\{
\mu_i,\;\sigma_i,\;m_i,\;r_i,\;q_{25,i},\;q_{75,i},\;
\mathrm{ent}_i,\;\mu_i\sigma_i,\;\mu_i^{2},\;r_i\sigma_i,\;\sigma_i^{2},\;
\KL\!\bigl(\boldsymbol{\pi}^{(k^{\star})}_i\,\|\,\mu_i\bigr)
\big\},
\label{eq:agg}
\end{equation}
where $\mu_i,\sigma_i,m_i,r_i$ denote the mean, standard deviation, median, and range over $\{\boldsymbol{\pi}^{(k)}_i\}_{k\in\mathcal{S}}$; $q_{25},q_{75}$ are the quartiles; $\mathrm{ent}_i$ is the entropy of the ensemble mean; and $\KL$ measures divergence between the best single model's prediction and the mean. The gate is a 2-layer MLP with a sigmoid head, trained end-to-end with the meta-regression loss. The augmented meta-feature vector is
\begin{equation}
x_{\mathrm{meta},i}\;=\;
\big[\,\Poof[i,\mathcal{S}]\;\big\|\;
g_\theta\!\big(\Poof[i,\mathcal{S}]\big)\odot\mathcal{A}_i\,\big].
\label{eq:gate}
\end{equation}
The gate has $\le\!15$K parameters and adds $<\!1\%$ compute overhead. It strictly generalizes the \method prototype (recovered by setting $g_\theta\!\equiv\!1$ on $\{\mu,\sigma,m,r,\mu\sigma,r\sigma\}$ and $0$ elsewhere); a universal-approximation argument is given in \Cref{sec:gate_theory}.

\subsection{Spectrum-Adaptive Regularization}
\label{sec:lambda}

\paragraph{Motivation.}
Standard practice selects the Ridge penalty $\lambda$ via nested cross-validation over a log-spaced grid. This is expensive and sensitive to fold noise. Following the signal-plus-noise decomposition of the normalized Gram matrix $\Chat$, we derive a closed-form choice.

\begin{proposition}[Spectrum-adaptive optimal ridge penalty]
\label{prop:lambda_star}
Let $\Chat=\mathbf{C}_{\mathrm{sig}}+\mathbf{N}$ where $\mathbf{N}$ is isotropic noise covariance $\sigma^{2}\mathbf{I}/N$, and let $\tau_{\mathrm{sp}}=\sigma^{2}(1+\sqrt{\gamma})^{2}$ be the Marchenko-Pastur upper edge~\cite{marchenkopastur1967}, where $\gamma=K/N$. Define
\begin{equation}
\SNR(\Chat)\;:=\;
\frac{\sum_{i:\lambda_i>\tau_{\mathrm{sp}}}\lambda_i}
     {\sum_{i:\lambda_i\le\tau_{\mathrm{sp}}}\lambda_i}.
\end{equation}
Under the cluster assumption (\Cref{ass:hsic_cluster}) and sub-Gaussian noise (\Cref{ass:noise}), the ridge penalty that minimizes the expected excess risk satisfies
\begin{equation}
\lambda^{\star}\;=\;\frac{\lmax(\Chat)}{\SNR(\Chat)},
\qquad \lambda^{\star}\in[\tau_{\mathrm{sp}},\,\lmax(\mathbf{C}_{\mathrm{sig}})],
\label{eq:lambda_star}
\end{equation}
up to a multiplicative factor $1+o(1)$ as $N\!\to\!\infty$.
\end{proposition}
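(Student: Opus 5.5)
The plan is to reduce the ridge excess risk to a spectral expression in the eigenbasis of $\Chat$, then approximate its minimizer under the spiked (signal-plus-noise) structure given by the cluster assumption. Write $\Chat=\sum_i\lambda_i\mathbf{u}_i\mathbf{u}_i^{\top}$ and let $\mathbf{w}^\star$ be the population meta-weights. The ridge estimator $\wlam=(\Chat+\lambda\mathbf{I})^{-1}\Poof^{\top}\mathbf{y}/N$ then has the standard bias--variance decomposition
\begin{equation*}
\mathcal{E}(\lambda)=\sum_i\frac{\lambda^{2}\lambda_i\,\langle\mathbf{u}_i,\mathbf{w}^\star\rangle^{2}}{(\lambda_i+\lambda)^{2}}
+\frac{\sigma^{2}}{N}\sum_i\frac{\lambda_i^{2}}{(\lambda_i+\lambda)^{2}} .
\end{equation*}
Sub-Gaussian noise (\Cref{ass:noise}) justifies this decomposition in expectation. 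Standard concentration for sample covariances lets us replace $\Chat$ by its population counterpart up to $1+o(1)$ factors.

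Next, the cluster assumption (\Cref{ass:hsic_cluster}) is used to split the spectrum. I expect it to imply that, after CKA projection, $\mathbf{C}_{\mathrm{sig}}$ has low rank $r$ with eigenvalues separated from the bulk. By Marchenko--Pastur / BBP-type results, the noise eigenvalues of $\Chat$ then lie in $[0,\tau_{\mathrm{sp}}(1+o(1))]$, while the $r$ spiked eigenvalues exceed $\tau_{\mathrm{sp}}$. This makes $\SNR(\Chat)$ asymptotically the ratio of signal energy $\tr\mathbf{C}_{\mathrm{sig}}$ to noise energy $\approx K\sigma^{2}/N$. It also forces $\mathbf{w}^\star$ to concentrate on the top-$r$ eigendirections.

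Then I would set $\mathcal{E}'(\lambda)=0$. This gives the familiar balance $\lambda\sum_{i\le r}\lambda_i^{2}\langle\mathbf{u}_i,\mathbf{w}^\star\rangle^{2}/(\lambda_i+\lambda)^{3}=\frac{\sigma^{2}}{N}\sum_i\lambda_i^{2}/(\lambda_i+\lambda)^{3}$. The isotropic-Gaussian-prior analogue of this condition yields $\lambda=\sigma^{2}K/(N\|\mathbf{w}^\star\|^{2})$. Normalizing $\mathbf{w}^\star$ so that its signal energy scales as $\tr\mathbf{C}_{\mathrm{sig}}/\lmax(\mathbf{C}_{\mathrm{sig}})$, which is the step that must be read off from the cluster assumption, turns this into $\lmax\cdot(\text{noise energy})/(\text{signal energy})=\lmax(\Chat)/\SNR(\Chat)$. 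The final substitution $\lmax(\Chat)=\lmax(\mathbf{C}_{\mathrm{sig}})(1+o(1))$ uses the spike being above threshold.

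The bracket $\lambda^\star\in[\tau_{\mathrm{sp}},\lmax(\mathbf{C}_{\mathrm{sig}})]$ would follow by checking monotonicity of $\mathcal{E}'$. It is negative at $\lambda=\tau_{\mathrm{sp}}$, because noise directions still inflate the variance there. It is positive at $\lambda=\lmax$, because bias then dominates on all signal directions.

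\textbf{Main obstacle.} The hard step is the identification of the stationary point with exactly $\lmax/\SNR$. In general the optimal $\lambda$ depends on the alignment $\langle\mathbf{u}_i,\mathbf{w}^\star\rangle$, which is not observable from $\Chat$. I would try to extract from \Cref{ass:hsic_cluster} an alignment condition, for instance that $\mathbf{w}^\star$ is spread uniformly within the retained clusters. If the assumption does not supply this, the honest outcome is the formula holding only up to a constant factor, together with the bracket statement.
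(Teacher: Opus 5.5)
\textbf{There is a genuine gap, and it cannot be closed from the stated hypotheses.} Your route matches the paper's up to the point where you stop. The paper also diagonalizes $\Chat$, writes the eigenbasis bias--variance decomposition (for parameter rather than prediction risk, so without your extra factor $\lambda_i$), splits the spectrum at $\tau_{\mathrm{sp}}$, and sets $\partial_\lambda\mathcal{R}_\lambda=0$. It gets past your ``main obstacle'' only by asserting that \Cref{ass:hsic_cluster} implies $\beta_i^2=\Theta(\lambda_i)$ above $\tau_{\mathrm{sp}}$ and $\beta_i^2=o(\tau_{\mathrm{sp}})$ below it, with $\boldsymbol{\beta}=\mathbf{U}^{\top}\mathbf{w}^\star$.

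Your own derivative shows why the gap is fatal. With $\beta_i=\inner{\mathbf{u}_i}{\mathbf{w}^\star}$ and $s^2$ the \emph{response} noise variance, $\mathcal{E}'(\lambda)=2\sum_i\lambda_i^2(\lambda_i+\lambda)^{-3}\bigl(\lambda\beta_i^2-s^2/N\bigr)$. \Cref{ass:hsic_cluster,ass:noise} constrain only the design, i.e.\ the CKA structure and the noise of the OOF columns. So you can hold the design fixed, which fixes $\Chat$, $\tau_{\mathrm{sp}}$ and $\SNR(\Chat)$, and replace $\mathbf{w}^\star$ by $c\,\mathbf{w}^\star$ at fixed $s^2$. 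The minimizer then tends to $0$ as $c\to\infty$ and to $\infty$ as $c\to0$. For $\beta_i^2\equiv b^2$ it equals $s^2/(Nc^2b^2)$ exactly, whatever the spectrum.

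Hence none of the following follows from the hypotheses: the identity with a $1+o(1)$ factor, the bracket $[\tau_{\mathrm{sp}},\lmax(\mathbf{C}_{\mathrm{sig}})]$, or your fallback ``up to a constant factor''. A proof must add an explicit assumption tying $\norm{\mathbf{w}^\star}$ and $s^2$ to the spectrum. The paper's $\Theta(\lambda_i)$ claim is such an assumption in disguise, since the cluster condition says nothing about $\mathbf{w}^\star$. Even granting it, the unspecified $\Theta$-constant and the size of $s^2/N$ relative to the bulk remain free. That is why the paper's own proof ends with $\lambda^\star\propto\lmax(\Chat)/\SNR(\Chat)$ rather than a $1+o(1)$ identity, and why its interval claim is asserted without argument.

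Several steps of your sketch would also fail on their own terms.
\begin{itemize}
\item \emph{The isotropic-prior step.} The value $\lambda=\sigma^2K/(N\norm{\mathbf{w}^\star}^2)$ presumes $\mathbf{w}^\star$ is spread evenly over all $K$ eigendirections. That contradicts your preceding claim that it concentrates on the top $r$. If it does concentrate there (say $\beta_i^2=b^2$ for $i\le r$ and $0$ otherwise), stationarity forces $\lambda>r\,s^2/(N\norm{\mathbf{w}^\star}^2)$. The excess is driven by $\sum_{i>r}\lambda_i^2(\lambda_i+\lambda)^{-3}$, never by the first moment $\sum_{i>r}\lambda_i$ that defines $\SNR(\Chat)$.
\item \emph{The normalization.} Setting $\norm{\mathbf{w}^\star}^2\propto\tr\mathbf{C}_{\mathrm{sig}}/\lmax(\mathbf{C}_{\mathrm{sig}})$ is reverse-engineered from the target, not derived.
\item \emph{Two different noises.} You equate the variance-term noise (response noise) with the design noise of \Cref{ass:noise} that sets $\tau_{\mathrm{sp}}$. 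Also, a noise energy of $K\sigma^2/N$ (bulk eigenvalues of order $\sigma^2/N$) is incompatible with an edge at the unit-scale value $\sigma^2(1+\sqrt{\gamma})^2$. This scale inconsistency is inherited from the statement itself.
\item \emph{The bracket.} $\mathcal{E}'(\tau_{\mathrm{sp}})<0$ says that a positively weighted average of $\tau_{\mathrm{sp}}\beta_i^2-s^2/N$ is negative. That is a signal-strength condition, not a consequence of noise directions inflating the variance. The same holds for positivity at $\lmax(\mathbf{C}_{\mathrm{sig}})$, and the scaling example breaks each sign condition.
\item \emph{Concentration.} In the proportional regime $K/N\to\gamma>0$ that you need for Marchenko--Pastur, $\Chat$ does not match its population counterpart up to $1+o(1)$; the spreading of the bulk is exactly this failure. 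Work conditionally on the design instead.
\end{itemize}
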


Intuitively, $\lambda^{\star}$ matches regularization to the \emph{gap} between the signal and noise portions of the spectrum. The full proof, which formalizes a bias-variance optimization over the post-projection spectrum, is given in \Cref{sec:lambda_theory}. Empirically, the closed form lies within $5\%$ of the CV-optimal value while \emph{eliminating the outer CV loop}, yielding a $3\text{--}5\times$ training-time reduction (\cf the $3.2\times$ speedup measured in \Cref{tab:ablation_new}). In practice, we estimate $\sigma^{2}$ from the median of the smallest eigenvalues or via the Marchenko-Pastur equation.

\subsection{Laplace-Approximate Bayesian Blender}
\label{sec:bayes_blend}

Given $M$ fitted meta-learners $\{g_m\}_{m=1}^{M}$ with OOF predictions $\widehat{\mathbf{y}}^{(m)}$, OOF losses $\mathcal{L}(g_m)$, and loss-minimum Hessians $\mathbf{H}_m\!=\!\nabla^{2}\mathcal{L}(g_m)|_{\hat g_m}$, the Laplace approximation to the marginal likelihood under a Gaussian prior gives
\begin{equation}
\log p(\mathbf{y}\mid g_m)\;\approx\;
-\,\mathcal{L}(g_m)\;-\;\tfrac{1}{2}\log\det\mathbf{H}_m\;+\;\mathrm{const.}
\end{equation}
The posterior over meta-learners is therefore $\tilde w_m\!\propto\!\exp\!\big(-\mathcal{L}(g_m) -\tfrac{1}{2}\log\det\mathbf{H}_m\big)$, and the blended prediction is $\widehat{\mathbf{y}}\!=\!\sum_{m}\tilde w_m\widehat{\mathbf{y}}^{(m)}$. Unlike inverse-RMSE, which neglects curvature, this rule \emph{down-weights overconfident meta-learners} whose flat Hessians indicate fold-specific overfitting. Variance-reduction guarantees (\Cref{thm:blending_full}) and a suboptimality gap of $\mathcal{O}\!\big(\exp(-2\Delta\mathcal{L}/\sigma^{2})\big)$ (\Cref{prop:laplace_gap}) appear in the appendix.

\subsection{Theoretical Guarantees}
\label{sec:theory_main}

Our headline result combines redundancy reduction with PAC-Bayes analysis. The full derivation, including the spectral analysis of the CKA-pruned Gram matrix, is provided in \Cref{sec:pacbayes}.

\begin{theorem}[PAC-Bayes bound for \methodpp]
\label{thm:pacbayes}
Let $Q=\mathcal{N}(\wlam,\Sigma_Q)$ be the posterior over ridge meta-learners induced by \methodpp, with $\Sigma_Q$ given by the Laplace approximation and $P=\mathcal{N}(\mathbf{0},(\lambda^{\star})^{-1}\mathbf{I})$ a Gaussian prior tied to \eqref{eq:lambda_star}. Under \Cref{ass:hsic_cluster,ass:bounded,ass:noise}, with probability at least $1\!-\!\delta$ over the draw of $\mathcal{D}$:
\begin{equation}
\mathcal{L}(Q)\;\le\;\widehat{\mathcal{L}}(Q)
\;+\;\sqrt{\frac{\KL(Q\|P)+\log\!\tfrac{2\sqrt{N}}{\delta}}{2(N-1)}}
\;+\;\Phi(G,\mu,\varepsilon),
\label{eq:pacbayes}
\end{equation}
where $G$ is the number of clusters, and the redundancy term satisfies $\Phi(G,\mu,\varepsilon)\!=\!\mathcal{O}\!\big(\sqrt{(G+d_{\mathcal{A}}+p_\theta)/N}\big)$.
Compared to the unprojected bound (with $G$ replaced by $K$), $\Phi$ is reduced by a factor of $\sqrt{K/G}$.
\end{theorem}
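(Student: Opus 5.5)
The plan is to split the generalization gap $\mathcal{L}(Q)-\widehat{\mathcal{L}}(Q)$ into two parts and bound each separately. The first part is the gap for the ridge weights acting on a fixed feature map. The second part comes from the fact that the feature map is data-dependent: the CKA-selected set $\mathcal{S}$ and the learned gate $g_\theta$ acting on $\mathcal{A}$ are both chosen from $\mathcal{D}$.

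\textbf{Step 1 (fixed features).} Condition on the realized feature map $x\mapsto x_{\mathrm{meta}}(x)$. Under \Cref{ass:bounded} the loss lies in $[0,1]$, so we can apply Maurer's form of the PAC-Bayes bound~\cite{maurer2004pacbayes}. That bound rests on $\E_{\mathcal{D}}\,e^{2(N-1)\,\mathrm{kl}}\le 2\sqrt{N}$, and combined with Pinsker's inequality it gives exactly the square-root term in \eqref{eq:pacbayes}. The prior $P=\mathcal{N}(\mathbf{0},(\lambda^\star)^{-1}\mathbf{I})$ must not depend on the labels. The closed form $\lambda^\star$ of \Cref{prop:lambda_star} is a function of $\Chat$ alone, hence of $\Poof$ only. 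To stay rigorous we will either (a) treat $\lambda^\star$ as chosen from a finite geometric grid covering $[\tau_{\mathrm{sp}},\lmax]$, paying a $\log$ union-bound cost that we absorb into $\Phi$, or (b) compute $\lambda^\star$ on a held-out split. We will also write out the Gaussian KL explicitly,
\[
\KL(Q\|P)=\tfrac12\big(\lambda^\star\tr\Sigma_Q+\lambda^\star\|\wlam\|^2-d-\log\det(\lambda^\star\Sigma_Q)\big),
\]
where $d=|\mathcal{S}|+d_{\mathcal{A}}$. This shows the KL term is controlled by the post-projection spectrum.

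\textbf{Step 2 (redundancy term).} We now make the bound uniform over the data-dependent choices, namely the subset $\mathcal{S}$ and the gate parameters $\theta$. For this we use a Rademacher or covering argument over the composed class $\{x\mapsto \langle w,x_{\mathrm{meta}}(x)\rangle\}$. Under \Cref{ass:hsic_cluster}, the $K$ models form $G$ clusters, and members of the same cluster have predictions within $\varepsilon$ of each other in the kernel metric (with margin parameter $\mu$). The CKA projection therefore keeps at most one representative per cluster. As a result, the effective input dimension of the linear part is $G$ rather than $K$, up to an approximation error of order $\varepsilon$. For bounded linear predictors in dimension $G+d_{\mathcal{A}}$ the covering number scales as $(c/\epsilon)^{G+d_{\mathcal{A}}}$. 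The gate is a Lipschitz MLP with $p_\theta$ parameters, and a parameter-counting cover contributes a further $(c/\epsilon)^{p_\theta}$. Dudley's integral (or a direct union bound) then gives $\Phi=\mathcal{O}(\sqrt{(G+d_{\mathcal{A}}+p_\theta)/N})$, plus an $\mathcal{O}(\varepsilon)$ cluster-approximation term that we fold into the constant. This cluster-approximation step is where \Cref{lem:hsic_conc} is needed: it ensures that empirical CKA thresholds recover the true clusters with high probability. We split the failure probability as $\delta/2$ here and $\delta/2$ in Step 1, which only changes the constants inside the logarithm.

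\textbf{Step 3 (comparison with the unprojected bound).} Without the projection the same argument runs with $G$ replaced by $K$. When the dimension count is dominated by the pool size, i.e.\ $G\gg d_{\mathcal{A}}+p_\theta$ or we compare only the pool-dependent parts, the ratio of the two bounds is $\sqrt{K/G}$.

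\textbf{Main obstacle.} The hardest step is Step 2's reduction from $K$ to $G$. CKA-based selection is data-dependent, and the claim that the projected class has complexity governed by $G$ needs two things: cluster recovery, and a proof that the discarded members can be represented by the retained ones up to error $\varepsilon$ in $L_2$ of the predictions, not merely in kernel alignment. My first attempt would be to upper-bound the Rademacher complexity of the *unprojected* ridge class by that of $G$ cluster representatives plus an $\varepsilon\|w\|_1$ perturbation term. If converting kernel alignment into $L_2$ closeness does not work for Gaussian-kernel CKA, the fallback is to add a linear-CKA surrogate into \Cref{ass:hsic_cluster}.
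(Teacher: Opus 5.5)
Your skeleton matches the paper's: Maurer/McAllester plus Pinsker for the square-root term, the closed-form Gaussian KL, an additive capacity term over $d_{\mathrm{eff}}=G+d_{\mathcal A}+p_\theta$ with a parameter-counting cover for the gate (the capacity-cost paragraph of \Cref{sec:gate_theory}), and your option (b) for $\lambda^{\star}$, which is exactly the paper's own remark.

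The genuine difference is how $G$ enters $\Phi$. The paper uses the norm-based bound $\hat\Rademacher_N\le\frac{B}{\sqrt N}\sqrt{\tr(\Sighat_{\mathrm{eff}})}$ of \Cref{prop:rademacher}. Retained columns have unit second moment, so the trace is just $|\mathcal S|+d_{\mathcal A}$. The count $|\mathcal S|=G$ follows from the thresholding rule once $\mu<\tau_{\CKA}<1-\varepsilon$; if the assumption is read for the empirical $\CKA$ of \eqref{eq:cka}, no concentration step is needed. With $K$ columns instead, the same trace count gives the $\sqrt{K/G}$ on the pool-dependent part. The spectral bounds only add the factor $\sqrt{1+(G-1)\mu}$, which the $\mathcal O(\cdot)$ claim does not need. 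Nothing on this route requires discarded members to be $L_2$-representable by retained ones, so your main obstacle is avoidable.

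What the paper does not do, and you rightly insist on, is pay for the data-dependence of $\mathcal S$ and $\theta$. The cheap way is a union bound over the $\binom{K}{G}$ admissible subsets. This costs $\log\binom{K}{G}\le G\log(eK/G)=\mathcal O(G)$ for fixed $K$, so the pool-dependent ratio becomes $\sqrt{K/(G\log(eK/G))}$. Your embedding of all selections into one class of $G$ representatives would avoid that logarithm. However, it needs the CKA-to-$L_2$ conversion that \Cref{lem:block} asserts only heuristically, and that conversion is false for the Gaussian kernel. The kernel is invariant under isometries of prediction space: $\CKA(P,-P)=1$ while $\rho=-1$, and permuting the classes of vector-valued predictions leaves $\CKA=1$ although the flattened columns are far apart. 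On that route your fallback (linear alignment of the flattened columns, i.e.\ $\rho^{2}$) is therefore required, not optional.

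Several repairs are needed elsewhere in the plan.

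\textbf{Gluing Steps 1 and 2.} As written, Step 2 is a uniform bound over the full composed class, $w$ included. It already yields $\mathcal L(Q)\le\widehat{\mathcal L}(Q)+\Phi$ up to the mass of $Q$ outside the norm ball, which makes Step 1 redundant; the paper's sketch has the same feature. If you want the two terms to do distinct work, cover only $(\mathcal S,\theta)$ and put that finite cover inside the PAC-Bayes union bound. Then split with $\sqrt{a+b}\le\sqrt a+\sqrt b$, plus a discretization error controlled by $L_g$.

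\textbf{Maurer's inequality.} The correct statement is $\E_{\mathcal D}e^{N\,\mathrm{kl}}\le 2\sqrt N$. With Pinsker it gives denominator $2N$, which implies the stated $2(N-1)$ form. Your version with $2(N-1)\,\mathrm{kl}$ in the exponent is stronger than what Maurer proves, and it fails for large $N$.

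\textbf{Data-independence of the prior.} The prior must be independent of the whole sample, not just the labels. Moreover, $\Chat$ is built from the same $\Poof$, whose base models were trained on labels. So ``a function of $\Poof$ only'' does not justify anything, and one of your fixes (a) or (b) is genuinely required.

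\textbf{Boundedness of the loss.} Under a Gaussian $Q$ the squared loss is unbounded even with \Cref{ass:bounded}. You need to clip it before claiming values in $[0,1]$.

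\textbf{The Step 3 caveat.} Your caveat is necessary, not cosmetic: with $p_\theta$ up to $15{,}000$ and $G\approx 6$, the ratio of the full $\Phi$'s is close to $1$.

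\textbf{Conditioning on the feature map.} Conditioning on the realized feature map in Step 1 is legitimate only if the base predictors are independent of the meta-sample. With base models trained on $\mathcal D\setminus F_\ell$, the rows of $\Xmeta$ are not i.i.d. Neither your plan nor the paper addresses this.
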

The proof (\Cref{sec:pacbayes}) combines kernel mean-embedding characterization of CKA, a covering-number argument over the meta-learner space, and the spectral bounds from \Cref{prop:spectral_full}.

\begin{corollary}[Calibration improvement]
\label{cor:calibration}
Under the same assumptions, the expected calibration error of the Laplace blender satisfies \(\ECE(\methodpp)\le\max_{m}\ECE(g_m)\), by the convexity of the ECE functional. The bound can be tight when a single meta-learner dominates.
\end{corollary}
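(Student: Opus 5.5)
The plan is to reduce the corollary to two facts: the Laplace blender is a genuine convex combination of the $g_m$, and the calibration functional is convex along such combinations. Once both are in place, the bound follows from $\sum_m \tilde w_m a_m\le\max_m a_m$.

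\textbf{Step 1 (convex weights).} From \Cref{sec:bayes_blend}, $\tilde w_m\propto\exp\!\big(-\mathcal{L}(g_m)-\tfrac12\log\det\mathbf{H}_m\big)$. After normalisation, $\tilde w_m>0$ and $\sum_m\tilde w_m=1$, so $\widehat{\mathbf{y}}=\sum_m\tilde w_m\widehat{\mathbf{y}}^{(m)}$ lies in the convex hull of the meta-learners' outputs, and in particular in $\Delta^{C-1}$ when each $g_m$ outputs probabilities. For the inequality I will treat the weights as fixed: they are computed from OOF quantities and then frozen before ECE is evaluated on the test distribution. Any residual dependence is absorbed by conditioning on the OOF data.

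\textbf{Step 2 (convexity of the calibration functional).} This is the main obstacle. The textbook ECE, $\E\big|\E[Y\mid f(X)]-f(X)\big|$, conditions on the predictor's own output. Its conditioning $\sigma$-algebra therefore changes with $f$, and it is not convex in general.

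I will handle this by working with the binned ECE used in practice, with bins defined by a partition $\{B_j\}$ that is fixed across the meta-learners being compared. Examples are a partition on a shared confidence proxy, or the partition induced by the feature/OOF space. Write $p_j=\Pr(X\in B_j)$. The calibration error then takes the form
\[
\ECE(f)=\sum_j p_j\,\big\|\E[\mathbf{Y}-f(X)\mid X\in B_j]\big\|_1 .
\]
For each $j$, the map $f\mapsto \E[\mathbf{Y}-f(X)\mid B_j]$ is affine in $f$, and composing an affine map with a norm gives a convex function. A nonnegative sum of convex functions is convex, so $\ECE(\cdot)$ is convex. Jensen's inequality, or equivalently the triangle inequality applied bin by bin, then gives
\[
\ECE\Big(\sum_m\tilde w_m g_m\Big)\le\sum_m\tilde w_m\,\ECE(g_m).
\]
I will state explicitly that this fixed-partition convention is the sense in which ``the ECE functional'' is meant in the statement. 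I will also remark that under output-adaptive binning only this upper-bound surrogate is controlled.

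\textbf{Step 3 (conclusion and tightness).} Since the $\tilde w_m$ are convex weights, $\sum_m\tilde w_m\ECE(g_m)\le\max_m\ECE(g_m)$, which proves the claim.

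For tightness, suppose one meta-learner $m^\star$ has a dominant evidence gap, $\Delta\mathcal{L}\to\infty$. Then $\tilde w_{m^\star}\to1$ at the rate $\mathcal{O}(\exp(-2\Delta\mathcal{L}/\sigma^2))$ given by \Cref{prop:laplace_gap}. The blend converges to $g_{m^\star}$, and both sides coincide whenever $m^\star$ is also the least-calibrated learner, for example when $M=1$ or all the $g_m$ coincide. Equality can also hold in the Jensen step when every $\E[\mathbf{Y}-g_m\mid B_j]$ has the same sign pattern across $m$, so the chain of inequalities cannot be improved in general.
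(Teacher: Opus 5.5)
Your proof is correct and has the same skeleton as the paper's. The paper also writes the blend as the convex combination $\sum_m\tilde w_m g_m$, applies Jensen's inequality to the per-bin residuals to get $\ECE(\sum_m\tilde w_m g_m)\le\sum_m\tilde w_m\ECE(g_m)$, and then bounds the weighted average by $\max_m\ECE(g_m)$.

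The real difference is your Step~2 restriction to a partition $\{B_j\}$ shared by all meta-learners and the blend, and it is not cosmetic. The paper simply asserts that ECE is ``a convex functional of the reliability diagram.'' When bins are defined by the predictor's own output, the inequality is false. Take $X$ uniform on $\{a,b\}$, $Y=\mathbf{1}\{X=a\}$, $g_1$ the Bayes predictor ($g_1(a)=1$, $g_1(b)=0$), and $g_2\equiv\tfrac12$. Both are perfectly calibrated. Yet $w g_1+(1-w)g_2$ outputs $(1+w)/2$ at $a$ and $(1-w)/2$ at $b$, so its output-conditioned ECE is $(1-w)/2>0$ for every $w\in(0,1)$. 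The same example breaks the top-label 15-bin ECE the paper reports: both blended points fall in one confidence bin with accuracy $1$ and confidence $(1+w)/2$. Laplace weights are strictly positive, so the corollary fails for that definition. Your fixed-partition convention is therefore exactly the hypothesis the paper's Jensen step silently needs. It buys you a statement that is actually true, at the cost of no longer speaking directly about the confidence-binned ECE used in the experiments.

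Two minor points:
\begin{itemize}
\item In Step~3, \Cref{prop:laplace_gap} bounds a variance gap, not $1-\tilde w_{m^\star}$. Read the rate directly off \eqref{eq:tildew}, which gives $1-\tilde w_{m^\star}\lesssim\sum_{m\ne m^\star}e^{-(\mathcal{L}(g_m)-\mathcal{L}(g_{m^\star}))}(\det\mathbf{H}_{m^\star}/\det\mathbf{H}_m)^{1/2}$.
\item If you keep the remark that the fixed-partition quantity is an upper-bound surrogate, note that by conditional Jensen this requires the partition to refine the level sets of the predictor (e.g.\ the partition by $X$ itself). A coarse shared partition can fall below the output-conditioned ECE.
\end{itemize}
Your tightness discussion is more accurate than the paper's claim in \Cref{prop:ece_bound} that equality holds iff $\tilde w$ concentrates on a single learner. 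As you say, the dominant learner must also be the worst-calibrated one, or all $g_m$ must coincide.
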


\section{Experimental Setup}
\label{sec:experiments}

\paragraph{Benchmarks.}
We evaluate on six benchmarks spanning classification, robustness, dense prediction, long-tail recognition, and domain shift: \textbf{ImageNet-1K}~\cite{deng2009imagenet} (1.28M/50K, 1000 classes, top-1); \textbf{ImageNet-C}~\cite{hendrycks2019imagenetc} (19 corruptions, $\times$ 5 severities, mCE);
\textbf{ADE20K}~\cite{zhou2017ade20k} (150 classes, mIoU); \textbf{COCO}~\cite{lin2014coco} (object detection, AP@[0.5:0.95]); \textbf{iNaturalist-2021}~\cite{vanhorn2018inaturalist} (2.7M, 10K-class long-tail, head/mid/tail breakdown); \textbf{DomainNet-126}~\cite{peng2019domainnet} (4-domain transfer).

\paragraph{Backbone pool.}
For ImageNet-1K we assemble a heterogeneous pool of $K\!=\!14$ public backbones spanning four families: convolutional (ResNet-50/101/152~\cite{he2016resnet},
EfficientNet-B0/B3/B7~\cite{tan2019efficientnet}, RegNet-Y-040~\cite{radosavovic2020regnet}); transformer (ViT-B/16, ViT-L/16~\cite{dosovitskiy2021vit},
DeiT-S~\cite{touvron2021deit}); hierarchical transformer (Swin-T, Swin-B~\cite{liu2021swin}); modernized convolutional (ConvNeXt-T, ConvNeXt-B~\cite{liu2022convnext}). Dense-prediction benchmarks share the same backbone family with task-specific heads (UperNet~\cite{xiao2018upernet} for ADE20K;
Mask R-CNN~\cite{he2017maskrcnn} for COCO).

\paragraph{Baselines.}
We compare \methodpp against twelve baselines: best single model, uniform averaging, performance-weighted averaging, Ridge stacking, greedy hill climbing~\cite{caruana2004ensemble}, deep ensembles~\cite{lakshminarayanan2017deep}, model soups~\cite{wortsman2022modelsoup}, SWAG~\cite{maddox2019swag},
snapshot ensembles~\cite{huang2017snapshot}, MC dropout~\cite{gal2016dropout}, AutoGluon's stacker~\cite{erickson2020autogluon}, and our own \method prototype.

\paragraph{Metrics and protocol.}
We report task accuracy (top-1 / mIoU / AP), calibration (15-bin ECE~\cite{guo2017calibration}, NLL), robustness (mCE / AUROC for OOD on ImageNet-O~\cite{hendrycks2021imageneto}), and efficiency (test-time FLOPs, retained model count, and meta-learner wall-time). All numbers come
from held-out splits using out-of-fold predictions, with $95\%$ bootstrap confidence intervals (5{,}000 resamples) and Bonferroni-corrected paired $t$-tests.

\section{Results}
\label{sec:results}

We evaluate \methodpp across six vision benchmarks spanning clean classification, corruption robustness, dense prediction, long-tail recognition, and domain shift. In every setting, we report the same pool of heterogeneous base models (14 for ImageNet-scale tasks; see \Cref{sec:experiments}) and the same set of baselines so that differences reflect the aggregation strategy alone and not privileged access to stronger backbones. Unless stated otherwise, all numbers are averaged over three independent meta-training splits, and standard errors are below $\pm0.1\%$ on top-1.

\begin{table}[ht]
\centering
\scriptsize
\caption{\textbf{ImageNet-1K classification.} Top-1 accuracy, Expected Calibration Error (ECE), Negative Log-Likelihood (NLL), number of retained models, and test FLOPs relative to the full 14-model ensemble. $^\dagger$~The method requires post-hoc temperature scaling to attain the reported ECE; without scaling, ECE $>0.04$. \methodpp achieves the best top-1 \emph{and} the best calibration without post-hoc correction, while retaining fewer models than any multi-member baseline. Best values in \textbf{bold}; second-best \underline{underlined}.}
\label{tab:imagenet1k}
\setlength{\tabcolsep}{6pt}
\begin{tabular}{lccccc}
\toprule
\textbf{Method}
  & \textbf{Top-1 (\%)}$\uparrow$
  & \textbf{ECE}$\downarrow$
  & \textbf{NLL}$\downarrow$
  & \textbf{Models}
  & \textbf{FLOPs (rel.)}$\downarrow$ \\
\midrule
Best Single (ConvNeXt-B)
  & $83.1\!\pm\!0.1$ & 0.042 & 0.681 & 1   & $0.07\times$ \\
Simple Averaging
  & $83.9\!\pm\!0.1$ & 0.038 & 0.644 & 14  & $1.00\times$ \\
Performance-Wtd.\ Avg.\
  & $84.0\!\pm\!0.1$ & 0.037 & 0.638 & 14  & $1.00\times$ \\
Ridge Stacking
  & $84.2\!\pm\!0.1$ & 0.035 & 0.629 & 14  & $1.00\times$ \\
Greedy Selection~\cite{caruana2004ensemble}
  & $84.1\!\pm\!0.1$ & 0.036 & 0.632 & 9   & $0.64\times$ \\
Deep Ensembles~\cite{lakshminarayanan2017deep}
  & $83.6\!\pm\!0.1$ & 0.038 & 0.652 & $5^{\dagger}$ & $0.36\times$ \\
Model Soups~\cite{wortsman2022modelsoup}
  & $83.8\!\pm\!0.1$ & 0.041 & 0.658 & 1   & $0.07\times$ \\
SWAG~\cite{maddox2019swag}
  & $83.7\!\pm\!0.1$ & 0.029 & 0.621 & 1   & $0.07\times$ \\
AutoGluon Stacker~\cite{erickson2020autogluon}
  & $84.3\!\pm\!0.1$ & 0.033 & 0.624 & 14  & $1.00\times$ \\
\midrule\method~(our prototype)
  & \underline{$84.5\!\pm\!0.1$} & 0.026 & 0.611 & 9 & $0.64\times$ \\
\methodpp (ours)
  & $\mathbf{85.4\!\pm\!0.1}$ & \textbf{0.018} & \textbf{0.583}
  & \textbf{6} & $\mathbf{0.59\times}$ \\
\bottomrule
\end{tabular}
\end{table}

\subsection{Robustness: ImageNet-C}
\label{sec:inc}

\paragraph{Setup:}
ImageNet-C~\cite{hendrycks2019imagenetc} applies 19 synthetic corruptions at five severity levels, grouped into four families: Noise (Gaussian, Shot, Impulse, Speckle); Blur (Defocus, Glass, Motion, Zoom); Weather (Snow, Frost, Fog, Brightness); and Digital (Contrast, Elastic, Pixelate, JPEG, Saturate, and Spatter). We report Mean Corruption Error (mCE), where lower is better. The $\Delta$~vs.\ clean column measures robustness degradation relative to the same method's clean-validation accuracy, isolating the quality of uncertainty propagation under shift. \Cref{tab:imagenetc} shows that \methodpp reduces mCE by $\mathbf{4.2}$ points over deep ensembles and $\mathbf{2.6}$ points over our \method prototype. The gains are most pronounced under high-frequency corruptions: Noise family ($-7.4$ vs.\ deep ensembles) and Blur family ($-4.4$), both of which share failure signatures with over-represented CNN backbones in the pool. CKA-based pruning specifically removes members that are non-linearly dependent on each other under corrupted inputs, redundancy that Pearson correlation cannot detect, leaving a pool whose failure modes are genuinely complementary. Weather and digital corruptions show smaller but consistent improvements ($-4.8$ and $-4.2$ respectively), suggesting that the benefit is not confined to any single corruption family. The $\Delta$~vs.\ clean gap narrows from $42.1$ (deep ensembles) to $\mathbf{35.7}$ (\methodpp), a $6.4$-point improvement that indicates \methodpp degrades more gracefully as input statistics shift.

\begin{table}[ht]
\centering
\scriptsize
\caption{\textbf{ImageNet-C robustness.} Mean Corruption Error (mCE; lower is better) and per-family breakdown. Each family column averages over 4-5 corruption types at all five severity levels. $\Delta$~vs.\ clean measures the absolute gap between the method's clean top-1 error and its corrupted mCE, isolating robustness degradation independent of clean accuracy. \methodpp is the only method to improve all four corruption families simultaneously.}
\label{tab:imagenetc}
\setlength{\tabcolsep}{5pt}
\begin{tabular}{lcccccc}
\toprule
\textbf{Method}
  & \textbf{mCE}$\downarrow$
  & \textbf{Noise}$\downarrow$
  & \textbf{Blur}$\downarrow$
  & \textbf{Weather}$\downarrow$
  & \textbf{Digital}$\downarrow$
  & \textbf{$\Delta$ vs.\ clean}$\downarrow$ \\
\midrule
Best Single
  & 64.2 & 73.1 & 59.8 & 65.4 & 58.5 & 47.1 \\
Deep Ensembles~\cite{lakshminarayanan2017deep}
  & 58.7 & 65.2 & 55.3 & 60.9 & 53.4 & 42.1 \\
SWAG~\cite{maddox2019swag}
  & 57.1 & 62.4 & 54.2 & 59.3 & 52.5 & 40.8 \\
\midrule
\method~(our prototype)
  & 56.1 & 61.2 & 53.4 & 58.6 & 51.2 & 38.4 \\
\methodpp
  & \textbf{53.5} & \textbf{57.8} & \textbf{50.9}
  & \textbf{56.1} & \textbf{49.2} & \textbf{35.7} \\
\bottomrule
\end{tabular}
\end{table}

\subsection{Dense Prediction: ADE20K and COCO}
\label{sec:dense}

\paragraph{Dense prediction setup and results.}
Dense prediction requires aggregating per-pixel or per-region distributions; we handle this by flattening each spatial prediction tensor into a pseudo-classification matrix (\Cref{sec:prelim}), making the kernel-based redundancy filtering and stacking steps architecturally unaware of the task head. On ADE20K (8 UperNet members: Swin-T/B/L, ConvNeXt-T/S/B, and ViT Adapter-B/L), \methodpp reaches $\mathbf{53.0}$ mIoU, $+0.9$ over our \method prototype and $+1.4$ over ridge stacking, retaining $5$ of $8$ models ($0.63\times$ FLOPs); the gain traces primarily to removing two UperNet variants with $\CKA\!>\!0.88$ despite Pearson $\rho\!\approx\!0.45$. On COCO (7 Mask R-CNN members: ResNet-50/101, ResNeXt-101, Swin-T/B/L, ConvNeXt-B), the margin widens to $+1.3$ AP over \method with only $4$ of the $7$ models retained ($0.58\times$ FLOPs), consistent with bounding-box regression amplifying prediction-space redundancy across the same-family backbones.

\begin{table}[ht]
\centering
\scriptsize
\caption{\textbf{Dense prediction.} ADE20K semantic segmentation (mIoU; UperNet backbones) and COCO instance detection (AP$^{\mathrm{box}}$; Mask R-CNN backbones). FLOPs are reported relative to the full-pool ensemble for each task. \methodpp achieves the best task metric on both benchmarks while retaining fewer models than any ensemble competitor, demonstrating that the flattened-OOF approximation transfers cleanly from classification to structured output spaces.}
\label{tab:dense}
\setlength{\tabcolsep}{6pt}
\begin{tabular}{lccc|ccc}
\toprule
& \multicolumn{3}{c|}{\textbf{ADE20K (mIoU)}}
& \multicolumn{3}{c}{\textbf{COCO (AP)}} \\
\cmidrule(lr){2-4}\cmidrule(lr){5-7}
\textbf{Method}
  & \textbf{Val}$\uparrow$
  & \textbf{Models}
  & \textbf{FLOPs}
  & \textbf{Val}$\uparrow$
  & \textbf{Models}
  & \textbf{FLOPs} \\
\midrule
Best Single
  & 49.3 & 1 & $0.12\times$ & 45.2 & 1 & $0.14\times$ \\
Simple Averaging
  & 51.2 & 8 & $1.00\times$ & 47.8 & 7 & $1.00\times$ \\
Ridge Stacking
  & 51.6 & 8 & $1.00\times$ & 48.1 & 7 & $1.00\times$ \\
Greedy Selection
  & 51.5 & 6 & $0.76\times$ & 48.0 & 5 & $0.72\times$ \\
\midrule
\method~(our prototype)
  & 52.1 & 6 & $0.76\times$ & 48.4 & 5 & $0.72\times$ \\
\methodpp
  & \textbf{53.0} & \textbf{5} & $\mathbf{0.63\times}$
  & \textbf{49.7} & \textbf{4} & $\mathbf{0.58\times}$ \\
\bottomrule
\end{tabular}
\end{table}

\subsection{Long-Tail Recognition: iNaturalist-2021}
\label{sec:inat}

\paragraph{Setup and Findings.}
The iNaturalist-2021 benchmark (2.7M images, 10{,}000 classes) exhibits a severe Zipfian imbalance, with Head ($>\!100$), Mid ($20$--$100$), and Tail ($<\!20$) splits used to evaluate per-group top-1 accuracy. This setup tests whether kernel-based pruning removes rare-class specialists. As shown in \Cref{tab:inat}, \methodpp achieves $\mathbf{74.2\%}$ overall top-1, outperforming our \method prototype by $+1.1\%$, with gains concentrated in the Tail split ($\mathbf{+2.3\%}$ vs.\ $+0.6\%$ on Head). This reflects a key advantage of non-linear redundancy detection: certain fine-grained specialists exhibit moderate Pearson correlation ($\rho\!\approx\!0.6$) with generalist models yet low dependence under $\CKA\!<\!0.4$, indicating complementary signal on rare classes. While Pearson-based pruning removes these models as redundant, CKA preserves them, yielding significant Tail improvements without altering the training objective.

\begin{table}[ht]
\centering
\scriptsize
\caption{\textbf{iNaturalist-2021 long-tail recognition.} Top-1 accuracy (overall and by frequency split). Head $=$ classes with $>\!100$ training samples; Mid $=$ $20$--$100$ samples; Tail $=$ $<\!20$ samples. The largest gains for \methodpp are on the Tail split ($+2.3\%$ over \method), confirming that CKA-based pruning preserves rare-class specialists that are globally correlated but locally complementary.}
\label{tab:inat}
\setlength{\tabcolsep}{8pt}
\begin{tabular}{lcccc}
\toprule
\textbf{Method}
  & \textbf{Top-1 (All)}$\uparrow$
  & \textbf{Head}$\uparrow$
  & \textbf{Mid}$\uparrow$
  & \textbf{Tail}$\uparrow$ \\
\midrule
Best Single        & 70.1 & 78.4 & 68.2 & 57.1 \\
Simple Averaging   & 71.8 & 79.3 & 70.1 & 59.8 \\
Ridge Stacking     & 72.3 & 79.6 & 70.5 & 60.5 \\
\midrule
\method~(our prototype) & 73.1 & 79.9 & 71.4 & 62.1 \\
\methodpp
  & \textbf{74.2} & \textbf{80.5} & \textbf{72.6} & \textbf{64.4} \\
\bottomrule
\end{tabular}
\end{table}

\subsection{Domain Shift: DomainNet-126}
\label{sec:domainnet}

\paragraph{Setup and Consistency of gains.}
DomainNet-126~\cite{peng2019domainnet} comprises 126 classes across six visual domains; we follow the standard protocol of training all base models on \emph{Real} and evaluating zero-shot transfer to \emph{Clipart}, \emph{Painting}, and \emph{Sketch}, which introduce progressively larger distribution shifts. No domain adaptation or fine-tuning is used, with only \methodpp's meta-layer being domain-aware. As shown in \Cref{tab:domainnet}, \methodpp improves average transfer accuracy by $\mathbf{+2.1\%}$ over deep ensembles, with consistent gains across all domains ($+2.0\%$ Clipart, $+2.2\%$ Painting, $+2.3\%$ Sketch). In contrast, deep ensembles exhibit a notable drop on Sketch (57.1 vs.\ 65.8 on Clipart), a gap only partially mitigated by our \method prototype and Model Soups. \methodpp maintains higher absolute performance while narrowing this gap, suggesting improved robustness under distribution shift. This consistency aligns with the Bayesian blender down-weighting meta-learners with high fold-specific Hessian flatness, a signal correlated with poor generalization on out-of-distribution validation folds.

\begin{table}[ht]
\centering
\scriptsize
\caption{\textbf{DomainNet-126 domain-shift evaluation.} Base models trained on \emph{Real} photographs; evaluated on three progressively shifted target domains. Avg reports the mean across all three targets. \methodpp is the only method to improve monotonically across all three shift directions; deep ensembles and Model Soups degrade on Sketch relative to their Clipart performance.}
\label{tab:domainnet}
\setlength{\tabcolsep}{8pt}
\begin{tabular}{lcccc}
\toprule
\textbf{Method}
  & \textbf{Avg}$\uparrow$
  & \textbf{Clipart}$\uparrow$
  & \textbf{Painting}$\uparrow$
  & \textbf{Sketch}$\uparrow$ \\
\midrule
Best Single        & 58.4 & 62.1 & 59.8 & 53.2 \\
Deep Ensembles     & 61.9 & 65.8 & 62.7 & 57.1 \\
Model Soups        & 62.3 & 66.2 & 63.1 & 57.5 \\
\midrule
\method~(our prototype) & 62.8 & 66.7 & 63.5 & 58.1 \\
\methodpp
  & \textbf{64.0} & \textbf{67.8} & \textbf{64.9} & \textbf{59.4} \\
\bottomrule
\end{tabular}
\end{table}

\subsection{Component Ablation}
\label{sec:ablation}

\paragraph{Design.}
We conduct a cumulative ablation on ImageNet-1K, starting from our \method prototype and adding one \methodpp component per row (cf. \Cref{tab:ablation_new}). This isolates each component's marginal effect. We report top-1 accuracy, ECE, CKA Gram condition number $\kap$, effective ensemble size $K_{\mathrm{eff}}$, and meta-training time (A100, seconds). \textit{S1: CKA projection} replaces Pearson filtering, yielding +0.5\% top-1 and a 28\% drop in $\kap$ (392$\to$281), indicating improved capture of non-linear redundancy. Overhead is minimal (+6\,s via Nystr\"{o}m).

\paragraph{Component S2: Differentiable feature gate.}
Adding the end-to-end learned gate recovers a further $+0.2\%$ and reduces ECE by $0.002$. The gate identifies 2--3 prediction dimensions per meta-training fold that are high-variance but low-signal; suppressing them tightens the effective condition number without any additional pruning. 

\paragraph{Component S3: Spectrum-adaptive $\lambda^{\star}$.}
The Marchenko--Pastur closed-form regularizer is \emph{accuracy-neutral} ($-0.1\%$, well within noise) but delivers the largest single efficiency gain: meta-training time drops from $625$\,s to $\mathbf{194}$\,s, a $3.2\times$ speedup over cross-validated $\lambda$, because the entire CV loop is eliminated. This component is therefore the recommended default for large-pool deployments where meta-training latency is a constraint.

\paragraph{Component S4: Bayesian blender.}
The Laplace approximation to the meta-learner posterior provides $+0.3\%$ accuracy and drives ECE from $0.023$ to $\mathbf{0.018}$, the single largest calibration gain of any component. It also triggers one additional model removal ($K_{\mathrm{eff}}: 7\!\to\!6$) through the combined effect of the learned gate and Bayesian weighting, which together assign near-zero posterior mass to a meta-learner whose Hessian is nearly singular on two of five folds. Total meta-training time rises only marginally to $201$\,s, since the Laplace approximation is computed analytically from the already-factored Gram matrix.

\begin{table}[ht]
\centering
\scriptsize
\caption{\textbf{Cumulative component ablation} on ImageNet-1K. Each row adds exactly one component to the configuration above it. $\kap$ denotes the CKA Gram condition number (lower is better); $K_{\mathrm{eff}}$ is the number of retained models after gating; Train (s) is meta-training wall-clock time on one A100 GPU. S3 provides the largest training-time reduction ($3.2\times$); S4 provides the largest calibration gain (ECE $0.023\!\to\!0.018$).}
\label{tab:ablation_new}
\setlength{\tabcolsep}{0.6pt}
\begin{tabular}{lccccc}
\toprule
\textbf{Configuration}
  & \textbf{Top-1}$\uparrow$
  & \textbf{ECE}$\downarrow$
  & $\boldsymbol{\kap}$$\downarrow$
  & $\boldsymbol{K_{\mathrm{eff}}}$$\downarrow$
  & \textbf{Train (s)}$\downarrow$ \\
\midrule
\method (Pearson + hand-crafted + CV-$\lambda$ + inv-RMSE)
  & 84.5 & 0.026 & 392 & 9 & 612 \\
\hspace{1em}+ CKA projection (S1)
  & 85.0 & 0.024 & 281 & 7 & 618 \\
\hspace{2em}+ Diff.\ feature gate (S2)
  & 85.2 & 0.022 & 281 & 7 & 625 \\
\hspace{3em}+ Spectrum-adaptive $\lambda^{\star}$ (S3)
  & 85.1 & 0.023 & 281 & 7 & \textbf{194} \\
\hspace{4em}+ Bayesian blender (S4)
  & \textbf{85.4} & \textbf{0.018} & \textbf{218} & \textbf{6} & 201 \\
\bottomrule
\end{tabular}
\end{table}

\subsection{Efficiency, OOD Detection, and Scaling}
\label{sec:efficiency_ood}

\paragraph{Pareto efficiency.}
The accuracy--FLOP Pareto frontier places \methodpp strictly above and to the left of all baselines: no competitor achieves $85.4\%$ top-1 at any FLOPs budget, and no competitor achieves $\leq 0.59\times$ FLOPs at any accuracy above $84.5\%$. Test-time throughput on a single A100 GPU rises from $312$\,img/s (full 14-model ensemble) to $528$\,img/s (\methodpp, 6 models), a $1.69\times$ speedup with no accuracy loss. This throughput is achieved because CKA pruning preferentially removes larger, non-linearly redundant models, leaving a set of smaller, more efficient backbones.

\paragraph{OOD detection.}
\Cref{sec:efficiency_ood} reports OOD detection on ImageNet-O~\cite{hendrycks2021imageneto}, a curated set of natural images from ImageNet classes deliberately excluded from ImageNet-1K training, making it a stringent test of open-world calibration. \methodpp's Bayesian blender produces predictive entropy scores that outperform deep ensembles by $+3.1$ AUROC and match SWAG ($83.9$ vs.\ $\mathbf{84.5}$) without any stochastic forward passes, deep ensembles and SWAG require $5$--$30$ forward passes per sample, whereas \methodpp requires exactly $K_{\mathrm{eff}} = 6$. The FPR@95 reduction from $62.1$ (deep ensembles) to $\mathbf{55.9}$ represents a practically meaningful decrease in the rate of confidently wrong predictions on unknown-class inputs.

\paragraph{Scaling behavior.}
We conduct an additional scaling experiment by varying the pool size from $K = 4$ to $K = 20$ on ImageNet-1K. \methodpp's CKA pruning consistently retains $40$--$50\%$ of submitted models ($\approx K/2$), while accuracy and ECE improve monotonically up to $K = 14$ before plateauing. Beyond $K = 14$, newly added models are either pruned by CKA or assigned negligible Bayesian weight, indicating that the pipeline self-regulates against over-inclusion without manual tuning of pool-size hyperparameters.

\section{Conclusion}
\label{sec:conclusion}

\methodpp replaces every heuristic in the stacking pipeline with a principled alternative: CKA-based pruning over Pearson filtering, differentiable feature gating over hand-crafted features, Marchenko-Pastur spectrum-adaptive $\lambda^{\star}$ over cross-validated ridge, and Laplace blending over inverse-RMSE weighting. The result improves top-1, ECE, NLL, mCE, mIoU, AP, and AUROC across six vision benchmarks while retaining fewer models than any competitive ensemble, backed by the first PAC-Bayes bound that formally couples the KL penalty to the Gram spectrum of the pruned prediction matrix. We believe this reframes ensemble meta-learning as a \emph{conditioning problem}, the central challenge is not the aggregation functional but ensuring a well-conditioned input prediction matrix. Natural extensions include weight-space integration, online adaptation under distribution shift, and foundation-model-scale stacking. All code and OOF matrices are released for reproducibility.

\section*{Impact Statement}

\begin{center}
\colorbox{blue!10}{%
\begin{minipage}{0.92\textwidth}
\vspace{6pt}
This work advances the efficiency and calibration of ensemble meta-learning for vision systems. CoRE-STACK\textsuperscript{*} replaces four heuristics in deep stacked generalization with principled alternatives, CKA-based redundancy pruning, a differentiable meta-feature gate, a closed-form spectrum-adaptive Ridge penalty, and a Laplace-approximate Bayesian blender, thereby improving accuracy, calibration, and robustness while reducing retained models by 35--57\% and inference FLOPs by up to 41\%. These benefits have potential value for resource-constrained deployment, uncertainty-aware perception, and reliable decision-making in safety-critical domains such as autonomous driving, medical imaging, and robotics, making robust vision models more accessible to practitioners with limited hardware. We do not anticipate negative societal impacts beyond those general to vision-language research. As with any perception system, risks include dataset bias, distribution shift, and misuse in surveillance or autonomous decision-making. We encourage responsible deployment, transparent evaluation, and continued scrutiny of downstream applications. Overall, we view this work as a step toward more trustworthy and efficient machine learning systems.
\vspace{6pt}
\end{minipage}%
}
\end{center}

\bibliography{main}
\bibliographystyle{plainnat}
\tableofcontents

\appendix

\section*{Appendix}
\addcontentsline{toc}{section}{Appendix}

\section{Theoretical Justification and Theorems}
\label{sec:supp_overview}

This appendix provides the theoretical foundations omitted from the main paper. \Cref{sec:notation} fixes notation and standing assumptions, including the cluster redundancy assumption that generalizes the Pearson-clustered redundancy assumption of our \method prototype. \Cref{sec:cka_theory} establishes that CKA-based redundancy projection \emph{strictly dominates} Pearson-based projection (extends \Cref{prop:hsic_vs_corr}). \Cref{sec:spectral} proves the full spectral preconditioning result.
\Cref{sec:lambda_theory} derives the closed-form spectrum-adaptive ridge penalty $\lambda^{\star}$ via a Marchenko-Pastur signal-noise decomposition. \Cref{sec:stability} provides the Lipschitz stability analysis. \Cref{sec:gate_theory} analyses the differentiable gate, showing universal approximation of the oracle conditional expectation. \Cref{sec:pacbayes} proves \Cref{thm:pacbayes}, the first PAC-Bayes bound for stacking that couples redundancy and capacity. \Cref{sec:bayes_blend_supp} derives the Laplace-approximate Bayesian blender. \Cref{sec:oof_leakage} proves leakage-freeness. \Cref{sec:calibration} establishes the calibration corollary. \Cref{sec:complexity} summarizes complexity bounds. \Cref{sec:summary} collects all guarantees in one table.

\section{Notation and Standing Assumptions}
\label{sec:notation}

\paragraph{Data and models.}
$\mathcal{D}=\{(x_i,y_i)\}_{i=1}^{N}\!\sim\!\mathcal{P}^{N}$ is i.i.d.\ on $\mathcal{X}\!\times\!\mathcal{Y}$, with $\mathcal{Y}\!\subseteq\!\R$
(regression), $[C]$ (classification), or $\R^{H\times W\times C}$ (dense prediction). $\{f_k\}_{k=1}^{K}$ is a heterogeneous predictor pool.

\paragraph{OOF design matrix.}
Fix a stratified $L$-fold partition $\{F_\ell\}_{\ell=1}^{L}$. Each $f_k$ is trained on $\mathcal{D}\setminus F_\ell$ and evaluated on $F_\ell$, giving
the leakage-free matrix \(\Poof\!\in\!\R^{N\times K}\) with \([\Poof]_{ik}=\hat p^{(k)}(x_i;\,\mathcal{D}\setminus F_{\ell(i)})\), where $\ell(i)$ is the fold containing $i$. For $C$-class or dense tasks, we use the vectorized $\mathbf{p}_k$ of \Cref{sec:prelim}.

\paragraph{Normalization.}
Columns of $\Poof$ are mean-centered and norm-scaled to $\norm{\mathbf{p}_k}_{2}=\sqrt{N}$. The normalized Gram is $\mathbf{C}=\tfrac{1}{N}\Poof^{\!\top}\Poof$ with $C_{kk}=1$. Define $\kap(\mathbf{C})\!=\!\lmax(\mathbf{C})/\lmin(\mathbf{C})$ where $\lmin$ excludes eigenvalues below numerical tolerance $\epsilon_{0}$.

\paragraph{Kernel dependence measure.}
We use Centered Kernel Alignment (CKA)~\cite{kornblith2019cka}, the normalized form of HSIC~\cite{gretton2005hsic}. For each model $k$, $\mathcal{H}_k$ is an RKHS with a universal characteristic kernel $\kappa_k$ (Gaussian, median bandwidth). The empirical HSIC is \eqref{eq:hsic}; the normalized CKA measure is \eqref{eq:cka}. Throughout this appendix, we use $\CKA$ to denote the normalized dependence measure and $\widehat{\HSIC}$ for the unnormalized statistic.

\begin{assumption}[CKA-cluster redundancy]
\label{ass:hsic_cluster}
The $K$ predictors are partitioned into $G$ disjoint \emph{CKA clusters} $\{\mathcal{K}_g\}_{g=1}^{G}$ with $s_g=|\mathcal{K}_g|$, $\sum_{g}s_g=K$, $s_{\max}=\max_g s_g$. Within cluster $g$, $\CKA_{k,k'}\ge 1-\varepsilon$. Across clusters, $\CKA_{k,k'}\le\mu<\frac{1}{G-1}$ for $k\!\in\!\mathcal{K}_g$,
$k'\!\in\!\mathcal{K}_h$, $g\!\ne\!h$. 
\end{assumption}

\begin{remark}[\method prototype as a special case]
\label{rem:special_case}
For jointly Gaussian predictions with a Gaussian kernel, population CKA is a strictly monotone transformation of $|\rho|$~\cite{gretton2005hsic,kornblith2019cka}, so \Cref{ass:hsic_cluster} reduces to the Pearson-cluster assumption of our \method prototype. The CKA version is strictly more general (\Cref{prop:hsic_dominance_supp}).
\end{remark}

\begin{assumption}[Bounded meta-features]
\label{ass:bounded}
$\norm{x_{\mathrm{meta},i}}_{2}\le B_x$ for all $i$, $|y_i|\le B_y$. The gate $g_\theta$ has parameters $\norm{\theta}_{2}\le R_\theta$ and is $L_g$-Lipschitz in its input.
\end{assumption}

\begin{assumption}[Noise model]
\label{ass:noise}
The OOF residual $\boldsymbol{\xi}_i=\mathbf{p}_i-\E[\mathbf{p}_i\mid x_i]$ satisfies $\E\boldsymbol{\xi}_i=\mathbf{0}$ with isotropic covariance
$\sigma^{2}\mathbf{I}_K$ and sub-Gaussian tails. This is standard for Marchenko-Pastur-type spectral characterizations~\cite{wainwright2019highdim,vershynin2018random}.
\end{assumption}

\section{CKA-Based Redundancy Projection}
\label{sec:cka_theory} \label{sec:in1k}

\subsection{CKA strictly dominates Pearson correlation}

\begin{proposition}[CKA dominance]
\label{prop:hsic_dominance_supp}
Let $(P,Q)$ have finite variance. \textup{(i)}~If $(P,Q)$ are jointly Gaussian, then $\CKA(P,Q)=0\!\iff\!\rho(P,Q)=0$. \textup{(ii)}~If $\kappa_k,\kappa_{k'}$ are universal characteristic kernels, $\CKA(P,Q)=0\!\iff\!P\!\perp\!Q$. \textup{(iii)}~There exist $(P,Q)$ with $\rho(P,Q)=0$ and $\CKA(P,Q)>0$ (\eg, $Q=P^{2}$, $P\!\sim\!\mathcal{N}(0,1)$).
\end{proposition}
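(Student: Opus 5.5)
The plan is to work at the population level. For the kernels $\kappa_k,\kappa_{k'}$ with feature maps into $\mathcal{H}_k,\mathcal{H}_{k'}$, we use the standard identity $\HSIC(P,Q)=\norm{\mu_{PQ}-\mu_P\otimes\mu_Q}_{\mathcal{H}_k\otimes\mathcal{H}_{k'}}^2$. Here $\mu_{PQ}$ is the kernel mean embedding of the joint law and $\mu_P,\mu_Q$ are the embeddings of the marginals. The normalizers $\HSIC(P,P)$ and $\HSIC(Q,Q)$ are strictly positive whenever $P$ and $Q$ are non-degenerate. Hence $\CKA(P,Q)=0\iff\HSIC(P,Q)=0$, and each part of the proposition reduces to a statement about HSIC. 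The empirical version \eqref{eq:cka} is then linked to the population quantity by the concentration of $\widehat{\HSIC}$ (\Cref{lem:hsic_conc}). I will handle only the population statements and note that they transfer in the limit.

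\textbf{Part (ii).} I would prove this first and derive (i) from it. If $P\perp Q$, the joint law factorizes, so $\mu_{PQ}=\mu_P\otimes\mu_Q$ and HSIC vanishes. Conversely, the Gaussian kernel is characteristic on $\R$, and the product of two characteristic translation-invariant kernels is characteristic on $\R^2$: its Fourier transform is a product of Gaussians with full support. So the embedding $\nu\mapsto\mu_\nu$ is injective on probability measures on $\R^2$. Therefore $\mu_{PQ}=\mu_P\otimes\mu_Q$ forces the joint law to equal $\mathrm{Law}(P)\otimes\mathrm{Law}(Q)$, which is independence. This is the step I expect to need the most care: ``universal characteristic'' for each factor must be promoted to characteristic for the tensor-product kernel on the joint space. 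I would cite the Fourier-support criterion of Sriperumbudur et al.\ (equivalently, the argument of \cite{gretton2005hsic}).

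\textbf{Part (i).} For jointly Gaussian $(P,Q)$, $\rho=0$ is equivalent to independence, since uncorrelated jointly Gaussian variables are independent. Combining this with (ii) gives $\CKA=0\iff\rho=0$.

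\textbf{Part (iii).} Take $P\sim\mathcal{N}(0,1)$ and $Q=P^2$. Then $\mathrm{Cov}(P,Q)=\E P^3=0$, so $\rho=0$ (both variances are finite and positive). But $Q$ is a nonconstant measurable function of $P$, so they are dependent. For instance, $\Pr(|P|\le1,Q>1)=0\ne\Pr(|P|\le1)\Pr(Q>1)$. By (ii), $\HSIC>0$ and therefore $\CKA>0$. As a complement, I could write $\HSIC$ for the Gaussian kernel in closed form as an expectation of products of Gaussian kernels and check positivity directly, but the characteristic-kernel argument suffices.
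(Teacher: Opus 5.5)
Your proposal is correct and follows essentially the same route as the paper: (ii) via injectivity of the kernel mean embedding, so that HSIC vanishes iff the joint law factorizes; (i) as the jointly Gaussian case, where uncorrelatedness is equivalent to independence; and (iii) via $\E[P^{3}]=0$ combined with (ii). Your extra steps make explicit what the paper leaves to citations: you check $\HSIC(P,P),\HSIC(Q,Q)>0$ so that $\CKA=0\iff\HSIC=0$, and you verify through full Fourier support that the product Gaussian kernel is characteristic on $\R^{2}$. For (ii) at its stated generality (universal kernels that are not translation-invariant), you would still rely on the tensor-product universality argument of Theorem~4 in~\cite{gretton2005hsic}, exactly as the paper does.
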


\begin{proof}
\textup{(i)} For Gaussians, linear dependence captures all dependence; the equivalence follows from Theorem~2 of~\cite{gretton2005hsic} and the normalization in CKA~\cite{kornblith2019cka}. \textup{(ii)} Universal kernels embed distributions injectively into their RKHS; the cross-covariance operator vanishes in Hilbert-Schmidt norm \emph{iff} the joint factorizes (Theorem~4 of~\cite{gretton2005hsic}). \textup{(iii)} For $P\!\sim\!\mathcal{N}(0,1)$, $Q=P^{2}$, $\mathrm{Cov}(P,Q)=\E[P^{3}]=0$, so $\rho=0$. But $Q=P^{2}$ is a deterministic function of $P$, so $P\!\not\perp\!Q$, and by (ii), $\CKA>0$.
\end{proof}

\paragraph{Consequence for ensemble pruning.}
Two predictors $\mathbf{p}_k,\mathbf{p}_{k'}$ trained from different random initializations of the same architecture frequently exhibit low Pearson $\rho\!\in\![0.3,0.5]$ on samples while being functionally equivalent ($\CKA>0.85$) on the data distribution. Pearson-based projection retains both as "diverse," whereas CKA correctly identifies the redundancy. \Cref{sec:in1k} confirms this empirically.

\subsection{Concentration of the empirical HSIC}

\begin{lemma}[Concentration of $\widehat{\HSIC}$]
\label{lem:hsic_conc}
Under \Cref{ass:bounded,ass:noise}, with probability $\ge 1-\delta$,
\[
\Bigl|\widehat{\HSIC}(\mathbf{p}_k,\mathbf{p}_{k'})
-\HSIC(P_k,P_{k'})\Bigr|
\;\le\;\frac{C_\kappa}{\sqrt{N}}\sqrt{\log(1/\delta)},
\]
with $C_\kappa$ depending only on kernel boundedness.
\end{lemma}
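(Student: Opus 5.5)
We follow the standard route for HSIC concentration (Gretton et al.~\cite{gretton2005hsic}, Theorem~3). The plan is to rewrite the biased statistic \eqref{eq:hsic} as a V-statistic, compare it with the corresponding U-statistic, and then apply a bounded-differences inequality.

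\textbf{Step 1: V-statistic form.} Since the Gaussian kernels satisfy $0\le\kappa_k\le 1$, we have $\tr(\mathbf{K}_k\mathbf{H}\mathbf{K}_{k'}\mathbf{H})=\sum_{i,j}[\mathbf{H}\mathbf{K}_k\mathbf{H}]_{ij}[\mathbf{K}_{k'}]_{ij}$. Expanding $\mathbf{H}$ expresses $\widehat{\HSIC}$ as a fourth-order V-statistic $\frac{1}{N^4}\sum_{i,j,q,r}h(z_i,z_j,z_q,r_r)$, up to the factor $N^2/(N-1)^2=1+O(1/N)$. Here $z_i=(p^{(k)}_i,p^{(k')}_i)$, and $h$ is the symmetrized kernel of Gretton et al., with $|h|\le 4$.

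\textbf{Step 2: V-to-U correction.} The V-statistic differs from the U-statistic $U_N$ over distinct index quadruples only through terms with repeated indices. There are $O(N^3)$ such terms out of $N^4$, so the difference is $O(1/N)$ deterministically. The $(N-1)^{-2}$ prefactor also contributes only $O(1/N)$.

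\textbf{Step 3: concentration of $U_N$.} We have $\E U_N=\HSIC(P_k,P_{k'})$. Hoeffding's inequality for U-statistics of order $4$ with a bounded kernel gives $|U_N-\E U_N|\le 8\sqrt{2\log(2/\delta)/N}$ with probability $\ge1-\delta$, since there are $\lfloor N/4\rfloor$ independent blocks. McDiarmid's inequality would work equally well, because changing one $z_i$ moves $\widehat{\HSIC}$ by $O(1/N)$.

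Combining the three steps and absorbing the $O(1/N)\le O(1/\sqrt N)$ terms and $\log 2$ into $C_\kappa$ gives the bound. We use $\sqrt{\log(1/\delta)}\ge$ a constant for $\delta\le 1/2$; for larger $\delta$ the claim holds after enlarging $C_\kappa$, since the deviation is trivially at most $O(1)$.

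\textbf{Main obstacle: independence.} The step needing most care is the i.i.d.\ hypothesis behind Step 3. OOF predictions $\hat p^{(k)}(x_i;\mathcal{D}\setminus F_{\ell(i)})$ are not independent across $i$, because they share training folds. We would handle this by conditioning on the $L$ fitted models. Given those models, the points within each fold $F_\ell$ are i.i.d.\ functions of fresh data, so the bounded-differences argument applies. The conditioning can be made explicit by treating the fitted models as fixed, in which case the bounded difference is still $O(1/N)$ per sample. A union bound over $L$ folds then costs at most a factor $\log L$, which is absorbed into $C_\kappa$.

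\textbf{Role of the assumptions.} \Cref{ass:noise} and \Cref{ass:bounded} are used only to guarantee that the samples are well defined with finite moments. Kernel boundedness alone drives the constant, which is why $C_\kappa$ depends only on it.
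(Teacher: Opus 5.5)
Your Steps 1--3 follow the paper's route, written out in full. The paper's proof is one sentence invoking concentration of bounded order-4 $V$-statistics, with $C_\kappa=2(\sup|\kappa_k|+\sup|\kappa_{k'}|)$. It silently treats the pairs $z_i=(p^{(k)}_i,p^{(k')}_i)$ as i.i.d.\ draws with a fixed kernel. Under that hypothesis your $V$-to-$U$ comparison plus Hoeffding/McDiarmid is correct, and is essentially Theorem~3 of \cite{gretton2005hsic}.

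The gap is in the paragraph you flag yourself. Conditioning on all $L$ fitted fold models does not leave the points of $F_\ell$ i.i.d.\ fresh data. The model fitted on $\mathcal{D}\setminus F_{\ell'}$, $\ell'\neq\ell$, was trained on $F_\ell$, so the conditioning carries information about $F_\ell$. For a learner that stores its training set, it determines $F_\ell$ outright. The conditional law of $\widehat{\HSIC}$ is then a point mass at its realized value, which says nothing about its distance to $\HSIC(P_k,P_{k'})$. Conditioning only on the model for fold $\ell$ does make $F_\ell$ i.i.d. However, $\tr(\mathbf{K}_k\mathbf{H}\mathbf{K}_{k'}\mathbf{H})$ is not a sum of per-fold statistics. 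It couples every cross-fold pair $(i,j)$ with $i\in F_\ell$, $j\in F_{\ell'}$, whose predictions come from different, mutually dependent models, so a union bound over folds controls nothing. McDiarmid applied to the raw sample $\mathcal{D}$ does not give $O(1/N)$ differences either. Replacing one $(x_j,y_j)$ alters $L-1$ training sets, and hence potentially every OOF prediction outside $F_{\ell(j)}$.

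To close the argument you must choose one of two routes. (a) Assume outright that the rows of $\Poof$ (columns $k,k'$) are i.i.d. This is what the paper implicitly uses, and it holds when the $f_k$ are fixed pre-trained checkpoints independent of $\mathcal{D}$, as in the experiments. (b) Add a uniform-stability assumption on the base learners (one-point replacement moves predictions by $O(1/N)$) plus Lipschitz kernels, then apply McDiarmid to $\mathcal{D}$. You would also have to redefine the centering: with data-trained fold models, the law $P_k$ of the OOF predictions is itself random, so $\HSIC(P_k,P_{k'})$ is not a fixed target.

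Two smaller points. First, your patch for $\delta>1/2$ does not work. As $\delta\to1$ the right-hand side $C_\kappa\sqrt{\log(1/\delta)/N}$ tends to $0$, so an $O(1)$ bound on the deviation and a larger constant do not help. The deterministic $O(1/N)$ $V$-to-$U$ discrepancy can also no longer be absorbed. Either restrict to $\delta\le 1/2$, or keep the form of Gretton et al., with $\log(c/\delta)$ and an additive $C/N$ term. Second, with the median-heuristic bandwidth the kernel depends on the whole sample, so the core $h$ in Step~1 is not a fixed function. Neither Hoeffding nor your $O(1/N)$ bounded-difference claim then applies verbatim. You (and the paper) need $\sigma_k$ fixed in advance or chosen on an independent split.
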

\begin{proof}
Empirical HSIC is a degenerate $V$-statistic of order~4; concentration for bounded $V$-statistics (\cite[Thm.~1]{gretton2007kernel}) yields the bound
with $C_\kappa=2(\sup|\kappa_k|+\sup|\kappa_{k'}|)$.
\end{proof}
Thus with $N\!\gtrsim\!(C_\kappa/\varepsilon)^{2}$, the cluster structure \Cref{ass:hsic_cluster} is recoverable with high probability.

\subsection{Block structure of $\mathbf{C}$ under CKA clustering}

\begin{lemma}[Block structure]
\label{lem:block}
Under \Cref{ass:hsic_cluster}, for retained $k\!\in\!\mathcal{K}_g$,
$k'\!\in\!\mathcal{K}_h$,
\[
C_{kk'}=\begin{cases}
1\pm\mathcal{O}(\varepsilon),                & g=h,\\
\mu_{gh}+\mathcal{O}(\varepsilon),           & g\ne h,
\end{cases}
\quad |\mu_{gh}|\le\mu.
\]
\end{lemma}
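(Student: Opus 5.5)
The plan is to connect the Pearson-type entries $C_{kk'}=\tfrac1N\mathbf{p}_k^{\top}\mathbf{p}_{k'}$ to the CKA values in \Cref{ass:hsic_cluster} through a linear-kernel comparison. For each model $k$, write the Gaussian kernel matrix as $\mathbf{K}_k=\mathbf{1}\mathbf{1}^{\top}-\tfrac{1}{2\sigma_k^2}\mathbf{D}_k+\mathbf{R}_k$. Here $\mathbf{D}_k$ is the matrix of squared pairwise distances. After double centering, $\mathbf{H}\mathbf{D}_k\mathbf{H}=-2\,\mathbf{H}\mathbf{p}_k\mathbf{p}_k^{\top}\mathbf{H}=-2\,\mathbf{p}_k\mathbf{p}_k^{\top}$, because the columns are already mean-centered by the normalization convention. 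So to leading order, $\mathbf{H}\mathbf{K}_k\mathbf{H}$ is proportional to the linear-kernel Gram $\mathbf{p}_k\mathbf{p}_k^{\top}$. Substituting into \eqref{eq:hsic} and \eqref{eq:cka} gives
\[
\CKA_{k,k'}=\frac{(\mathbf{p}_k^{\top}\mathbf{p}_{k'})^2}{\norm{\mathbf{p}_k}^2\norm{\mathbf{p}_{k'}}^2}+r_{kk'}=C_{kk'}^2+r_{kk'}.
\]
The last equality uses $\norm{\mathbf{p}_k}_2=\sqrt N$. The remainder $r_{kk'}$ collects the higher-order Taylor terms $\mathbf{R}_k$ and the empirical fluctuation. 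The fluctuation is controlled by \Cref{lem:hsic_conc}.

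Next I treat the two cases separately. \emph{Within a cluster}, $\CKA_{k,k'}\ge 1-\varepsilon$ gives $C_{kk'}^2\ge 1-\varepsilon-|r_{kk'}|$. Then $|C_{kk'}|\ge 1-\mathcal{O}(\varepsilon)$, using $\sqrt{1-x}\ge 1-x$. Cauchy--Schwarz gives $|C_{kk'}|\le 1$. The sign is fixed as positive for same-cluster members: flipping a probability column is not a valid prediction, and one can also orient the columns so that each cluster's centroid has positive inner products. This yields $C_{kk'}=1\pm\mathcal{O}(\varepsilon)$. \emph{Across clusters}, $C_{kk'}^2\le\mu+|r_{kk'}|$. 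Define $\mu_{gh}$ as the population inner product between cluster representatives, for example the retained minimum-risk members used by the projection $\Pi_\tau^{\mathrm{CKA}}$. Any other pair $k\in\mathcal{K}_g$, $k'\in\mathcal{K}_h$ differs from this representative pair by $\mathcal{O}(\varepsilon)$. To see this, apply the within-cluster bound: $\norm{\mathbf{p}_k-\mathbf{p}_{\bar k}}^2/N=2(1-C_{k\bar k})=\mathcal{O}(\varepsilon)$. Then Cauchy--Schwarz gives $|C_{kk'}-C_{\bar k\bar k'}|\le\mathcal{O}(\sqrt{\varepsilon})$. Tightening this to $\mathcal{O}(\varepsilon)$ requires a second-order expansion of the inner product around the representatives. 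This is valid because the deviation vectors are nearly orthogonal to the representative directions.

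The main obstacle is the remainder $r_{kk'}$. The stated bound $|\mu_{gh}|\le\mu$ is clean, but it only follows if $C^2\le\mu$ implies $|C|\le\mu$. That is false for $\mu<1$, since $\sqrt\mu>\mu$. The first thing I would try is to fix this by reading \Cref{ass:hsic_cluster} in the linear-kernel regime. There CKA reduces, after normalization, to a monotone function of $|\rho|$, as in \Cref{rem:special_case}. I would then define $\mu$ through that monotone map, so that $|\mu_{gh}|\le\mu$ holds by construction. The Gaussian-kernel nonlinearity would be absorbed into the $\mathcal{O}(\varepsilon)$ term by taking the median-heuristic bandwidth large relative to the spread of the predictions. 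This is justified because the columns are bounded under \Cref{ass:bounded}, which makes $\mathbf{R}_k=\mathcal{O}(\norm{\mathbf{p}}_\infty^4/\sigma_k^4)$.

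If this bandwidth argument turns out too weak, the fallback is to state the lemma with $\sqrt{\mu}$ in place of $\mu$. The downstream spectral bounds would still go through under the stronger requirement $\sqrt\mu<1/(G-1)$.
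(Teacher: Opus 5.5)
There is a genuine gap, and it is the step you flagged as the main obstacle. The reduction $\CKA_{k,k'}=C_{kk'}^2+r_{kk'}$, with $r_{kk'}$ absorbed into $\mathcal{O}(\varepsilon)$, fails for the paper's kernel. The median heuristic is scale-equivariant. By definition at least half of the pairs satisfy $|p_i-p_j|\ge\sigma_k$. \Cref{ass:bounded} cannot make $\norm{\mathbf{p}_k}_\infty/\sigma_k$ small, because rescaling a column rescales $\sigma_k$ by the same factor. So $\mathbf{R}_k$ is of the same order as the linear term, and a bandwidth ``large relative to the spread'' is simply a different kernel. No sharper remainder estimate can help, because the within-cluster implication is false under the median heuristic. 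Let column $k$ consist of a tiny noise cloud at $0$ with mass $3/4$ and tiny clouds at $1$ and $2$ with mass $1/8$ each. Let column $k'$ agree with it except that the two small clouds are swapped by translation. More than half of all pairs lie within a cloud, so both median bandwidths sit at the noise scale and coincide. The two Gaussian kernel matrices then agree on within-cloud entries and are exponentially small elsewhere. Hence $\CKA_{k,k'}\to1$ as the noise shrinks, while $C_{kk'}\to 23/31$. An extra hypothesis is therefore unavoidable, e.g.\ a linear kernel (or $\sigma_k\to\infty$), under which $\CKA_{k,k'}$ equals (or tends to) $C_{kk'}^2$. The paper's proof does not supply this link either: it asserts it via a kernel-mean-embedding inequality and a first-order expansion around perfect dependence. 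It also passes over the $\mu$ versus $\sqrt{\mu}$ issue with ``follows from $\mu$-incoherence.'' Your suspicion of the statement is well founded, but your proposal does not prove the lemma as stated.

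Across clusters, even in the linear regime, your argument gives only $|C_{kk'}|\le\sqrt{\mu}$, as you noticed, and neither repair proves the stated lemma. Redefining $\mu$ through the monotone map moves the constraint $\mu<1/(G-1)$ from the CKA level to the correlation level, i.e.\ it changes \Cref{ass:hsic_cluster}. The $\sqrt{\mu}$ fallback is a different statement. Your tightening from $\mathcal{O}(\sqrt{\varepsilon})$ to $\mathcal{O}(\varepsilon)$ for a common $\mu_{gh}$ also fails. The difference $\mathbf{p}_k-\mathbf{p}_{\bar k}$ is nearly orthogonal to $\mathbf{p}_{\bar k}$, but not to the other cluster's columns. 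With orthonormal mean-zero $\mathbf{u},\mathbf{v}$, set $\mathbf{p}_{\bar k}=\sqrt{N}\mathbf{u}$, $\mathbf{p}_k=\sqrt{N}(\sqrt{1-\eta^2}\,\mathbf{u}+\eta\mathbf{v})$ and $\mathbf{p}_{k'}=\sqrt{N}\mathbf{v}$. The within-cluster linear CKA is $1-\eta^2$, so $\varepsilon=\eta^2$, and the cross-cluster CKA values $0$ and $\varepsilon$ are admissible for any $\mu\ge\varepsilon$. Yet $C_{\bar k k'}=0$ while $C_{kk'}=\sqrt{\varepsilon}$, so the Cauchy--Schwarz rate is sharp. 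The sign issue is also real even for a linear kernel: $\mathbf{p}_{k'}=-\mathbf{p}_k$ gives $\CKA_{k,k'}=1$ and $C_{kk'}=-1$. Your orientation fix is legitimate only because conjugating $\mathbf{C}$ by a diagonal $\pm1$ matrix preserves its spectrum. It is not because a flipped column is an invalid prediction: $1-\mathbf{p}_k$ is valid and centers to $-\mathbf{p}_k$. What your route does prove, under a linear kernel, is a corrected statement. After a spectrum-preserving sign change, $C_{kk'}\ge 1-\mathcal{O}(\varepsilon)$ within clusters. Across clusters, $C_{kk'}=\mu_{gh}+\mathcal{O}(\sqrt{\varepsilon})$ with $|\mu_{gh}|\le\sqrt{\mu}$. \Cref{lem:lmin} and \Cref{prop:spectral_full} would then need $\sqrt{\mu}<1/(G-1)$.
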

\begin{proof}
For universal kernels, population CKA bounds the linear correlation via the kernel-mean-embedding inequality~\cite[Lem.~3]{gretton2005hsic}:
within-cluster $\CKA\ge 1-\varepsilon$ implies $|C_{kk'}-1|\le\mathcal{O}(\varepsilon)$ through a first-order expansion around perfect dependence; cross-cluster follows from $\mu$-incoherence.
\end{proof}

\section{Spectral Preconditioning}
\label{sec:spectral}

\begin{lemma}[Largest eigenvalue inflation]
\label{lem:lmax}
Under \Cref{ass:hsic_cluster},
$\lmax(\mathbf{C})\ge s_{\max}(1-\mathcal{O}(\varepsilon))$.
\end{lemma}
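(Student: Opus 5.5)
The plan is to bound $\lmax(\mathbf{C})$ from below with the Rayleigh quotient, $\lmax(\mathbf{C})=\max_{\norm{\mathbf{v}}_2=1}\mathbf{v}^{\top}\mathbf{C}\mathbf{v}$. The test vector will be the normalized indicator of the largest cluster. Let $g^{\star}$ be a cluster with $s_{g^{\star}}=s_{\max}$, and set $\mathbf{v}=s_{\max}^{-1/2}\mathbf{1}_{\mathcal{K}_{g^{\star}}}$. This is the uniform unit vector supported on $\mathcal{K}_{g^{\star}}$. It exploits the fact that a block of near-collinear columns behaves like one direction repeated $s_{\max}$ times.

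The evaluation step expands the quadratic form:
\[
\mathbf{v}^{\top}\mathbf{C}\mathbf{v}=\frac{1}{s_{\max}}\sum_{k,k'\in\mathcal{K}_{g^{\star}}}C_{kk'}=\frac{1}{s_{\max}}\Bigl(\sum_{k\in\mathcal{K}_{g^{\star}}}C_{kk}+\sum_{k\ne k'\in\mathcal{K}_{g^{\star}}}C_{kk'}\Bigr).
\]
The diagonal entries equal $1$ by the normalization $\norm{\mathbf{p}_k}_2=\sqrt{N}$. By the within-cluster case of \Cref{lem:block}, each off-diagonal entry satisfies $C_{kk'}\ge 1-c\varepsilon$ for a constant $c$. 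This gives
\[
\mathbf{v}^{\top}\mathbf{C}\mathbf{v}\ge\frac{s_{\max}+s_{\max}(s_{\max}-1)(1-c\varepsilon)}{s_{\max}}=s_{\max}-(s_{\max}-1)c\varepsilon\ge s_{\max}(1-c\varepsilon).
\]
That is the claim. Cross-cluster entries never enter, because $\mathbf{v}$ is supported on a single block, so the incoherence parameter $\mu$ plays no role in this direction. An equivalent route is Cauchy interlacing: the principal submatrix $\mathbf{C}_{g^{\star}g^{\star}}$ has $\lmax(\mathbf{C})\ge\lmax(\mathbf{C}_{g^{\star}g^{\star}})$, and that submatrix is $\mathbf{1}\mathbf{1}^{\top}$ up to a perturbation of order $\varepsilon$ per entry.

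The only delicate point is the constant hidden in the $\mathcal{O}(\varepsilon)$ of \Cref{lem:block}. That lemma is used as a black box, so I need its within-cluster bound to be uniform over pairs, meaning $c$ must not depend on $s_{\max}$ or on the particular pair. I expect this uniformity to be the main obstacle. If only an average bound over pairs is available, the Rayleigh-quotient argument still works unchanged, since it uses only the average of the off-diagonal entries within the block. If the $\mathcal{O}(\varepsilon)$ instead hides a dimension dependence, the conclusion should be read with that dependence absorbed into the constant. Everything else is a direct computation.
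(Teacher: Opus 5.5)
Your argument is correct and shares the paper's skeleton: isolate the largest cluster $\mathcal{K}_{g^{\star}}$ and compare its principal block with $\mathbf{1}\mathbf{1}^{\top}$. It handles the perturbation differently, though. The paper writes $\mathbf{C}_{g^{\star}}=\mathbf{1}\mathbf{1}^{\top}+\mathbf{E}$, asserts $\norm{\mathbf{E}}_{2}=\mathcal{O}(\varepsilon\sqrt{s_{g^{\star}}})$, and concludes with Weyl's inequality plus Cauchy interlacing. You instead evaluate the Rayleigh quotient at $s_{\max}^{-1/2}\mathbf{1}_{\mathcal{K}_{g^{\star}}}$, so only the scalar $\mathbf{1}^{\top}\mathbf{E}\mathbf{1}$ enters.

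Your route needs only a one-sided (even averaged) lower bound on the within-cluster entries. It also sidesteps the paper's spectral-norm claim, which the entrywise $\mathcal{O}(\varepsilon)$ control from \Cref{lem:block} does not justify. For $\mathbf{E}=-c\varepsilon(\mathbf{1}\mathbf{1}^{\top}-\mathbf{I})$ one has $\norm{\mathbf{E}}_{2}=c\varepsilon(s_{\max}-1)$ and $\lmax(\mathbf{C}_{g^{\star}})=s_{\max}-c\varepsilon(s_{\max}-1)$, so your block-level bound is attained exactly. Weyl with the correct row-sum bound $\norm{\mathbf{E}}_{2}\le c\varepsilon(s_{\max}-1)$ still yields the lemma. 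What Weyl buys in exchange is two-sided control of the whole block spectrum: the other eigenvalues of $\mathbf{C}_{g^{\star}}$ lie within $\norm{\mathbf{E}}_{2}$ of zero. This lemma does not use that.

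On the black box you flagged, the real issue is sign, not uniformity of the constant. A Gaussian-kernel CKA is invariant under $\mathbf{p}_k\mapsto-\mathbf{p}_k$, so $\CKA_{k,k'}\ge 1-\varepsilon$ by itself cannot place $C_{kk'}$ near $+1$ rather than $-1$. If \Cref{lem:block} is only taken to give $|C_{kk'}|\ge 1-c\varepsilon$, your computation still goes through verbatim for small $\varepsilon$. Use the signed test vector $s_{\max}^{-1/2}(\sigma_k)_{k\in\mathcal{K}_{g^{\star}}}$ with $\sigma_k=\operatorname{sign}C_{kk_0}$ for a fixed $k_0\in\mathcal{K}_{g^{\star}}$.
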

\begin{proof}
Let $g^{\star}=\arg\max_g s_g$ and let $\mathbf{C}_{g^{\star}}$ be the principal submatrix on $\mathcal{K}_{g^{\star}}$. By \Cref{lem:block},
$\mathbf{C}_{g^{\star}}=\mathbf{1}\mathbf{1}^{\!\top}+\mathbf{E}$ with $\norm{\mathbf{E}}_{2}=\mathcal{O}(\varepsilon\sqrt{s_{g^{\star}}})$. The
rank-1 matrix has a leading eigenvalue $s_{g^{\star}}$; Weyl's perturbation~\cite[Cor.~4.3.15]{wainwright2019highdim} and Cauchy interlacing finish the proof.
\end{proof}

\begin{lemma}[Smallest non-zero eigenvalue deflation]
\label{lem:lmin}
Under \Cref{ass:hsic_cluster},
$\lmin(\mathbf{C})\le 1-(G-1)\mu+\mathcal{O}(\varepsilon)$.
\end{lemma}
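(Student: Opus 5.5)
The plan is to bound $\lmin(\mathbf{C})$ from above by a Rayleigh quotient, using a test vector that is orthogonal to the near-null directions and exposes the cross-cluster coupling. Take one representative $k_g\in\mathcal{K}_g$ from each cluster and let $\mathbf{C}_G\in\R^{G\times G}$ be the principal submatrix on these representatives. By \Cref{lem:block}, $\mathbf{C}_G=\mathbf{I}+\mathbf{M}+\mathbf{E}$, where $\mathbf{M}$ has zero diagonal, $M_{gh}=\mu_{gh}$ with $|\mu_{gh}|\le\mu$, and $\norm{\mathbf{E}}_2=\mathcal{O}(\varepsilon)$ for fixed $G$.

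Cauchy interlacing gives $\lambda_{\min}(\mathbf{C})\le\lambda_{\min}(\mathbf{C}_G)$. We need the positive-restricted $\lmin$ rather than $\lambda_{\min}$, so this comparison requires care; I return to it below.

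For $\mathbf{C}_G$ itself, note that $\tr(\mathbf{M})=0$, so the smallest eigenvalue of $\mathbf{M}$ is at most $0$. To reach the stated constant $-(G-1)\mu$, I would consider the extremal sign configuration. When all $\mu_{gh}=-\mu$, we have $\mathbf{M}=-\mu(\mathbf{1}\mathbf{1}^{\!\top}-\mathbf{I})$, whose eigenvalues are $\mu$ and $-(G-1)\mu$, the latter with eigenvector $\mathbf{1}/\sqrt{G}$. For general signs, I would choose the test vector $v\in\{\pm1\}^G/\sqrt{G}$ that minimizes $v^{\!\top}\mathbf{M}v$, i.e.\ align the signs of $v$ so as to make the off-diagonal contributions negative. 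The Rayleigh quotient then gives
\[
\lambda_{\min}(\mathbf{C}_G)\le 1+v^{\!\top}\mathbf{M}v+\mathcal{O}(\varepsilon).
\]
This is the main obstacle. A sign vector yields only $v^{\!\top}\mathbf{M}v\le -\tfrac1G\sum_{g\ne h}|\mu_{gh}|$ in favorable configurations, not a bound uniformly as strong as $-(G-1)\mu$. I would therefore state the bound with $\mu$ interpreted as the effective (average) cross-cluster coupling $\tfrac{1}{G(G-1)}\sum_{g\neq h}|\mu_{gh}|$, which coincides with $\mu$ in the equicorrelated case. Equivalently, I would read \Cref{ass:hsic_cluster} as saturated at level $\mu$ with adversarial signs. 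The condition $\mu<\frac1{G-1}$ then guarantees that $1-(G-1)\mu>0$, so this eigenvalue is strictly positive.

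Finally, I would lift the estimate to the full $\mathbf{C}$. Embed $v$ by spreading mass $v_g/\sqrt{s_g}$ uniformly over $\mathcal{K}_g$. Since within-cluster entries are $1-\mathcal{O}(\varepsilon)$ by \Cref{lem:block}, the quadratic form reproduces that of $\mathbf{C}_G$ up to $\mathcal{O}(\varepsilon)$. The within-cluster directions orthogonal to the block-constant vectors have Rayleigh quotients of order $\mathcal{O}(\varepsilon)$. These are either below the tolerance $\epsilon_0$, and so excluded from $\lmin$, or they only make $\lmin$ smaller. In both cases the upper bound $\lmin(\mathbf{C})\le 1-(G-1)\mu+\mathcal{O}(\varepsilon)$ is preserved, by the min-max characterization applied to the spectrum above $\epsilon_0$.
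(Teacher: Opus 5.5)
There is a genuine gap, and it starts before your proof does: the target inequality does not follow from \Cref{ass:hsic_cluster}, which only gives $|\mu_{gh}|\le\mu$. The obstacle you hit at the $\mathbf{C}_G$ step is not a limitation of sign test vectors. For unit $x\in\R^G$,
\[
x^{\!\top}(\mathbf{I}+\mathbf{M})x\;\ge\;1-\mu\Bigl(\bigl({\textstyle\sum_g|x_g|}\bigr)^{2}-1\Bigr)\;\ge\;1-(G-1)\mu .
\]
So $1-(G-1)\mu-\mathcal{O}(\varepsilon)$ is a \emph{lower} bound on $\lambda_{\min}(\mathbf{C}_G)$. At $\varepsilon=0$ it is attained only if $|x_g|\equiv G^{-1/2}$ and $\mu_{gh}=-\mu\,t_gt_h$ for some $t\in\{\pm1\}^G$. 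Concretely, singleton clusters of independent members give $\mathbf{C}=\mathbf{I}$ and $\lmin(\mathbf{C})=1>1-(G-1)\mu$ for every $\mu>0$, so the lemma as stated is false.

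Your two rescues are also not equivalent. With $\mu_{gh}\equiv c>0$ and $G\ge3$, the average coupling is $c$, yet $\lambda_{\min}(\mathbf{C}_G)=1-c+\mathcal{O}(\varepsilon)$, which exceeds $1-(G-1)c$ by $(G-2)c$. The $G\times G$ bound needs sign balance (one sign vector making every cross term non-positive) \emph{and} saturation $|\mu_{gh}|=\mu$. These are new hypotheses, so you would be proving a different lemma.

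Even granting them, the lifting step fails. Within-cluster entries of $\mathbf{C}$ are $\approx 1$, not $\approx\delta_{kk'}$. So for $u=\sum_g s_g^{-1/2}v_g\mathbf{1}_{\mathcal{K}_g}$ you get
\[
\begin{aligned}
u^{\!\top}\mathbf{C}u
&=\sum_g s_g v_g^{2}+\sum_{g\ne h}\sqrt{s_gs_h}\,\mu_{gh}v_gv_h+\mathcal{O}(K\varepsilon)\\
&=v^{\!\top}\mathbf{D}^{1/2}(\mathbf{I}+\mathbf{M})\mathbf{D}^{1/2}v+\mathcal{O}(K\varepsilon),
\qquad \mathbf{D}=\diag(s_1,\ldots,s_G),
\end{aligned}
\]
not $v^{\!\top}\mathbf{C}_Gv+\mathcal{O}(\varepsilon)$. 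This is the same factor $s_g$ that produces $\lmax(\mathbf{C})\ge s_{\max}(1-\mathcal{O}(\varepsilon))$ in \Cref{lem:lmax}.

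The problem is intrinsic, not an artefact of your test vector. At $\varepsilon=0$, $\mathbf{C}=\mathbf{B}(\mathbf{I}+\mathbf{M})\mathbf{B}^{\!\top}$ with $\mathbf{B}\in\{0,1\}^{K\times G}$ the cluster-indicator matrix. The non-zero spectrum of $\mathbf{C}$ is therefore that of $\mathbf{D}^{1/2}(\mathbf{I}+\mathbf{M})\mathbf{D}^{1/2}$, giving $\lmin(\mathbf{C})\ge s_{\min}\bigl(1-(G-1)\mu\bigr)$ with $s_{\min}=\min_g s_g$. This violates the claimed bound as soon as every cluster has two members, which is exactly the redundant regime the lemma targets. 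At $\varepsilon=0$ the bound holds only for $K=G$ (so for $\mathbf{C}_{\mathrm{eff}}$, not $\mathbf{C}$) with saturated sign-balanced couplings. For $\varepsilon>0$ it can additionally hold only trivially, when the $\mathcal{O}(\varepsilon)$ within-cluster eigenvalues lie above $\epsilon_0$ and $\lmin(\mathbf{C})=\mathcal{O}(\varepsilon)$.

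The paper's Gershgorin argument does not escape this. The discs contain the spectrum, so they bound the smallest eigenvalue only from \emph{below}. Moreover, the within-cluster contribution to $R_k$ is $(s_g-1)(1-\mathcal{O}(\varepsilon))$, not $\mathcal{O}((s_g-1)\varepsilon)$. Your variational approach is the right tool for an upper bound; what is missing is a statement that is actually true under \Cref{ass:hsic_cluster}.
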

\begin{proof}
By Gershgorin, each eigenvalue of $\mathbf{C}$ lies within a disk centered at $C_{kk}=1\pm\mathcal{O}(\varepsilon)$ with radius $R_k=\sum_{j\ne k}|C_{kj}|$. For $k\!\in\!\mathcal{K}_g$, within-cluster contribution is $\mathcal{O}((s_g-1)\varepsilon)$ and cross-cluster contribution is $(K-s_g)(\mu+\mathcal{O}(\varepsilon))$. Combining gives the stated upper bound after using the incoherence constraint $G\mu s_{\max}\le G(G-1)^{-1}s_{\max}$.
\end{proof}

\begin{proposition}[Spectral preconditioning—full statement]
\label{prop:spectral_full}
Let $\mathbf{C}_{\mathrm{eff}}$ be the post-projection matrix retaining one representative per CKA cluster. \textup{(i)}~$\kap(\mathbf{C})\ge\frac{s_{\max}(1-\mathcal{O}(\varepsilon))}{1-(G-1)\mu-\mathcal{O}(\varepsilon)}$. \textup{(ii)}~$\kap(\mathbf{C}_{\mathrm{eff}})\le \frac{1+(G-1)\mu+\mathcal{O}(\varepsilon)}{1-(G-1)\mu-\mathcal{O}(\varepsilon)}$. \textup{(iii)}~$\kap(\mathbf{C})/\kap(\mathbf{C}_{\mathrm{eff}})\ge s_{\max}(1-\mathcal{O}(\varepsilon))$.
\end{proposition}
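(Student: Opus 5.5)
The plan is to get all three parts from the two spectral lemmas already proved, plus one new Gershgorin bound for the pruned matrix.

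\textbf{Part (i).} Write $\kap(\mathbf{C})=\lmax(\mathbf{C})/\lmin(\mathbf{C})$. Bound the numerator from below by \Cref{lem:lmax}, which gives $\lmax(\mathbf{C})\ge s_{\max}(1-\mathcal{O}(\varepsilon))$. Bound the denominator from above by \Cref{lem:lmin}, which gives $\lmin(\mathbf{C})\le 1-(G-1)\mu+\mathcal{O}(\varepsilon)$. Dividing gives the bound in (i). The $\mathcal{O}(\varepsilon)$ sign in the denominator is absorbed into the generic $\mathcal{O}(\varepsilon)$ notation. Positivity of the denominator comes from $\mu<1/(G-1)$ in \Cref{ass:hsic_cluster}, for $\varepsilon$ small enough.

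\textbf{Part (ii).} After projection, $\mathbf{C}_{\mathrm{eff}}$ is the $G\times G$ principal submatrix indexed by one representative per cluster. By \Cref{lem:block}:
\begin{itemize}
\item its diagonal entries equal $1$ exactly, by the normalization $C_{kk}=1$;
\item each off-diagonal entry is $\mu_{gh}+\mathcal{O}(\varepsilon)$ with $|\mu_{gh}|\le\mu$.
\end{itemize}
Apply Gershgorin to $\mathbf{C}_{\mathrm{eff}}$. Each row has exactly $G-1$ off-diagonal terms, so every disk has radius at most $(G-1)\mu+\mathcal{O}(G\varepsilon)$. Treating $G$ as fixed, this is $(G-1)\mu+\mathcal{O}(\varepsilon)$. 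Hence every eigenvalue lies in $[1-(G-1)\mu-\mathcal{O}(\varepsilon),\,1+(G-1)\mu+\mathcal{O}(\varepsilon)]$.

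The lower endpoint is strictly positive by the incoherence condition. So $\mathbf{C}_{\mathrm{eff}}$ is nonsingular, and $\lmin$ is its true smallest eigenvalue; no tolerance truncation at $\epsilon_0$ is needed. Taking the ratio of the endpoints gives (ii).

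\textbf{Part (iii).} Divide (i) by (ii). This gives
\[
\frac{\kap(\mathbf{C})}{\kap(\mathbf{C}_{\mathrm{eff}})}\ \ge\ \frac{s_{\max}(1-\mathcal{O}(\varepsilon))}{1+(G-1)\mu+\mathcal{O}(\varepsilon)}.
\]
The main obstacle is here. The direct quotient only yields $s_{\max}$ divided by $1+(G-1)\mu$. That factor is at most $2$, but it is not $1+\mathcal{O}(\varepsilon)$ unless $\mu$ is itself small.

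I would try two routes, in order.
\begin{itemize}
\item \textbf{Sharpen (i).} Rederive the lower bound on $\lmax(\mathbf{C})$ using the full cluster block rather than the principal submatrix alone. Test $\mathbf{C}$ against the normalized indicator vector of $\mathcal{K}_{g^\star}$ together with the cross-cluster mass. This Rayleigh quotient gives $\lmax(\mathbf{C})\ge s_{\max}(1-\mathcal{O}(\varepsilon))+\text{(nonnegative cross term)}$ when the $\mu_{gh}\ge 0$. The aim is that the extra $(G-1)\mu$ contribution cancels the factor in the denominator.
\item \textbf{Fallback.} If the $\mu_{gh}$ can be negative, I would state (iii) in the regime $(G-1)\mu=\mathcal{O}(\varepsilon)$. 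Alternatively, I would absorb the constant $(1+(G-1)\mu)^{-1}\ge 1/2$ and read the displayed inequality as holding up to this constant. I would flag this as the one place where the stated form needs either an extra assumption or a constant.
\end{itemize}

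Throughout, treat $G$ as fixed, so that $\mathcal{O}(G\varepsilon)$ and $\mathcal{O}(s_g\varepsilon)$ terms collapse into $\mathcal{O}(\varepsilon)$.
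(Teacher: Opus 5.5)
Your part (ii) matches the paper's argument: Gershgorin on the $G\times G$ representative block, with $C_{kk}=1$ exactly as a harmless sharpening. Your part (i) is the paper's one-line combination of \Cref{lem:lmax} and \Cref{lem:lmin}. You are right that (iii) does not follow by division: (i) over (ii) only gives $s_{\max}(1-\mathcal{O}(\varepsilon))/(1+(G-1)\mu+\mathcal{O}(\varepsilon))$. The paper's proof (``(iii) is the ratio'') takes exactly the step you decline to take.

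Neither of your repairs can work, because the defect is upstream, in the upper bound on $\lmin(\mathbf{C})$. Gershgorin discs only enclose eigenvalues, so the argument behind \Cref{lem:lmin} can give lower bounds on $\lmin$, never the claimed upper bound $1-(G-1)\mu+\mathcal{O}(\varepsilon)$. Also, within a cluster the off-diagonal entries are close to $1$, not $\mathcal{O}(\varepsilon)$. With the paper's convention that $\lmin$ discards eigenvalues below $\epsilon_0$, (i) and (iii) are in fact false.

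Take $G=2$, each cluster two identical columns, the clusters orthogonal ($\varepsilon=0$). Then $\mathbf{C}=\mathrm{blockdiag}(\mathbf{1}\mathbf{1}^{\top},\mathbf{1}\mathbf{1}^{\top})$ has nonzero spectrum $\{2,2\}$, so $\kap(\mathbf{C})=1=\kap(\mathbf{C}_{\mathrm{eff}})$, while (i) and (iii) both demand at least $s_{\max}=2$. The cross-cluster entries vanish here, so your fallback regime $(G-1)\mu=\mathcal{O}(\varepsilon)$ does not rescue (iii).

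Similarly, take $s$ identical columns plus a singleton at cross-correlation $\mu$. The nonzero spectrum is that of $\bigl(\begin{smallmatrix} s & \sqrt{s}\,\mu\\ \sqrt{s}\,\mu & 1\end{smallmatrix}\bigr)$. For $s=100$, $\mu=0.9$ this gives $\kap(\mathbf{C})/\kap(\mathbf{C}_{\mathrm{eff}})\approx 535/19\approx 28$, below both $s/(1+\mu)$ and $s/2$, so your constant-factor fallbacks fail too. The same example shows why sharpening $\lmax$ cannot help: the Rayleigh-quotient gain is additive, $\lmax\approx s+\mu^{2}$, never a multiplicative factor $1+(G-1)\mu$.

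The missing idea is an $\mathcal{O}(\varepsilon)$ upper bound on the smallest eigenvalue from a within-cluster difference vector. If $s_{\max}\ge 2$ and $k\ne k'$ lie in $\mathcal{K}_{g^\star}$, then $v=(\mathbf{e}_k-\mathbf{e}_{k'})/\sqrt{2}$ gives $v^{\top}\mathbf{C}v=1-C_{kk'}=\mathcal{O}(\varepsilon)$, taking \Cref{lem:block} at face value. Hence $\lambda_{\min}(\mathbf{C})=\mathcal{O}(\varepsilon)$.

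This controls $\lmin$ only if that near-null direction is not truncated. You therefore need an extra hypothesis, such as $\lambda_{\min}(\mathbf{C})\ge\epsilon_0$, or equivalently defining $\kap$ with the true smallest eigenvalue. Under it, $\kap(\mathbf{C})\ge s_{\max}(1-\mathcal{O}(\varepsilon))/\mathcal{O}(\varepsilon)$, which implies (i) for small $\varepsilon$. Dividing by your (ii) then gives (iii) with room to spare once $\varepsilon\lesssim(1-(G-1)\mu)/(1+(G-1)\mu)$. For $s_{\max}=1$, (iii) is trivial since $\mathbf{C}=\mathbf{C}_{\mathrm{eff}}$. Without such a hypothesis, only (ii) is provable as stated.
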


\begin{proof}
Part (i) follows from \Cref{lem:lmax,lem:lmin}. For (ii), $\mathbf{C}_{\mathrm{eff}}\!\in\!\R^{G\times G}$ has diagonals $1\pm\mathcal{O}(\varepsilon)$ and off-diagonals $\mu_{gh}+\mathcal{O}(\varepsilon)$ (\Cref{lem:block}); Gershgorin gives the stated $\lmin/\lmax$ bounds. (iii) is the ratio.
\end{proof}

\subsection{Effect on regularized conditioning}
\label{sec:ridge_kappa}

For PSD $\mathbf{C}$ and $\lambda>0$, define
$\kap_\lambda(\mathbf{C})=(\lmax(\mathbf{C})+\lambda)/(\lmin(\mathbf{C})+\lambda)$.

\begin{corollary}[Regularized condition number reduction]
\label{cor:ridge_kappa}
For all $\lambda>0$,
\[\kap_\lambda(\mathbf{C})-\kap_\lambda(\mathbf{C}_{\mathrm{eff}})
\;\ge\;
\frac{(s_{\max}-1)(1-\mathcal{O}(\varepsilon))\,\lmin(\mathbf{C})^{-1}}
     {(1+\lambda/\lmin(\mathbf{C}))(1+\lambda/\lmin(\mathbf{C}_{\mathrm{eff}}))}.\]
Projection strictly reduces $\kap_\lambda$ whenever $s_{\max}\ge 2$.
\end{corollary}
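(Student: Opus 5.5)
We would prove the corollary by clearing denominators and reducing it to two comparisons, $\lmax(\mathbf{C})$ versus $\lmax(\mathbf{C}_{\mathrm{eff}})$ and $\lmin(\mathbf{C})$ versus $\lmin(\mathbf{C}_{\mathrm{eff}})$. Write $a=\lmax(\mathbf{C})$, $b=\lmin(\mathbf{C})$, $a'=\lmax(\mathbf{C}_{\mathrm{eff}})$ and $b'=\lmin(\mathbf{C}_{\mathrm{eff}})$. Then
\[
\kap_\lambda(\mathbf{C})-\kap_\lambda(\mathbf{C}_{\mathrm{eff}})
=\frac{(a+\lambda)(b'+\lambda)-(a'+\lambda)(b+\lambda)}{(b+\lambda)(b'+\lambda)}
=\frac{ab'-a'b+\lambda\,(a-a'+b'-b)}{b\,b'\,(1+\lambda/b)(1+\lambda/b')}.
\]
The denominator already has the form appearing in the claim. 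It therefore suffices to show two things: the numerator's $\lambda$-coefficient is nonnegative, and $ab'-a'b\ge (s_{\max}-1)(1-\mathcal{O}(\varepsilon))\,b'$. Dividing by $b\,b'$ then produces exactly the factor $\lmin(\mathbf{C})^{-1}$.

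\textbf{Step 1 (interlacing).} $\mathbf{C}_{\mathrm{eff}}$ is the principal submatrix of $\mathbf{C}$ on the retained representatives. Cauchy interlacing gives $a'\le a$. It also gives $b\le b'$, provided the pseudo-minimum $\lmin$ does not discard an eigenvalue of $\mathbf{C}_{\mathrm{eff}}$. This holds because, by \Cref{prop:spectral_full}(ii) and $\mu<1/(G-1)$, we have $b'\ge 1-(G-1)\mu-\mathcal{O}(\varepsilon)>\epsilon_0$ for small $\varepsilon$. These two inequalities make the $\lambda$-coefficient $a-a'+b'-b\ge 0$. They also give $ab'-a'b\ge (a-a')\,b'$.

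\textbf{Step 2 (eigenvalue gap).} It remains to show $a-a'\ge (s_{\max}-1)(1-\mathcal{O}(\varepsilon))$. \Cref{lem:lmax} gives $a\ge s_{\max}(1-\mathcal{O}(\varepsilon))$, and Gershgorin on $\mathbf{C}_{\mathrm{eff}}$ (as in \Cref{prop:spectral_full}(ii)) gives $a'\le 1+(G-1)\mu+\mathcal{O}(\varepsilon)$. Combining these yields only $a-a'\ge s_{\max}-1-(G-1)\mu-\mathcal{O}(\varepsilon)$, which carries an extra $(G-1)\mu$ loss.

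\textbf{Main obstacle: removing the $(G-1)\mu$ loss.} To do this we would not bound $a$ and $a'$ separately, but compare them through a Rayleigh quotient. Let $\mathbf{v}$ be a unit top eigenvector of $\mathbf{C}_{\mathrm{eff}}$. Lift it to $\R^K$ by spreading the entry $v_{g^\star}$ on the largest cluster uniformly over $\mathcal{K}_{g^\star}$, with weights chosen so the lifted vector is still a unit vector. By \Cref{lem:block}, the block $\mathbf{1}\mathbf{1}^{\!\top}$ on $\mathcal{K}_{g^\star}$ then contributes roughly $s_{\max}v_{g^\star}^2$ instead of $v_{g^\star}^2$. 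Alternatively, we would mix this lifted vector with the uniform cluster vector used in \Cref{lem:lmax}. Either route should give $a\ge a'+(s_{\max}-1)(1-\mathcal{O}(\varepsilon))\cdot c$, with $c$ close to $1$. If the constant $c$ cannot be made $1-\mathcal{O}(\varepsilon)$, the fallback is to absorb the $(G-1)\mu$ term into the $(1-\mathcal{O}(\varepsilon))$ factor under the incoherence assumption, i.e.\ to read $\mu$ as small. With either resolution, substituting into the expression above gives the displayed inequality. Strict reduction for $s_{\max}\ge 2$ is then immediate, since the right-hand side is positive.
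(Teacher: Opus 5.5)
\paragraph{The gap is in Step~2.} The paper states this corollary without proof, so the question is whether your argument stands on its own, and it does not. The Step~2 target $\lmax(\mathbf{C})-\lmax(\mathbf{C}_{\mathrm{eff}})\ge(s_{\max}-1)(1-\mathcal{O}(\varepsilon))$ is false under \Cref{ass:hsic_cluster}, which only gives $\mu<1/(G-1)$. No lifted Rayleigh quotient can prove it, and your fallback of reading $\mu$ as small adds a hypothesis the statement does not have.

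Take $K=3$, clusters $\{1,2\}$ and $\{3\}$, $C_{12}=1-\varepsilon$, $C_{13}=C_{23}=\mu=0.1$. Then $\mathbf{C}_{\mathrm{eff}}$ has eigenvalues $1.1$ and $0.9$. The matrix $\mathbf{C}$ has eigenvalue $\varepsilon$ (eigenvector $(1,-1,0)$) plus the eigenvalues of the $2\times 2$ matrix with rows $(2-\varepsilon,\,0.1\sqrt{2})$ and $(0.1\sqrt{2},\,1)$. The larger of these tends to $(3+\sqrt{1.08})/2\approx 2.020$, so $a-a'\approx 0.92<1=s_{\max}-1$.

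The loss is self-inflicted. The step $ab'-a'b\ge(a-a')b'$ discards $a'(b'-b)$, which is of the same order as $a'b'$. In the example, $(a-a')b'\approx 0.83$ falls short of the required $(s_{\max}-1)b'=0.9$, while $ab'-a'b\approx 1.82$ clears it easily.

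The repair uses the smallness of $b$ rather than an eigenvalue gap. Test $\mathbf{C}$ on $(\mathbf{e}_k-\mathbf{e}_{k'})/\sqrt{2}$ for two members of the largest cluster. Since $C_{kk}=1$ and $C_{kk'}=1-\mathcal{O}(\varepsilon)$ by \Cref{lem:block}, this gives $\lambda_{\min}(\mathbf{C})\le\mathcal{O}(\varepsilon)$. If $b=\lambda_{\min}(\mathbf{C})$, combine \Cref{lem:lmax}, $a'\le 1+(G-1)\mu+\mathcal{O}(\varepsilon)<2+\mathcal{O}(\varepsilon)$ and $b'\ge 1-(G-1)\mu-\mathcal{O}(\varepsilon)$. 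These give $ab'-a'b\ge s_{\max}(1-\mathcal{O}(\varepsilon))b'-\mathcal{O}(\varepsilon)\ge(s_{\max}-1)(1-\mathcal{O}(\varepsilon))b'$ once $\varepsilon$ is small relative to the incoherence gap $1-(G-1)\mu$. Your nonnegative $\lambda$-coefficient then finishes the proof.

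\paragraph{The convention for $\lmin(\mathbf{C})$ must be fixed.} Both that repair and your Step~1 need $b=\lambda_{\min}(\mathbf{C})$. Your justification of $b\le b'$ checks the tolerance on the wrong matrix. What matters is whether $\epsilon_0$ discards eigenvalues of $\mathbf{C}$, namely the $K-G$ near-zero within-cluster eigenvalues, and at $\varepsilon=0$ it must.

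In that case interlacing says nothing about $b$, and the inequality actually reverses. At $\varepsilon=0$ one has $\mathbf{C}=\mathbf{Z}\mathbf{C}_{\mathrm{eff}}\mathbf{Z}^{\top}$, where $\mathbf{Z}$ is the cluster-membership matrix. So the nonzero eigenvalues of $\mathbf{C}$ are those of $\mathbf{D}^{1/2}\mathbf{C}_{\mathrm{eff}}\mathbf{D}^{1/2}$ with $\mathbf{D}=\diag(s_1,\dots,s_G)\succeq\mathbf{I}$, and the smallest of them is at least $\lambda_{\min}(\mathbf{C}_{\mathrm{eff}})$.

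In the example this gives $b=(3-\sqrt{1.08})/2\approx 0.980>b'=0.9$, and the corollary itself fails. The left-hand side equals $(0.739+0.839\lambda)/((b+\lambda)(b'+\lambda))$ and the right-hand side equals $0.9/((b+\lambda)(b'+\lambda))$, so the claim is false for $\lambda<0.19$.

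A correct proof must therefore state the convention $\epsilon_0<\lambda_{\min}(\mathbf{C})$ explicitly, i.e.\ $\varepsilon>0$ with the near-duplicate directions kept in $\lmin(\mathbf{C})$. This is also the convention \Cref{lem:lmin} needs: the same example with the zero eigenvalue discarded violates it, since $0.980>1-(G-1)\mu=0.9$.
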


\section{Spectrum-Adaptive Ridge Penalty}
\label{sec:lambda_theory}

\subsection{Signal-noise decomposition}

Under \Cref{ass:noise}, the empirical Gram decomposes as
\begin{equation}
\Chat=\underbrace{\mathbf{C}_{\mathrm{sig}}}_{\text{low-rank signal}}
+\underbrace{\mathbf{N}}_{\text{isotropic noise}},
\qquad \E\mathbf{N}=\tfrac{\sigma^{2}}{N}\mathbf{I}_K.
\label{eq:signal_noise}
\end{equation}
The Marchenko-Pastur theory~\cite{marchenkopastur1967} shows that, for
$K/N\!\to\!\gamma\!\in\!(0,1)$, the eigenvalues of $\mathbf{N}$
concentrate on
$[\sigma^{2}(1-\sqrt{\gamma})^{2},\,\sigma^{2}(1+\sqrt{\gamma})^{2}]$ at unit scale, with the empirical spectral measure converging to the MP distribution.

\begin{definition}[Spectral noise threshold]
\label{def:tausp}
For the empirical covariance matrix $\Chat$ with $K/N\!\to\!\gamma\!\in\!(0,1)$, the noise bulk upper edge is
$\tau_{\mathrm{sp}}=\sigma^{2}(1+\sqrt{\gamma})^{2}$.
In practice, we estimate $\sigma^{2}$ from the median of the smallest eigenvalues or via the Marchenko-Pastur equation.
\end{definition}

\begin{definition}[Gram SNR]
\label{def:snr}
$\SNR(\Chat)=
\sum_{i:\lambda_i>\tau_{\mathrm{sp}}}\lambda_i
\big/
\sum_{i:\lambda_i\le\tau_{\mathrm{sp}}}\lambda_i$.
\end{definition}

\subsection{Closed-form optimal penalty}

\begin{proposition}[Optimal Ridge penalty]
\label{prop:lambda_star_supp}
Under \Cref{ass:hsic_cluster,ass:noise},
\begin{equation}
\lambda^{\star}\;=\;\frac{\lmax(\Chat)}{\SNR(\Chat)},
\qquad
\lambda^{\star}\in[\tau_{\mathrm{sp}},\,\lmax(\mathbf{C}_{\mathrm{sig}})],
\label{eq:lambda_star_supp}
\end{equation}
up to $1+o(1)$ as $N\!\to\!\infty$.
\end{proposition}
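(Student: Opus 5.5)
I would prove \Cref{prop:lambda_star_supp} by writing the expected excess risk of the ridge estimator $\wlam=(\Chat+\lambda\mathbf{I})^{-1}\tfrac1N\Poof^{\!\top}\mathbf{y}$ in the eigenbasis of $\Chat$ and optimizing it over $\lambda$. Write $\Chat=\sum_i\lambda_i\mathbf{u}_i\mathbf{u}_i^{\!\top}$ and let $\beta_i=\inner{\mathbf{w}^{\star}}{\mathbf{u}_i}$ be the coordinates of the oracle meta-weights. Conditional on the design and under \Cref{ass:noise}, the standard decomposition gives
\[
\mathcal{R}(\lambda)=\sum_i\frac{\lambda^{2}\lambda_i\beta_i^{2}}{(\lambda_i+\lambda)^{2}}+\frac{\sigma^{2}}{N}\sum_i\frac{\lambda_i^{2}}{(\lambda_i+\lambda)^{2}} .
\]
The first sum is the bias and the second the variance. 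The aim is to show that the minimizer of this expression agrees, to first order, with the ratio in \eqref{eq:lambda_star_supp}.

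\textbf{Step 1: split the spectrum.} Using the decomposition \eqref{eq:signal_noise} and \Cref{def:tausp}, I partition the indices into a signal set $\mathcal{I}_s=\{i:\lambda_i>\tau_{\mathrm{sp}}\}$ and a bulk set $\mathcal{I}_n$. I then invoke two results. Weyl's inequality, together with the Marchenko--Pastur concentration of $\mathbf{N}$, shows that $|\mathcal{I}_s|\to\mathrm{rank}(\mathbf{C}_{\mathrm{sig}})$ and $\lambda_i=\lambda_i(\mathbf{C}_{\mathrm{sig}})(1+o(1))$ on $\mathcal{I}_s$. Under \Cref{ass:hsic_cluster}, the block structure of \Cref{lem:block} together with \Cref{lem:lmax} bounds the number of signal eigenvalues by $G$ and identifies $\lmax(\Chat)\approx\lmax(\mathbf{C}_{\mathrm{sig}})$. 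On $\mathcal{I}_n$ I impose a noise-only structural condition, namely that $\mathbf{w}^{\star}$ lies essentially in the signal span, so that $\beta_i\approx0$ there. With this, the bias is carried by $\mathcal{I}_s$ and the bulk contributes mostly variance.

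\textbf{Step 2: first-order condition.} Set $\partial_\lambda\mathcal{R}=0$ and approximate each side by aggregated spectral mass. On $\mathcal{I}_s$, with $\lambda\ll\lambda_i$, the bias derivative is approximately $2\lambda\sum_{\mathcal{I}_s}\beta_i^{2}/\lambda_i$. On $\mathcal{I}_n$ the variance derivative is governed by $\sum_{\mathcal{I}_n}\lambda_i$. To replace $\sum\beta_i^{2}/\lambda_i$ by something computable, I normalize the signal energy, as the unit-diagonal normalization of $\mathbf{C}$ permits, in the form $\sum_{\mathcal{I}_s}\beta_i^{2}\lambda_i\asymp\sum_{\mathcal{I}_s}\lambda_i/\lmax$. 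Balancing the two sides then yields
\[
\lambda\approx\lmax\cdot\frac{\sum_{\mathcal{I}_n}\lambda_i}{\sum_{\mathcal{I}_s}\lambda_i}=\frac{\lmax(\Chat)}{\SNR(\Chat)},
\]
with a $1+o(1)$ error coming from the Step 1 approximations.

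\textbf{Step 3: localization.} To place $\lambda^{\star}$ in $[\tau_{\mathrm{sp}},\lmax(\mathbf{C}_{\mathrm{sig}})]$, I check that $\mathcal{R}$ is decreasing at $\lambda=\tau_{\mathrm{sp}}$, because there the bulk variance dominates, and increasing at $\lambda=\lmax(\mathbf{C}_{\mathrm{sig}})$, because there the bias on $\mathcal{I}_s$ dominates. By continuity this traps a minimizer in the interval.

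\textbf{Main obstacle.} I expect Step 2 to be the hardest part. The true first-order condition depends on the unknown $\beta_i$ and on the individual $\lambda_i$, not only on the two aggregated sums, so the closed form holds only under an extra homogeneity condition. That condition would be an eigenvalue-weighted isotropic prior on $\mathbf{w}^{\star}$ within the signal span, or approximately equal signal eigenvalues, which is what the $G$-cluster structure of \Cref{ass:hsic_cluster} suggests. My first attempt is to make this condition explicit: take a Bayesian average over $\mathbf{w}^{\star}$ with $\E\beta_i^{2}\propto1/\lmax$, and then show that the remaining heterogeneity contributes only the $1+o(1)$ factor as $N\to\infty$. Without such a condition the formula should be read as a rate-level heuristic rather than an exact minimizer.
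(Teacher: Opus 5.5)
\textbf{Assessment: genuine gap in Step~2 (and in Step~3).} Your route is the paper's route: diagonalize $\Chat$, write the ridge bias--variance decomposition in the eigenbasis, split at $\tau_{\mathrm{sp}}$, and solve $\partial_\lambda\mathcal{R}=0$ asymptotically. The paper bridges your Step~2 by assertion. It posits $\beta_i^2=\Theta(\lambda_i)$ on the signal eigenvalues and $o(\tau_{\mathrm{sp}})$ on the bulk, attributes this to \Cref{ass:hsic_cluster}, and concludes only $\lambda^{\star}\propto\lmax(\Chat)/\SNR(\Chat)$. So you are not missing an idea the paper supplies. But Step~2 fails, and your proposed fix cannot close it.

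\textbf{Step 2.} For your risk the derivative is exactly
\[
\partial_\lambda\mathcal{R}(\lambda)=2\sum_i\frac{\lambda_i^{2}\bigl(\lambda\beta_i^{2}-\sigma^{2}/N\bigr)}{(\lambda_i+\lambda)^{3}} .
\]
If $\beta_i^2\equiv\alpha^2$, every summand vanishes at $\lambda=\sigma^2/(N\alpha^2)$, so that is the minimizer whatever the spectrum. Your choice $\alpha^2=\kappa/\lmax$ gives $\sigma^2\lmax/(N\kappa)$. This equals $\lmax/\SNR$ only if $\kappa=\sigma^2\SNR(\Chat)/N$, i.e.\ only by assuming the conclusion.

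Setting $\beta_i=0$ on $\mathcal{I}_n$ only adds the negative terms $-\tfrac{2\sigma^2}{N}\sum_{\mathcal{I}_n}\lambda_i^2/(\lambda_i+\lambda)^3$. For $\lambda\ge\tau_{\mathrm{sp}}$ these are of order $\tfrac{\sigma^2}{N}\lambda^{-3}\sum_{\mathcal{I}_n}\lambda_i^2$, not $\sum_{\mathcal{I}_n}\lambda_i$. You also dropped the signal-block variance term $\approx\tfrac{2\sigma^2}{N}\sum_{\mathcal{I}_s}\lambda_i^{-1}$, which is exactly what fixes the scale $\sigma^2/(N\alpha^2)$. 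The sum $\sum_{\mathcal{I}_s}\lambda_i$ never appears. Your normalization of $\sum_{\mathcal{I}_s}\beta_i^2\lambda_i$ is not the quantity the condition needs, which is $\sum_{\mathcal{I}_s}\beta_i^2/\lambda_i$.

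More fundamentally, neither \Cref{ass:hsic_cluster} nor \Cref{ass:noise} constrains $\mathbf{w}^{\star}$. Replacing $\mathbf{w}^{\star}$ by $c\,\mathbf{w}^{\star}$ leaves $\Chat$, and hence $\lmax(\Chat)/\SNR(\Chat)$, unchanged. Yet the sign pattern above shows the minimizer tends to $0$ as $c\to\infty$ and to $\infty$ as $c\to0$. So no homogeneity condition on the \emph{shape} of $\boldsymbol{\beta}$ can give the stated formula up to $1+o(1)$. You would need an extra hypothesis tying $N\norm{\mathbf{w}^{\star}}_2^2/\sigma^2$ to $\SNR(\Chat)$.

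\textbf{Step 3.} Localization fails for a related reason. Every variance summand carries the factor $\sigma^2/N$. At $\lambda=\tau_{\mathrm{sp}}=\sigma^2(1+\sqrt{\gamma})^2$, the signal bias summands carry $\tau_{\mathrm{sp}}\beta_i^2$ at unit scale. So your claim that the bulk variance dominates there has no support. In the isotropic example, the minimizer $\sigma^2/(N\alpha^2)$ falls below $\tau_{\mathrm{sp}}$ as soon as $N\alpha^2(1+\sqrt{\gamma})^2>1$, which is outside the claimed interval. This reflects a scale clash between $\E\mathbf{N}=\sigma^2\mathbf{I}_K/N$ in \eqref{eq:signal_noise} and the unit-scale edge in \Cref{def:tausp}. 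The paper's one-line ``the interval bound follows from \eqref{eq:signal_noise}'' does not resolve it either.

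Your closing caveat is the accurate assessment, for your argument and the paper's alike: as stated, the result is a rate-level heuristic. A correct version would posit an explicit model for $\boldsymbol{\beta}$, including its scale relative to $\sigma^2/N$. It would then report whatever minimizer the first-order condition actually gives under that model.
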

\begin{proof}
Diagonalize $\Chat=\mathbf{U}\Lambda\mathbf{U}^{\!\top}$ with $\Lambda=\diag(\lambda_1,\ldots,\lambda_K)$. The expected ridge excess risk admits the standard bias-variance decomposition (\eg,~\cite[\S 7.3]{mohri2018foundations})
\[\mathcal{R}_\lambda(\mathbf{w})
=\underbrace{\sum_{i}\frac{\lambda^{2}}{(\lambda_i+\lambda)^{2}}\beta_i^{2}}_{\text{bias}^{2}}
+\underbrace{\sum_{i}\frac{\sigma^{2}\lambda_i}{N(\lambda_i+\lambda)^{2}}}_{\text{variance}},\]
with $\boldsymbol{\beta}=\mathbf{U}^{\!\top}\mathbf{w}^{\star}$.
Partition the spectrum at $\tau_{\mathrm{sp}}$. \Cref{ass:hsic_cluster} implies $\beta_i^{2}=\Theta(\lambda_i)$ on the signal eigenvalues $\lambda_i>\tau_{\mathrm{sp}}$ and $\beta_i^{2}=o(\tau_{\mathrm{sp}})$ on the noise eigenvalues. Setting $\partial_\lambda\mathcal{R}_\lambda=0$ and solving asymptotically yields $\lambda^{\star}\propto\lmax(\Chat)/\SNR(\Chat)$ up to $1+o(1)$. The interval bound follows from \eqref{eq:signal_noise}.
\end{proof}

\begin{corollary}[CV-search elimination]
\label{cor:cv_elim}
Replacing the inner-CV search over $\lambda\!\in\!\Lambda$ with \eqref{eq:lambda_star_supp} preserves the excess-risk bound of \Cref{thm:pacbayes} up to $1+o(1)$.
\end{corollary}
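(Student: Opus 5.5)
The statement to prove is \Cref{cor:cv_elim}: replacing inner-CV selection of $\lambda$ by the closed form $\lambda^{\star}$ of \Cref{prop:lambda_star_supp} preserves the excess-risk bound of \Cref{thm:pacbayes} up to a factor $1+o(1)$. The plan is to check that $\lambda$ enters the right-hand side of \eqref{eq:pacbayes} only through smooth quantities, and then transfer the $1+o(1)$ accuracy of $\lambda^{\star}$ to each of them. Concretely, $\lambda$ appears in three places: the empirical Gibbs risk $\widehat{\mathcal{L}}(Q)$, through $\wlam$ and the Laplace covariance $\Sigma_Q=(\Chat+\lambda\mathbf{I})^{-1}$ up to scaling; the term $\KL(Q\|P)$, through the prior precision; and possibly $\Phi$. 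First I will observe that $\Phi(G,\mu,\varepsilon)=\mathcal{O}(\sqrt{(G+d_{\mathcal{A}}+p_\theta)/N})$ does not depend on $\lambda$, so that term carries over unchanged.

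\textbf{Step 1 (empirical risk term).} Write $\widehat{\mathcal{L}}(Q)$ through the ridge bias--variance decomposition $\mathcal{R}_\lambda$ used in the proof of \Cref{prop:lambda_star_supp}. Let $\lambda_{\mathrm{CV}}$ be the CV-selected value, assumed to be the minimizer of $\mathcal{R}_\lambda$ over the grid $\Lambda$ up to grid resolution. Since $\lambda^{\star}=\lambda_{\mathrm{opt}}(1+o(1))$ and $\lambda\mapsto\mathcal{R}_\lambda$ is smooth with vanishing derivative at its minimizer, a second-order Taylor expansion gives $\mathcal{R}_{\lambda^{\star}}=\mathcal{R}_{\lambda_{\mathrm{opt}}}(1+o(1))$. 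I will control the curvature using the relative perturbation $|\partial_\lambda\log\mathcal{R}_\lambda|\cdot\lambda\le 2$, which holds because each summand has the form $\lambda^{a}/(\lambda_i+\lambda)^{2}$ with $a\in\{0,2\}$. The same relative-perturbation bound then yields $\mathcal{R}_{\lambda^{\star}}\le(1+o(1))\,\mathcal{R}_{\lambda_{\mathrm{CV}}}$ whenever $\lambda_{\mathrm{CV}}/\lambda_{\mathrm{opt}}\to1$.

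\textbf{Step 2 (KL term).} For Gaussians with $Q=\mathcal{N}(\wlam,\Sigma_Q)$ and $P=\mathcal{N}(\mathbf{0},\lambda^{-1}\mathbf{I})$,
\[
2\KL(Q\|P)=\lambda\,\tr\Sigma_Q+\lambda\norm{\wlam}_2^{2}-K-\log\det(\lambda\Sigma_Q).
\]
Using the eigenvalues of $\Chat$, each term is a sum of functions $h(\lambda)$ with $|\lambda h'(\lambda)|\le c\,|h(\lambda)|+c'$. Replacing $\lambda_{\mathrm{CV}}$ by $\lambda^{\star}=\lambda_{\mathrm{CV}}(1+o(1))$ therefore changes $\KL$ by a factor $1+o(1)$ plus an additive $o(K)$. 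After dividing by $2(N-1)$ and taking the square root, this additive term becomes $o(1)\cdot\sqrt{K/N}$. That is absorbed into the $1+o(1)$ factor, since the bound itself is of order at least $\sqrt{K/N}$ once $\KL\gtrsim K$ for a nondegenerate posterior.

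\textbf{Step 3 (assemble and main obstacle).} Combining Steps 1 and 2 with the unchanged $\Phi$ gives the result. The main obstacle is justifying $\lambda_{\mathrm{CV}}/\lambda^{\star}\to1$, i.e., that inner CV is itself consistent for the population-optimal $\lambda$. My first attempt is to invoke standard consistency of leave-fold-out risk estimates for ridge, which follow from the sub-Gaussian noise assumption \Cref{ass:noise} together with uniform convergence over a compact interval containing $[\tau_{\mathrm{sp}},\lmax(\mathbf{C}_{\mathrm{sig}})]$. Failing that, I will restate the corollary relative to the oracle $\lambda_{\mathrm{opt}}$, which dominates any CV choice. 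A secondary subtlety is that $P$ itself depends on $\lambda^{\star}$, which is computed from $\mathcal{D}$. I will handle this with a union bound over a $\sqrt{N}$-size geometric grid of priors; the $\log(2\sqrt{N}/\delta)$ term in \eqref{eq:pacbayes} already accommodates such a grid, and rounding $\lambda^{\star}$ to the grid costs only a $1+o(1)$ factor.
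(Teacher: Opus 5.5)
\textbf{The gap is the step you flag yourself, and it is the crux.} Your main route needs $\lambda_{\mathrm{CV}}/\lambda^{\star}\to 1$, and the justification you sketch cannot deliver it, for three reasons.
\begin{itemize}
\item Uniform convergence of fold-wise risk estimates gives at best that the \emph{risk} at $\lambda_{\mathrm{CV}}$ is near-optimal. It does not give convergence of the \emph{minimizer}.
\item The inner search runs over a fixed log-spaced grid $\Lambda$ (the paper uses $|\Lambda|=50$) whose resolution never shrinks. So $\lambda_{\mathrm{CV}}/\lambda_{\mathrm{opt}}$ generally stays bounded away from $1$.
\item $L$-fold CV with fixed $L$ fits on $N(1-1/L)$ points. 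In the proportional regime $K/N\to\gamma$ assumed by the Marchenko--Pastur analysis, it therefore tracks the optimal penalty for aspect ratio $\gamma L/(L-1)$, not for $\gamma$.
\end{itemize}
With $\lambda_{\mathrm{CV}}$ off by a constant factor, your relative-Lipschitz estimates only give agreement up to a constant, not $1+o(1)$.

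\textbf{The missing idea is that CV consistency is not needed.} The paper gives no separate proof; the corollary is meant to follow directly from \Cref{prop:lambda_star_supp}. Since $\lambda^{\star}$ minimizes the expected excess risk up to $1+o(1)$, for whatever $\lambda_{\mathrm{CV}}\in\Lambda$ the search returns,
\[
\mathcal{R}_{\lambda^{\star}}\le(1+o(1))\min_{\lambda}\mathcal{R}_{\lambda}\le(1+o(1))\,\mathcal{R}_{\lambda_{\mathrm{CV}}}.
\]
That is your fallback. It is not a weakening of the statement but the intended argument, so lead with it.

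Keep in mind that this domination is a statement about the expected excess risk, which is what $\lambda_{\mathrm{opt}}$ minimizes. It does not transfer term by term to $\widehat{\mathcal{L}}(Q)$ and $\KL(Q\|P)$, which have no reason to be minimized at $\lambda_{\mathrm{opt}}$; the plug-in training loss, for instance, is increasing in $\lambda$. Relatedly, Step~1 identifies $\widehat{\mathcal{L}}(Q)$ with the population decomposition $\mathcal{R}_\lambda$, but these are different objects. Your estimate $|\lambda\,\partial_\lambda\log h|\le 2$ does hold for the training-side quantities too (same eigen-expansion), so the perturbation bound survives. The ``vanishing derivative at the minimizer'' reasoning, however, does not apply to them.

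\textbf{The union bound over priors is mis-accounted.} The $\log(2\sqrt{N}/\delta)$ in \eqref{eq:pacbayes} is already spent on Maurer's moment bound for a single data-independent prior. A union bound over $J$ priors adds $\log J$ on top. A grid of size $\sqrt{N}$ therefore turns the term into $\log(2N/\delta)$, which is asymptotically twice as large, not a $1+o(1)$ change. Two fixes work:
\begin{itemize}
\item Use a geometric grid of ratio $1+1/\log N$ over a fixed bounded interval containing $\lambda^{\star}$. Then $J=\mathcal{O}(\log N)$, the overhead is $\log J=o(\log N)$, and rounding $\lambda^{\star}$ to the grid costs only $1+o(1)$ by your own relative-Lipschitz estimates.
\item Estimate $\lambda^{\star}$ on a held-out split, as the paper suggests just before \Cref{thm:pacbayes_supp}.
\end{itemize}

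\textbf{A smaller point in Step~2.} The dimension there is $d_{\mathrm{eff}}$, not $K$. The additive $o(d_{\mathrm{eff}})$ error in the KL is absorbed by $\Phi\gtrsim\sqrt{d_{\mathrm{eff}}/N}$, so you do not need the unjustified claim $\KL\gtrsim K$.
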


\section{Stability of Regularized Meta-Learning}
\label{sec:stability}

\subsection{Ridge stability under input perturbation}

\begin{proposition}[Lipschitz stability of Ridge]
\label{prop:ridge_stability}
Let $\wlam=(X^{\!\top}X+\lambda I)^{-1}X^{\!\top}y$ on
$X\!\in\!\R^{N\times d}$, $y\!\in\!\R^{N}$. For
$X'=X+\Delta X$, $y'=y+\Delta y$ with
$\norm{\Delta X}_{2}\le\frac{\lambda}{4\norm{X}_{2}}$,
\begin{equation}
\norm{\wlam(X',y')-\wlam(X,y)}_{2}
\le\frac{2\norm{X}_{2}}{\lambda}\norm{\Delta y}_{2}
+\frac{8\norm{X}_{2}\norm{y}_{2}}{\lambda^{2}}\norm{\Delta X}_{2}
+\frac{2\norm{y}_{2}}{\lambda}\norm{\Delta X}_{2}.
\label{eq:ridge_stab}
\end{equation}
\end{proposition}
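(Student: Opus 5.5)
The plan is a direct algebraic perturbation argument for the resolvent, with no probabilistic input. Write $A=X^{\top}X+\lambda I$ and $A'=X'^{\top}X'+\lambda I$, so that $\wlam=A^{-1}X^{\top}y$. Set $\delta=\norm{\Delta X}_2$. I would telescope the difference into three pieces, one per term of \eqref{eq:ridge_stab}:
\[
\wlam(X',y')-\wlam(X,y)
= A'^{-1}X'^{\top}\Delta y
\;+\;A'^{-1}\Delta X^{\top}y
\;+\;\bigl(A'^{-1}-A^{-1}\bigr)X^{\top}y .
\]
For the last piece I would use the resolvent identity $A'^{-1}-A^{-1}=A'^{-1}(A-A')A^{-1}$. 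This rewrites it as $-A'^{-1}(A'-A)\wlam$, where $A'-A=X^{\top}\Delta X+\Delta X^{\top}X+\Delta X^{\top}\Delta X$.

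The basic tools are three standard spectral facts, all read off from the SVD.
\begin{itemize}
\item $\norm{A^{-1}}_2\le 1/\lambda$ and $\norm{A'^{-1}}_2\le 1/\lambda$, since both matrices are $\succeq\lambda I$.
\item $\norm{A^{-1}X^{\top}}_2=\max_j s_j/(s_j^2+\lambda)\le\min\{\norm{X}_2/\lambda,\;1/(2\sqrt{\lambda})\}$, where $s_j$ are the singular values of $X$.
\item $\norm{X'}_2\le\norm{X}_2+\delta\le\norm{X}_2+\lambda/(4\norm{X}_2)$, by the hypothesis on $\Delta X$.
\end{itemize}

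The three pieces are then bounded in turn.
\begin{itemize}
\item The second piece is immediate: it is at most $\norm{y}_2\delta/\lambda$, comfortably inside the $2\norm{y}_2\delta/\lambda$ term.
\item For the third piece, $\norm{A'-A}_2\le 2\norm{X}_2\delta+\delta^2$ and $\norm{\wlam}_2\le\norm{X}_2\norm{y}_2/\lambda$. This gives a contribution of at most $(2\norm{X}_2\delta+\delta^2)\norm{X}_2\norm{y}_2/\lambda^2$.
\item The quadratic part $\delta^2$ is absorbed using $\delta\le\lambda/(4\norm{X}_2)$: it contributes at most $\tfrac14\norm{y}_2\delta/\lambda$, which again goes into the $\norm{y}_2\delta/\lambda$ budget. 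The leading part $2\norm{X}_2^2\norm{y}_2\delta/\lambda^2$ has to be matched against $8\norm{X}_2\norm{y}_2\delta/\lambda^2$.
\item The first piece is bounded by $\norm{A'^{-1}X'^{\top}}_2\norm{\Delta y}_2$, applying the SVD bound to $X'$ in place of $X$.
\end{itemize}

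I expect the main obstacle to be bookkeeping of the constants across regimes of $\lambda$ relative to $\norm{X}_2^2$, so I would split into cases $\lambda\le 4\norm{X}_2^2$ and $\lambda>4\norm{X}_2^2$.
\begin{itemize}
\item \emph{Case $\lambda\le 4\norm{X}_2^2$.} The hypothesis gives $\delta\le\norm{X}_2$, hence $\norm{X'}_2\le 2\norm{X}_2$. The first piece is then $\le 2\norm{X}_2\norm{\Delta y}_2/\lambda$, exactly as claimed.
\item \emph{Case $\lambda>4\norm{X}_2^2$.} Here I would instead use the $1/(2\sqrt{\lambda})$-type bound for $\norm{A'^{-1}X'^{\top}}_2$ and for $\norm{\wlam}_2$.
\end{itemize}

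The third piece's $\norm{X}_2^2$ versus $8\norm{X}_2$ mismatch is a units issue. The stated inequality is not scale-homogeneous unless $\norm{X}_2$ is normalized, for example $\norm{X}_2\le 4$ or $\norm{X}_2\lesssim 1$ under \Cref{ass:bounded} with normalized meta-features. I would make that normalization explicit. Alternatively, I would route $\norm{X}_2\norm{\wlam}_2$ through $\norm{X A^{-1}X^{\top}}_2\le 1$ so that the bound becomes $2\norm{y}_2\delta/\lambda$-type.

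If neither route closes the constants in the large-$\lambda$ regime, I would state the bound under the additional standing normalization rather than force it. In that regime the first term should genuinely be $\norm{\Delta y}_2/(2\sqrt{\lambda})$, and the displayed form may fail when $\norm{X}_2\to0$.
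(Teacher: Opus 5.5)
Your suspicion is right, and it can be made definitive: the displayed inequality is false as stated. Take $N=d=1$, $\lambda=1$, $X=\epsilon=0.1$, $\Delta X=1$, $y=0$, $\Delta y=1$.
\begin{itemize}
\item The hypothesis holds, since $\norm{\Delta X}_2=1\le\lambda/(4\epsilon)=2.5$.
\item The left-hand side is $1.1/2.21\approx0.50$, because $\wlam(X,y)=0$ and $\wlam(X',y')=1.1/2.21$.
\item The right-hand side is only $2\epsilon=0.2$.
\end{itemize}

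The paper's own proof breaks at exactly this point. Its term $T_2=(A')^{-1}(\Delta X)^{\top}y'$ involves $\norm{y'}_2\le\norm{y}_2+\norm{\Delta y}_2$. The resulting cross term $\norm{\Delta X}_2\norm{\Delta y}_2/\lambda$, which carries essentially all of the $0.50$ here, is silently dropped. Two smaller problems: its claim $\norm{\Delta A}_2\le\lambda/2$ ignores the $\norm{\Delta X}_2^2$ contribution, and the natural estimate of its $T_3$ yields a factor $\norm{X}_2^2$, not $8\norm{X}_2$. Your split, which routes the cross term into $A'^{-1}X'^{\top}\Delta y$, is the honest one, and it is why you can see the failure.

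The repair you favour does not work, though. A normalization such as $\norm{X}_2\le4$ leaves the counterexample untouched, since there $\norm{X}_2=0.1$. What is needed is either
\begin{itemize}
\item the extra hypothesis $\norm{\Delta X}_2\le\norm{X}_2$, which is automatic when $\lambda\le4\norm{X}_2^2$ (your Case 1), or
\item replacing the coefficient $2\norm{X}_2/\lambda$ by $\min\{(\norm{X}_2+\norm{\Delta X}_2)/\lambda,\,1/(2\sqrt{\lambda})\}$.
\end{itemize}

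Conversely, the middle-term ``units'' mismatch is not a real obstruction, and the $\Delta X$-dependence needs no normalization. However, your estimate of the third piece, $(2\norm{X}_2\delta+\delta^2)\norm{X}_2\norm{y}_2/\lambda^2$, cannot be matched to the right-hand side even where the inequality is true (e.g.\ $\lambda=1$, $\norm{X}_2=100$).

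The missing idea is to merge your second and third pieces through the residual $r=y-X\wlam=\lambda(XX^{\top}+\lambda I)^{-1}y$, which satisfies $\norm{r}_2\le\norm{y}_2$. Using $\lambda\wlam=X^{\top}r$, one gets the exact identity
\[
\wlam(X',y)-\wlam(X,y)=A'^{-1}\Delta X^{\top}r-A'^{-1}X'^{\top}\Delta X\,\wlam .
\]
With $\norm{A'^{-1}X'^{\top}}_2\le 1/(2\sqrt{\lambda})$ and $\norm{\wlam}_2\le\norm{y}_2/(2\sqrt{\lambda})$, this is at most $\tfrac54\norm{y}_2\norm{\Delta X}_2/\lambda$, with no condition on $\Delta X$ at all.

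Adding your first piece then gives the displayed bound whenever $\norm{\Delta X}_2\le\norm{X}_2$, without ever using the $8\norm{X}_2\norm{y}_2/\lambda^2$ term. Bounding $A'^{-1}\Delta X^{\top}y$ and $A'^{-1}\Delta X^{\top}X\wlam$ separately throws away precisely the cancellation that makes the $\Delta X$-part scale-correct.
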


\begin{proof}
Set $A=X^{\!\top}X+\lambda I$ and $A'=(X')^{\!\top}X'+\lambda I$. Then $\Delta A=X^{\!\top}\Delta X+(\Delta X)^{\!\top}X+(\Delta X)^{\!\top}\Delta X$ and $\norm{\Delta A}_{2}\le 2\norm{X}_{2}\norm{\Delta X}_{2}+\norm{\Delta X}_{2}^{2}\le\lambda/2$ under the hypothesis. Since $\norm{A^{-1}}_{2}\le 1/\lambda$ and
$\norm{\Delta A\,A^{-1}}_{2}\le 1/2$, the Neumann series $(A')^{-1}=A^{-1}\sum_{k\ge 0}(-\Delta A\,A^{-1})^{k}$ converges and $\norm{(A')^{-1}}_{2}\le 2/\lambda$. Decompose
\(
\wlam'-\wlam=T_1+T_2+T_3
\)
with $T_1=(A')^{-1}X^{\!\top}\Delta y$,
$T_2=(A')^{-1}(\Delta X)^{\!\top}y'$,
$T_3=[(A')^{-1}-A^{-1}]X^{\!\top}y$. Bounding each term and summing yields \eqref{eq:ridge_stab}.
\end{proof}

\begin{corollary}[Stability improvement from projection]
\label{cor:stability_proj}
Let $\Peff\!\in\!\R^{N\times G}$ be the post-CKA OOF matrix and $X_{\mathrm{eff}}$ the corresponding meta-design. By Eckart--Young--Mirsky, $\norm{X_{\mathrm{eff}}}_{2}\le\norm{X_{\mathrm{full}}}_{2}$, and the sensitivity in \Cref{prop:ridge_stability} scales with $\norm{X}_{2}/\lambda^{2}$, so CKA projection tightens the Lipschitz constant by at least $\sqrt{K/G}$.
\end{corollary}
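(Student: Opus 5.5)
The plan is to compare the two operator norms $\norm{X_{\mathrm{full}}}_{2}$ and $\norm{X_{\mathrm{eff}}}_{2}$ directly through the Gram spectra already controlled in \Cref{sec:spectral}. We then substitute the comparison into the sensitivity bound \eqref{eq:ridge_stab}.

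\textbf{Step 1: monotonicity.} I would replace the appeal to Eckart--Young--Mirsky with a cleaner fact. $\Peff$ is obtained from $\Poof$ by deleting columns, so $\Peff=\Poof\mathbf{S}$ for a column-selection matrix $\mathbf{S}$ with $\norm{\mathbf{S}}_{2}=1$. This gives $\norm{\Peff}_{2}\le\norm{\Poof}_{2}$ immediately. Equivalently, $\mathbf{C}_{\mathrm{eff}}$ is a principal submatrix of $\mathbf{C}$, so Cauchy interlacing gives $\lmax(\mathbf{C}_{\mathrm{eff}})\le\lmax(\mathbf{C})$. The same argument applies to the meta-design $X$: the gated aggregator block $g_\theta\odot\mathcal{A}$ of \eqref{eq:gate} is common to both designs. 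Its columns are bounded by \Cref{ass:bounded}, so it contributes at most an additive $\sqrt{N}B_x$-type term to both norms. I would first prove the claim for the prediction block and then absorb this shared block, assuming the prediction block dominates when $s_{\max}$ is large.

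\textbf{Step 2: quantitative gap.} The normalization gives $\norm{\Poof}_{2}^{2}=N\lmax(\mathbf{C})$.
\begin{itemize}
\item \Cref{lem:lmax} gives $\norm{\Poof}_{2}^{2}\ge Ns_{\max}(1-\mathcal{O}(\varepsilon))$.
\item Part (ii) of \Cref{prop:spectral_full}, via Gershgorin on the $G\times G$ matrix $\mathbf{C}_{\mathrm{eff}}$, gives $\norm{\Peff}_{2}^{2}\le N(1+(G-1)\mu+\mathcal{O}(\varepsilon))$.
\item \Cref{ass:hsic_cluster} requires $(G-1)\mu<1$, so this is at most $2N(1+\mathcal{O}(\varepsilon))$.
\end{itemize}
Combining these with the pigeonhole bound $s_{\max}\ge K/G$ yields
\[
\frac{\norm{\Poof}_{2}}{\norm{\Peff}_{2}}\;\ge\;\sqrt{\frac{s_{\max}(1-\mathcal{O}(\varepsilon))}{1+(G-1)\mu+\mathcal{O}(\varepsilon)}}\;\ge\;c\sqrt{K/G},
\]
with $c$ an absolute constant (e.g.\ $1/\sqrt2$ up to $1-\mathcal{O}(\varepsilon)$).

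\textbf{Step 3: transfer to the Lipschitz constant.} Every term on the right-hand side of \eqref{eq:ridge_stab} carries at most a factor $\norm{X}_{2}$ in the terms involving $\Delta y$ and $\Delta X/\lambda^{2}$. I would hold $\lambda$, $\norm{y}_{2}$ and the perturbation sizes fixed, and also check that the hypothesis $\norm{\Delta X}_{2}\le\lambda/(4\norm{X}_{2})$ becomes weaker, not stronger, after projection. Then the $\norm{X}_{2}$-dependent part of the Lipschitz constant shrinks by the ratio from Step 2. The $\norm{y}_{2}/\lambda$ term does not depend on $X$, so the statement should be read as being about the $X$-dependent part of the constant; I would make that explicit.

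\textbf{Main obstacle.} I expect the hard part to be the constant together with the shared gated block. The ratio in Step 2 is genuinely of order $\sqrt{s_{\max}}$ only up to the factor $(1+(G-1)\mu)^{-1/2}\ge 2^{-1/2}$. So the claimed factor ``at least $\sqrt{K/G}$'' holds only up to an absolute constant, or exactly when $\mu\to 0$ and $\varepsilon\to 0$. The aggregator columns can also dilute the ratio unless $B_x$ is small relative to $\sqrt{s_{\max}}$. My first attempt would be to state the result with an explicit constant, $\sqrt{K/G}\,(1-\mathcal{O}(\varepsilon))/\sqrt{1+(G-1)\mu}$, restricted to the prediction block. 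If that is too weak, I would impose a mild condition bounding the aggregator-block norm by $o(\sqrt{Ns_{\max}})$.
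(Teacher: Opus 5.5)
Your route is genuinely different from the paper's. The paper's entire argument is the sentence inside the statement: the monotonicity $\norm{X_{\mathrm{eff}}}_2\le\norm{X_{\mathrm{full}}}_2$, attributed to Eckart--Young--Mirsky, followed directly by the factor $\sqrt{K/G}$. Your column-selection/interlacing argument is the correct justification of that monotonicity; Eckart--Young--Mirsky concerns truncated SVDs, not column deletion. Your Step~2 supplies what the paper's one-liner lacks, since monotonicity alone only gives a factor $\ge 1$.

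Your suspicion about the constant is also right, and you should not try to remove it. Take $G=4$: two identical predictors forming one cluster, independent of three singletons with pairwise correlation $r\ge 0$. Then $\lmax(\mathbf{C})=\max(2,1+2r)$ and $\lmax(\mathbf{C}_{\mathrm{eff}})=1+2r$. Even granting \Cref{lem:block} ($r\le\mu<1/3$), the squared ratio $2/(1+2r)$ tends to $6/5<5/4=K/G$ as $r\uparrow 1/3$.

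More seriously, both bounds in your Step~2 rest on \Cref{lem:block}, and its cross-cluster half is false. That half transfers a CKA bound $\mu$ to $|C_{kk'}|\le\mu+\mathcal{O}(\varepsilon)$. For a standardized jointly Gaussian pair with a common Gaussian bandwidth, population CKA is a power series in $\rho^{2}$ with nonnegative coefficients, no constant term, and value $1$ at $\rho^{2}=1$; hence $\CKA\le\rho^{2}$. So with jointly Gaussian columns the example above satisfies \Cref{ass:hsic_cluster} for every $r<1/\sqrt{3}$. For $r\ge 1/2$ the two norms coincide, so projection gives no gain at all.

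Scaling the example up kills even the constant-factor version. Take $m\approx n^{3/2}$ clusters, each of $s=\lceil 1+(n-1)r\rceil$ identical columns, independent of each other and of $n$ singletons with pairwise correlation $r$ just below $(G-1)^{-1/2}$. The norm ratio stays below $\sqrt{2}$ while $K/G\to\infty$, so not even $c\sqrt{K/G}$ follows from the CKA assumption. Your corrected statement becomes provable once you assume the Pearson-level block structure (the conclusion of \Cref{lem:block}) directly.

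Two further corrections. First, the gated aggregator block is \emph{not} common to the two designs. By \eqref{eq:agg} and \eqref{eq:gate}, $\mathcal{A}_i$ is computed over $k\in\mathcal{S}$ and $g_\theta$ reads $\Poof[i,\mathcal{S}]$. So $X_{\mathrm{eff}}$ is not a column submatrix of $X_{\mathrm{full}}$, and Step~1 does not transfer to the meta-design. What survives is $\norm{X_{\mathrm{full}}}_2\ge\norm{\Poof}_2$ and $\norm{X_{\mathrm{eff}}}_2\le\norm{\Peff}_2+\norm{A_{\mathrm{eff}}}_2$. Your extra condition on the aggregator block is therefore mandatory, not a fallback, and \Cref{ass:bounded} cannot supply it. Any $B_x$ valid for the full design is at least $\sqrt{K}$, because the prediction block alone has mean squared row norm $K$. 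The crude bound $\norm{A_{\mathrm{eff}}}_2\le\sqrt{N}B_x$ is then already at least $\norm{\Poof}_F\ge\norm{\Poof}_2$, which makes the resulting ratio bound vacuous.

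Second, holding $\lambda$ fixed, as you propose, must be written into the statement as a hypothesis. In the pipeline, $\lambda^{\star}=\lmax(\Chat)/\SNR(\Chat)$ is recomputed on the projected Gram, and its numerator drops by roughly $s_{\max}$. Unless $\SNR$ drops comparably, the $\norm{X}_2/\lambda$ and $\norm{X}_2/\lambda^{2}$ coefficients grow rather than shrink after projection.
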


\subsection{Fold-to-fold CV stability}

\begin{lemma}[CV stability]
\label{lem:cv_stability}
Under \Cref{ass:bounded}, for folds $\ell\!\ne\!\ell'$,
\[
\norm{\wlam^{(\ell)}-\wlam^{(\ell')}}_{2}
\le\frac{4B_x^{2}|F_\ell\triangle F_{\ell'}|}{\lambda N}\norm{\wlam^{(\ell)}}_{2}
+\mathcal{O}(N^{-1/2}).
\]
For $L$-fold CV with $|F_\ell|=N/L$, this is
$\mathcal{O}(1/(L\lambda))$.
\end{lemma}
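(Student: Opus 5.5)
The plan is to compare the two ridge solutions directly through their normal equations and reduce the difference to perturbation bounds controlled by \Cref{prop:ridge_stability}.

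\textbf{Setup.} For fold $\ell$, let $X_\ell$ and $y_\ell$ denote the meta-design and targets restricted to $\mathcal{D}\setminus F_\ell$. Write $A_\ell=\tfrac1N X_\ell^{\!\top}X_\ell+\lambda I$ and $b_\ell=\tfrac1N X_\ell^{\!\top}y_\ell$, so that $\wlam^{(\ell)}=A_\ell^{-1}b_\ell$. I normalize by $N$ so that $\lambda$ is on the same scale as the statement's $1/(\lambda N)$ factor.

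\textbf{Step 1: algebraic identity.} The starting point is
\[
\wlam^{(\ell)}-\wlam^{(\ell')}=A_{\ell'}^{-1}\bigl[(A_{\ell'}-A_\ell)\wlam^{(\ell)}+(b_\ell-b_{\ell'})\bigr].
\]
Then use $\norm{A_{\ell'}^{-1}}_2\le 1/\lambda$.

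\textbf{Step 2: design-matrix term.} The two training sets differ exactly on $F_\ell\triangle F_{\ell'}$, so
\[
A_{\ell'}-A_\ell=\tfrac1N\sum_{i\in F_\ell\triangle F_{\ell'}}\pm x_{\mathrm{meta},i}x_{\mathrm{meta},i}^{\!\top}.
\]
By \Cref{ass:bounded} each summand has operator norm at most $B_x^2$. This gives
\[
\norm{A_{\ell'}^{-1}(A_{\ell'}-A_\ell)\wlam^{(\ell)}}_2\le\frac{B_x^2|F_\ell\triangle F_{\ell'}|}{\lambda N}\norm{\wlam^{(\ell)}}_2,
\]
which is within the stated constant $4$.

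\textbf{Step 3: target term.} The same counting bounds $\norm{b_\ell-b_{\ell'}}_2\le B_xB_y|F_\ell\triangle F_{\ell'}|/N$. This is the delicate point, because the crude count is $\mathcal{O}(1/L)$, not $\mathcal{O}(N^{-1/2})$. I would therefore recentre:
\[
\tfrac1N\textstyle\sum_{i\in S}x_iy_i=\tfrac{|S|}{N}\,\E[xy]+(\text{fluctuation}),\qquad S=F_\ell\triangle F_{\ell'}.
\]
The fluctuation is $\mathcal{O}_P(\sqrt{|S|}/N)=\mathcal{O}(N^{-1/2})$ by a bounded-vector Hoeffding inequality, and this is what I assign to the $\mathcal{O}(N^{-1/2})$ remainder. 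The mean part $\tfrac{|S|}{N}\E[xy]$ appears with opposite signs on $F_\ell$ and $F_{\ell'}$, and for equal-size stratified folds the two contributions cancel to leading order. The same recentring applied to Step 2 is why the constant $4$ leaves slack: it absorbs the first-order terms together with the residual mismatch between $\E[xx^{\!\top}]\wlam^{(\ell)}$ and $\E[xy]$.

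\textbf{Step 4: specialization.} For $L$-fold CV, $|F_\ell\triangle F_{\ell'}|=2N/L$. Combining this with a uniform bound on $\norm{\wlam^{(\ell)}}_2$ gives $\mathcal{O}(1/(L\lambda))$.

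\textbf{Main obstacle.} The hardest step is justifying that the target discrepancy in Step 3 is only $\mathcal{O}(N^{-1/2})$ rather than $\mathcal{O}(B_xB_y/(L\lambda))$. This requires the cancellation argument across equal-size stratified folds, or else the $B_xB_y/(L\lambda)$ piece has to be absorbed into the leading term. Step 4 also needs a uniform bound on $\norm{\wlam^{(\ell)}}_2$. The natural choice is the ridge bound $\norm{\wlam^{(\ell)}}_2\le B_xB_y/\lambda$, but inserting it directly turns the leading term into $\mathcal{O}(1/(L\lambda^2))$. Reaching the stated $\mathcal{O}(1/(L\lambda))$ therefore needs $\norm{\wlam^{(\ell)}}_2=\mathcal{O}(1)$ independently of $\lambda$, for instance from a bounded population minimizer $\mathbf{w}^{\star}$, or else treating $\norm{\wlam^{(\ell)}}_2$ as a fixed constant.
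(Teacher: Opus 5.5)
\paragraph{Verdict.} The paper states this lemma without proof, so there is no argument of its own to compare against. Judged on its own terms, your Steps 1--2 are correct. The exact identity together with $\norm{A_{\ell'}^{-1}}_2\le 1/\lambda$ even gives constant $1$, and it avoids the smallness hypothesis of \Cref{prop:ridge_stability}.

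\paragraph{The gap is Step 3, and it cannot be closed.} Under \Cref{ass:bounded} alone, the first display is false. Take $d=1$, $y_i\equiv 1$, and $x_{\mathrm{meta},i}=+1$ on $F_\ell$, $-1$ on $F_{\ell'}$, $0$ on the other folds, so $B_x=B_y=1$. With your normalization, $\wlam^{(\ell)}=-1/(1+L\lambda)$ and $\wlam^{(\ell')}=1/(1+L\lambda)$. The left-hand side is therefore $2/(1+L\lambda)$, while the leading term on the right is $8/\bigl(L\lambda(1+L\lambda)\bigr)$. For $L\lambda>4$ (e.g.\ $L=5$, $\lambda=1$) the inequality fails by a margin independent of $N$, so no $\mathcal{O}(N^{-1/2})$ remainder can rescue it.

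The same example shows that your fallback of absorbing $B_xB_y/(L\lambda)$ into the leading term is impossible. That term is proportional to $\norm{\wlam^{(\ell)}}_2$, which can be far smaller than $B_y/B_x$.

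Your recentring argument needs hypotheses the lemma does not have: rows $(x_{\mathrm{meta},i},y_i)$ independent, with a distribution that does not depend on the fold, and equal fold sizes. The paper's pipeline does not supply this, because meta-features on $F_\ell$ and on $F_{\ell'}$ come from base models fit on different training sets, and the gate is fit on the data. Even granting these hypotheses, the conclusion is only a high-probability bound.

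\paragraph{What you can prove.} Steps 1--2 plus the crude bound on $b_\ell-b_{\ell'}$ give the deterministic estimate
\[
\norm{\wlam^{(\ell)}-\wlam^{(\ell')}}_2\le\frac{|F_\ell\triangle F_{\ell'}|}{\lambda N}\bigl(B_x^2\norm{\wlam^{(\ell)}}_2+B_xB_y\bigr)=\frac{2B_x\bigl(B_x\norm{\wlam^{(\ell)}}_2+B_y\bigr)}{L\lambda}.
\]
Given the same control of $\norm{\wlam^{(\ell)}}_2$ you need anyway, this already delivers the lemma's $\mathcal{O}(1/(L\lambda))$ conclusion, with no cancellation argument at all.

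If you do adopt the i.i.d.\ assumption, apply it consistently. Matrix Hoeffding for $\tfrac1N\bigl(\sum_{F_\ell}-\sum_{F_{\ell'}}\bigr)x_{\mathrm{meta},i}x_{\mathrm{meta},i}^{\top}$ is uniform in $\mathbf{w}$, so the dependence of $\wlam^{(\ell)}$ on $F_{\ell'}$ does not matter. It makes the design term $\mathcal{O}_P(N^{-1/2})$ as well. The whole difference is then $\mathcal{O}_P(N^{-1/2})$ for fixed $\lambda$, and your remark that the constant $4$ ``absorbs'' first-order terms corresponds to no actual estimate.

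\paragraph{Step 4.} Ridge gives $\norm{\wlam^{(\ell)}}_2\le\min\bigl(B_xB_y/\lambda,\;B_y/\sqrt{\lambda}\bigr)$. The first bound is at most $B_y/B_x$ once $\lambda\ge B_x^2$, so the $\mathcal{O}(1/(L\lambda))$ rate is automatic in that regime. Only small $\lambda$ needs the extra assumption on $\norm{\wlam^{(\ell)}}_2$ that you mention.
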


\section{Differentiable Meta-Feature Gate: Approximation Theory}
\label{sec:gate_theory}

\begin{definition}[Oracle aggregator]
\label{def:oracle}
$f^{\star}(x_i)=\E_{k\sim\mathbf{w}^{\star}}[\hat p^{(k)}(x_i)]
=\sum_k w_k^{\star}\hat p^{(k)}(x_i)$ where $\mathbf{w}^{\star}$ is
Bayes-optimal (\Cref{thm:blending_full}).
\end{definition}

\begin{proposition}[Universal approximation of the gate]
\label{prop:gate_ua}
Let $\mathcal{A}_i\!\in\!\R^{d_{\mathcal{A}}}$ be the aggregator vector \eqref{eq:agg}, and let $g_\theta$ be a 2-layer MLP with ReLU activations
and a sigmoid head. For any continuous
$h^{\star}\!:\R^{K_{\mathrm{eff}}}\!\times\!\R^{d_{\mathcal{A}}}\!\to\!\R$ and any $\eta>0$ on a compact set $\mathcal{P}_{\mathrm{cpt}}$, there exists $\theta$ with $\norm{\theta}_{2}\le R_\theta(\eta)$ such that
\[
\sup_{\mathbf{p}\in\mathcal{P}_{\mathrm{cpt}}}
\bigl|\inner{g_\theta(\mathbf{p})}{\mathcal{A}(\mathbf{p})}
-h^{\star}(\mathbf{p},\mathcal{A}(\mathbf{p}))\bigr|\le\eta.
\]
\end{proposition}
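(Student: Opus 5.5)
The plan is to reduce the statement to the classical universal approximation theorem for one-hidden-layer ReLU networks (Leshno--Lin--Pinkus--Schocken), applied to a suitably defined \emph{target gate}. The gate enters only through the bilinear form $\inner{g_\theta(\mathbf{p})}{\mathcal{A}(\mathbf{p})}$, so the task is to find a continuous map $c^{\star}:\mathcal{P}_{\mathrm{cpt}}\to(0,1)^{d_{\mathcal{A}}}$ with $\inner{c^{\star}(\mathbf{p})}{\mathcal{A}(\mathbf{p})}=h^{\star}(\mathbf{p},\mathcal{A}(\mathbf{p}))$, and then to approximate $c^{\star}$ uniformly by $g_\theta$.

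\textbf{Realisability condition.} The main obstacle appears before any approximation is done. Because $g_\theta\in[0,1]^{d_{\mathcal{A}}}$, the form is bounded in absolute value by $\norm{\mathcal{A}(\mathbf{p})}_{1}$. It also vanishes wherever $\mathcal{A}(\mathbf{p})=\mathbf{0}$, and it can only take values in the interval $[\sum_j\min(0,\mathcal{A}_j),\,\sum_j\max(0,\mathcal{A}_j)]$. The statement as written for \emph{arbitrary} continuous $h^{\star}$ is therefore false, since an $h^{\star}$ of large magnitude cannot be matched. I would make the implicit hypothesis explicit. On the compact set, $h^{\star}(\mathbf{p},\mathcal{A}(\mathbf{p}))$ should lie strictly inside this attainable interval, with a uniform margin $m>0$. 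The simplest sufficient version is that one coordinate $\mathcal{A}_{j_0}$ (for instance a constant bias feature, or $\mu_i$ bounded below on $\mathcal{P}_{\mathrm{cpt}}$) satisfies $\mathcal{A}_{j_0}\ge a_0>0$, and that $h^{\star}/\mathcal{A}_{j_0}\in[m,1-m]$. Under that version I set $c^{\star}_{j_0}=h^{\star}/\mathcal{A}_{j_0}$ and let every other coordinate of $c^{\star}$ be a constant, chosen so that it contributes nothing after a correction absorbed into $c^{\star}_{j_0}$. The map $c^{\star}$ is continuous, being a quotient of continuous functions with denominator bounded away from zero. In the general margin case, $c^{\star}$ can be constructed coordinatewise by a continuous selection: take the convex combination of the two extreme gates $\mathbf{1}\{\mathcal{A}_j>0\}$ and $\mathbf{1}\{\mathcal{A}_j<0\}$ and smooth the indicators.

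\textbf{Approximation step.} Since $c^{\star}$ takes values in $[m,1-m]^{d_{\mathcal{A}}}$, the map $u^{\star}=\mathrm{logit}\circ c^{\star}$ is continuous and bounded on $\mathcal{P}_{\mathrm{cpt}}$. By Leshno et al., there is a ReLU network with one hidden layer and a linear output layer, $u_\theta(\mathbf{p})=W_2\,\mathrm{ReLU}(W_1\mathbf{p}+b_1)+b_2$, satisfying $\sup|u_\theta-u^{\star}|_\infty\le\eta'$. Setting $g_\theta=\mathrm{sigmoid}\circ u_\theta$ and using that the sigmoid is $\tfrac14$-Lipschitz gives $\sup|g_\theta-c^{\star}|_\infty\le\eta'/4$. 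Cauchy--Schwarz (or Hölder) then yields
\[
\bigl|\inner{g_\theta}{\mathcal{A}}-h^{\star}\bigr|\le \tfrac{\eta'}{4}\,\sup_{\mathcal{P}_{\mathrm{cpt}}}\norm{\mathcal{A}(\mathbf{p})}_{1}.
\]
This supremum is finite because each aggregator in \eqref{eq:agg} is continuous on the compact set; the KL term requires the simplex entries to be bounded away from $0$ there. Choosing $\eta'=4\eta/\sup\norm{\mathcal{A}}_{1}$ gives the claim.

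\textbf{Norm bound.} The network delivered by Leshno et al.\ has finitely many weights, so $\norm{\theta}_{2}$ is some finite number depending on $\eta$; I define $R_\theta(\eta)$ to be it. If an explicit rate is wanted, I would instead invoke Barron-type bounds for the case where $u^{\star}$ is smooth; this is optional.

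In writing up, I would state the margin or realisability hypothesis as part of the proposition, since the proof uses it essentially.
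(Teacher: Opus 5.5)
Your proposal is correct, and it is more careful than the paper's argument.

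\textbf{Comparison with the paper's proof.} The paper cites Cybenko/Hornik to say that any continuous $h^{\star}$ on a compact set is approximable by ``the sigmoid-headed feed-forward family.'' It then remarks that $g_\theta\in[0,1]^{d}$ ``restricts the output to a convex region but preserves approximation capacity within it.'' That clause is your realisability condition, left informal. The paper never restricts $h^{\star}$, so its argument does not prove the statement as written, and the statement is indeed false: take $h^{\star}\equiv 1+\sup_{\mathcal{P}_{\mathrm{cpt}}}\norm{\mathcal{A}}_{1}$ and any $\eta<1$.

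The paper also has two further gaps:
\begin{itemize}
\item It does not address that the approximant is not a free network output but the bilinear form $\inner{g_\theta(\mathbf{p})}{\mathcal{A}(\mathbf{p})}$.
\item Its citations cover sigmoidal (Cybenko) or bounded (Hornik) hidden activations, not the ReLU layer in the statement.
\end{itemize}
Your route fills these gaps:
\begin{itemize}
\item reduce to a continuous target gate $c^{\star}$ with values in $[m,1-m]^{d_{\mathcal{A}}}$;
\item approximate $\mathrm{logit}\circ c^{\star}$ by a one-hidden-layer ReLU network via Leshno et al.;
\item transfer the error through the $\tfrac14$-Lipschitz sigmoid and H\"older's inequality.
\end{itemize}
This also makes the dependence on $\sup\norm{\mathcal{A}}_{1}$ explicit. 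The paper's version gains only brevity. Yours gives a valid proof, at the cost of a margin hypothesis that has to be added to the proposition.

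\textbf{Two details to tighten.}

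\emph{Simplest sufficient version.} The remaining gate coordinates cannot be $0$: the sigmoid never outputs $0$, and the logit would be infinite. With constants $c_j\in(0,1)$ for $j\ne j_0$, the exact target is $c^{\star}_{j_0}=\bigl(h^{\star}-\sum_{j\ne j_0}c_j\mathcal{A}_j\bigr)/\mathcal{A}_{j_0}$. The hypothesis $h^{\star}/\mathcal{A}_{j_0}\in[m,1-m]$ does not by itself place this in $(0,1)$. Choosing $c_j=\epsilon$ with $\epsilon\sup\norm{\mathcal{A}}_{1}/a_0\le m/2$ gives $c^{\star}_{j_0}\in[m/2,1-m/2]$, which suffices.

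\emph{General margin case.} The smoothing width of the sign indicators must be small relative to $m/d_{\mathcal{A}}$, so that the smoothed extreme gates still bracket $h^{\star}$ with a positive margin. Then the convex-combination coefficient stays in a compact subinterval of $(0,1)$. It is well defined because the margin forces $\norm{\mathcal{A}}_{1}\ge 2m$. With a symmetric smooth step, the resulting gate coordinates also stay in a compact subinterval of $(0,1)$.
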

\begin{proof}
The map $(\mathbf{p},\mathcal{A})\mapsto\inner{g_\theta(\mathbf{p})}{\mathcal{A}}$ is a sigmoidal-gated linear functional in $\mathcal{A}$. The universal
approximation theorem~\cite{cybenko1989,hornik1991} ensures that any continuous function $h^{\star}$ on a compact set is $\eta$-approximable by the sigmoid-headed feed-forward family. The sigmoid gate $g_\theta\!\in\![0,1]^{d}$ restricts the output to a convex region but preserves approximation capacity within it.
\end{proof}

\begin{corollary}[Strict generalization of the \method prototype]
\label{cor:gate_generalises}
Setting $g_\theta\equiv\mathbf{1}_{\{\mu,\sigma,m,r,\mu\sigma,r\sigma\}}$
recovers the hand-crafted \method features. Hence, the hypothesis class of \methodpp strictly contains that of our \method prototype, and $\Phi_{\methodpp}\le\Phi_{\method}$ in \Cref{thm:pacbayes}.
\end{corollary}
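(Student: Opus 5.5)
The plan has three parts: show that the \method feature map is reachable by \methodpp's gate, show that the containment is strict, and then compare the redundancy terms $\Phi$ of \Cref{thm:pacbayes}.

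\textbf{Step 1 (recovery).} A sigmoid head never outputs exactly $0$ or $1$, so I would argue by a limit. Take the 2-layer MLP in \eqref{eq:gate} with zero first-layer weights. The output then reduces to $\mathrm{sigmoid}(b)$ for a bias vector $b\in\R^{d_{\mathcal{A}}}$. Set $b_j=+t$ on the coordinates $\{\mu,\sigma,m,r,\mu\sigma,r\sigma\}$ and $b_j=-t$ on the rest, and let $t\to\infty$. Since $\norm{\mathcal{A}_i}\le B_x$ by \Cref{ass:bounded}, we have $g_\theta\odot\mathcal{A}_i\to\mathbf{1}_{\{\mu,\sigma,m,r,\mu\sigma,r\sigma\}}\odot\mathcal{A}_i$ uniformly, at rate $O(e^{-t})$. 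The ridge meta-learner on the resulting features is therefore the \method meta-learner. If the norm budget $R_\theta$ is binding, this shows the prototype lies in the closure of \methodpp's class (equivalently, the class with $\theta$ norm $R_\theta(\eta)$ approximates it to accuracy $\eta$). I would state the corollary in this closed-class sense.

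\textbf{Step 2 (strictness).} I would exhibit a predictor that \method cannot represent. The simplest is a ridge function using a coordinate absent from \method's feature set, such as $\mathrm{ent}_i$ or the KL term in \eqref{eq:agg}. On a compact $\mathcal{P}_{\mathrm{cpt}}$ where $\mathrm{ent}_i$ is not a linear combination of $\Poof[i,\mathcal{S}]$ and the six hand-crafted features, no prototype meta-learner equals it. Setting the gate to $1$ on that coordinate does produce it, and \Cref{prop:gate_ua} gives approximation of any continuous target. A genericity check on one explicit two-model, two-class configuration suffices to confirm the required linear independence.

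\textbf{Step 3 ($\Phi$ comparison).} I would apply \Cref{thm:pacbayes} to both pipelines and compare $\Phi=\mathcal{O}\big(\sqrt{(G+d_{\mathcal{A}}+p_\theta)/N}\big)$ term by term. Let $G_{\mathrm{CKA}}$ be the number of CKA clusters and $G_{\mathrm{Pear}}$ the number of Pearson clusters. By \Cref{prop:hsic_dominance_supp} and \Cref{rem:special_case}, every Pearson cluster is contained in a CKA cluster, because CKA also merges non-linearly equivalent members. Hence $G_{\mathrm{CKA}}\le G_{\mathrm{Pear}}$.

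\textbf{Main obstacle.} The difficulty is the $d_{\mathcal{A}}+p_\theta$ term, which is larger for \methodpp (12 aggregators and a $\le 15$K-parameter gate, against 6 fixed features). A naive covering bound would give the reverse inequality. My first attempt would measure the gate's capacity inside the covering argument of \Cref{sec:pacbayes} only through its effective contribution to the Laplace posterior. Concretely, $\KL(Q\|P)$ is evaluated at the learned $\theta$, and with the prior centered at the \method configuration from Step 1, the extra complexity vanishes when the gate collapses to the prototype. The comparison $\Phi_{\methodpp}\le\Phi_{\method}$ would then be proved as an inequality between the data-dependent bounds: the infimum over gates of \methodpp's bound is at most its value at the prototype gate, which equals \method's bound with $G_{\mathrm{Pear}}$ replaced by the smaller $G_{\mathrm{CKA}}$. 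If this does not go through, I would weaken the claim to this infimum form.
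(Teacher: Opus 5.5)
Your Steps 1 and 2 are sound. They are more careful than the paper, whose only argument is the recovery observation followed by ``Hence''. A sigmoid head cannot output an exact indicator, so your closure formulation is the right one. Note, though, that the containment holds only for a \emph{common} retained set $\mathcal{S}$: the gate reproduces the prototype's six features over whatever set it is fed, so you are comparing meta-feature maps, not the two full pipelines.

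The gap is Step 3, on two counts. First, $G_{\CKA}\le G_{\mathrm{Pearson}}$ does not follow from \Cref{prop:hsic_dominance_supp}. That result only compares the zero sets of the two measures (independence versus uncorrelatedness). It says nothing about clusters cut at thresholds such as $\CKA\ge 1-\varepsilon$. \Cref{rem:special_case} gives monotonicity only for jointly Gaussian predictions. The paper itself reports specialists with $\rho\approx 0.6$ but $\CKA<0.4$ (\Cref{sec:inat}), i.e.\ Pearson-redundant yet CKA-diverse. So that nesting would have to be an extra assumption.

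Second, and decisively, the inequality runs the wrong way, and your repair cannot reverse it. In \Cref{thm:pacbayes_supp}, $\Phi$ is a uniform Rademacher term for the whole hypothesis class. It involves neither $P$, nor $Q$, nor the learned $\theta$: the PAC-Bayes posterior is over ridge weights only, and the gate enters solely through $p_\theta$ in $\Phi$. Recentring a prior at the prototype gate therefore changes only the KL term.

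Evaluating the complexity ``at the prototype gate'' is also invalid, because the gate is selected from data. Covering that selection needs a union or structural-risk bound over gate subclasses, which adds a penalty rather than removing one. Hence your infimum fallback is not a bound on the learned predictor.

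More basically, your Steps 1--2 give $\mathcal{F}_{\method}\subsetneq\mathcal{F}_{\methodpp}$ on a common $\mathcal{S}$. That implies $\hat\Rademacher_N(\mathcal{F}_{\methodpp})\ge\hat\Rademacher_N(\mathcal{F}_{\method})$, so containment yields $\Phi_{\methodpp}\ge\Phi_{\method}$. On a common $\mathcal{S}$ the explicit formula gives strict reverse inequality outright, since $12+p_\theta>6$. The paper's ``Hence'' is therefore a non sequitur: the gap is in the statement itself, not only in your plan.

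Different retained sets do not rescue it. Each factor $1+(G-1)\mu$ lies in $[1,2)$ by the incoherence condition $\mu<1/(G-1)$. So, up to $\mathcal{O}(\varepsilon)$, the claimed inequality would force $G_{\CKA}+12+p_\theta<2(G_{\mathrm{Pearson}}+6)$, i.e.\ $p_\theta<2G_{\mathrm{Pearson}}-G_{\CKA}$. At the paper's $9$ versus $6$ models this means $p_\theta<12$. Yet any 2-layer MLP from $\R^{6}$ to $[0,1]^{12}$ has far more parameters (already $31$ at hidden width one). The last clause of the corollary thus fails under the paper's own formula for $\Phi$.

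What can honestly be proved is a comparison of the $G$-dependence alone, with $d_{\mathcal{A}}+p_\theta$ held equal across the two pipelines and $G_{\CKA}<G_{\mathrm{Pearson}}$ stated as an explicit hypothesis.
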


\paragraph{Capacity cost.}
The gate adds $p_\theta\!\le\!15{,}000$ parameters. Standard covering-number arguments for bounded-norm MLPs~\cite{bartlett2002rademacher} contribute
$\mathcal{O}(\sqrt{p_\theta/N})$ to Rademacher complexity. At ImageNet-1K scale ($N\!=\!128$K) this is $\approx 0.011$, negligible relative to the $\sqrt{G/N}$ leading term.

\section{PAC-Bayes Generalization Bound}
\label{sec:pacbayes}

\subsection{Rademacher complexity via effective dimension}

\begin{proposition}[Effective dimension reduction]
\label{prop:rademacher}
Let $\mathcal{F}=\{x\!\mapsto\!\inner{\mathbf{w}}{x}:\norm{\mathbf{w}}_{2}\le B\}$ on $\Xmeta\!\in\!\R^{N\times d_{\mathrm{eff}}}$ with
$d_{\mathrm{eff}}=G+d_{\mathcal{A}}+p_\theta$. Then $\hat\Rademacher_N(\mathcal{F})\le\frac{B}{\sqrt{N}}\sqrt{\tr(\Sighat_{\mathrm{eff}})}$ and
\[
\tr(\Sighat)-\tr(\Sighat_{\mathrm{eff}})
\;\ge\;\Omega(s_{\max}-1)-\mathcal{O}(\varepsilon).
\]
\end{proposition}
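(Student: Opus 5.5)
The proposition has two parts, and I would prove them separately. The first is the Rademacher bound for bounded-norm linear classes. Fix the sample $x_{\mathrm{meta},1},\dots,x_{\mathrm{meta},N}\in\R^{d_{\mathrm{eff}}}$ and Rademacher signs $\sigma_i$. By Cauchy--Schwarz,
\[
\sup_{\norm{\mathbf{w}}_2\le B}\tfrac1N\sum_i\sigma_i\inner{\mathbf{w}}{x_{\mathrm{meta},i}}=\tfrac{B}{N}\norm{\textstyle\sum_i\sigma_i x_{\mathrm{meta},i}}_2 .
\]
Jensen's inequality together with $\E\sigma_i\sigma_j=\delta_{ij}$ then gives
\[
\hat\Rademacher_N(\mathcal{F})\le\tfrac{B}{N}\Bigl(\textstyle\sum_i\norm{x_{\mathrm{meta},i}}_2^2\Bigr)^{1/2}=\tfrac{B}{\sqrt N}\sqrt{\tr(\Sighat_{\mathrm{eff}})},
\]
where $\Sighat_{\mathrm{eff}}=\tfrac1N\Xmeta^{\!\top}\Xmeta$ is the uncentered second-moment matrix. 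This step is routine. The only point to check is that $\Sighat$ is uncentered, or else that the trace bound is stated for centered features, which matches the normalization of \Cref{sec:notation}.

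For the trace gap, I would split the meta-design into its raw-prediction block and its gated-aggregator block, $x_{\mathrm{meta},i}=[\Poof[i,\mathcal{S}]\,\|\,g_\theta\odot\mathcal{A}_i]$, as in \eqref{eq:gate}. The trace is additive over these two blocks.

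The prediction block contributes $\tr(\mathbf{C})=K$ before projection and $\tr(\mathbf{C}_{\mathrm{eff}})=G$ after, because of the unit-diagonal normalization $C_{kk}=1$. The reduction is therefore $K-G=\sum_g(s_g-1)\ge s_{\max}-1$. Here I would use \Cref{lem:block} only to argue that the retained representatives keep diagonal entries $1\pm\mathcal{O}(\varepsilon)$. This is what produces the $\mathcal{O}(\varepsilon)$ slack, in case renormalization is performed before pruning.

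The aggregator block needs the statistics $\mu_i,\sigma_i,m_i,r_i,\dots$ to be computed over $\mathcal{S}$ rather than over $[K]$. I would bound the change in their squared norms using \Cref{lem:block}. Within a cluster the columns agree up to $\mathcal{O}(\sqrt{\varepsilon})$ in normalized $\ell_2$, so replacing a cluster by one representative changes each aggregator by $\mathcal{O}(\varepsilon)$ in mean square, using boundedness from \Cref{ass:bounded}. Since $g_\theta\in[0,1]$, this change carries through the gate without amplification. Combining the two blocks gives $\tr(\Sighat)-\tr(\Sighat_{\mathrm{eff}})\ge(s_{\max}-1)-\mathcal{O}(\varepsilon)$, which yields the $\Omega(s_{\max}-1)$ claim. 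One could also cite \Cref{lem:lmax}, since $\tr\mathbf{C}\ge\lmax(\mathbf{C})\ge s_{\max}(1-\mathcal{O}(\varepsilon))$, to get a bound directly from the inflated eigenvalue.

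I expect the main obstacle to be the aggregator block. Its statistics are not simple column subsets, so removing redundant members can in principle \emph{increase} some of them: the range and the standard deviation can grow when near-duplicates are removed, because the remaining members are more spread out. My first attempt would be to dominate the whole aggregator-block trace by a constant $c_{\mathcal{A}}B_x^2$, which is independent of $K$. That shows it cannot offset a gain of order $s_{\max}-1$ when $s_{\max}$ is large, and the residual is absorbed into the $\Omega(\cdot)$ constant. If that fails for small $s_{\max}$, I would restrict the claim to the prediction block. In either case the conclusion still feeds into $\Phi$ in \Cref{thm:pacbayes} through $d_{\mathrm{eff}}=G+d_{\mathcal{A}}+p_\theta$.
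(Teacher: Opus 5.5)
Your Rademacher step (Cauchy--Schwarz, then Jensen) and your count on the prediction block ($\tr\mathbf{C}=K$, $\tr\mathbf{C}_{\mathrm{eff}}=G$, hence $K-G=\sum_g(s_g-1)\ge s_{\max}-1$) are the same as in the paper's proof. The gap is in the aggregator block.

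Your claim that collapsing each cluster to one representative moves every aggregator by only $\mathcal{O}(\varepsilon)$ in mean square is false. The statistics in $\mathcal{A}_i$ are multiplicity-weighted, and deduplication changes the weight of cluster $g$ from $s_g/K$ to $1/G$. That is an $\mathcal{O}(1)$ effect even when $\varepsilon=0$. For instance, take $s\ge 2$ exact copies of $\mathbf{p}$ and one $\mathbf{q}$:
\begin{itemize}
\item the full pool gives $\mu_i=(sp_i+q_i)/(s+1)$, $m_i=p_i$ and $\sigma_i=\frac{\sqrt{s}}{s+1}|p_i-q_i|$;
\item the projected pool gives $\mu_i=m_i=(p_i+q_i)/2$ and $\sigma_i=\frac12|p_i-q_i|$.
\end{itemize}
The resulting trace change is driven by the cross-cluster spread $\mathbf{p}-\mathbf{q}$, not by $\varepsilon$. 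For $\sigma_i^{2}$, $\mu_i\sigma_i$ and $r_i\sigma_i$ it also depends on fourth moments that column normalization does not control. Its sign is not fixed: the mean and median shrink here while $\sigma_i$ grows.

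Your fallback does not close this gap. A bound $c$ on the aggregator-block change yields only $(s_{\max}-1)-c$, which is not $\Omega(s_{\max}-1)$ for small clusters such as $s_{\max}=2$. Moreover, \Cref{ass:bounded} cannot supply a $K$-independent $c$. Applied to the unprojected design, it forces $B_x^{2}\ge K$, because the $K$ unit-second-moment prediction columns alone have trace $K$.

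The missing idea is the normalization convention the paper's proof rests on. Every meta-feature coordinate, gated aggregators included, is norm-scaled to unit second moment (cf.\ the standardization statistics in \Cref{prop:leakage_free}). Hence $\tr(\Sighat)=K+d_{\mathcal{A}}$ and $\tr(\Sighat_{\mathrm{eff}})=G+d_{\mathcal{A}}+\mathcal{O}(\varepsilon)$. The aggregator block then contributes exactly $d_{\mathcal{A}}$ to both traces and cancels, however much the aggregator values move. The gap is $K-G-\mathcal{O}(\varepsilon)\ge s_{\max}-1-\mathcal{O}(\varepsilon)$ with no perturbation analysis at all. Without that convention, nothing in your argument shows that the aggregator-block change is dominated by $s_{\max}-1$. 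Retreating to the prediction block alone proves a weaker statement than the one claimed.
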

\begin{proof}
The bound $\hat\Rademacher_N$ follows from Jensen's inequality applied to the Rademacher expectation~\cite[Lem.~3.1]{bartlett2002rademacher}. Norm-scaling gives $[\Sighat]_{kk}=1$ for retained predictors, so $\tr(\Sighat)=K+d_{\mathcal{A}}$. After projection, $\tr(\Sighat_{\mathrm{eff}})=G+d_{\mathcal{A}}+\mathcal{O}(\varepsilon)$, yielding $\tr(\Sighat)-\tr(\Sighat_{\mathrm{eff}})=K-G-\mathcal{O}(\varepsilon) =\sum_g(s_g-1)-\mathcal{O}(\varepsilon)\ge s_{\max}-1-\mathcal{O}(\varepsilon)$.
\end{proof}

\subsection{The PAC-Bayes excess-risk bound}

We fix a prior $P=\mathcal{N}(\mathbf{0},(\lambda^{\star})^{-1}\mathbf{I})$ and posterior $Q=\mathcal{N}(\wlam,\Sigma_Q)$ with $\Sigma_Q$ given by the Laplace approximation of \Cref{sec:bayes_blend_supp}. Note that while $\lambda^{\star}$ is estimated from the data, the bound can be made fully rigorous by using a validation split for $\lambda^{\star}$ estimation; this yields the same asymptotic rate.

\begin{theorem}[PAC-Bayes bound for \methodpp---supplementary]
\label{thm:pacbayes_supp}
Under \Cref{ass:hsic_cluster,ass:bounded,ass:noise}, with probability
$\ge 1-\delta$ over $\mathcal{D}$,
\begin{equation}
\mathcal{L}(Q)\le\widehat{\mathcal{L}}(Q)
+\sqrt{\frac{\KL(Q\|P)+\log\!\tfrac{2\sqrt{N}}{\delta}}{2(N-1)}}
+\Phi(G,\mu,\varepsilon),
\label{eq:pacbayes_supp}
\end{equation}
where
$\Phi(G,\mu,\varepsilon)=\frac{4B}{\sqrt{N}}\sqrt{G+d_{\mathcal{A}}+p_\theta}
\cdot\sqrt{1+(G-1)\mu+\mathcal{O}(\varepsilon)}$.
Compared to the unprojected bound (with $G$ replaced by $K$), $\Phi$ is reduced by $\sqrt{K/G}\!\ge\!\sqrt{s_{\max}}$.
\end{theorem}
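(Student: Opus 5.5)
The plan is to split the gap $\mathcal{L}(Q)-\widehat{\mathcal{L}}(Q)$ into two pieces: a PAC-Bayes term for the ridge weights drawn from $Q$, and a uniform-convergence term for the data-dependent meta-feature map. The map $\mathbf{p}\mapsto x_{\mathrm{meta}}$ depends on the CKA projection $\mathcal{S}$ and on the gate parameters $\theta$, both learned from $\mathcal{D}$. The KL term can only control randomness in the weights, so these must be handled separately. We therefore write the loss as a function of $(\mathbf{w},\theta,\mathcal{S})$. For fixed $(\theta,\mathcal{S})$ we apply a PAC-Bayes inequality over $\mathbf{w}$, and then pay a uniform deviation over the class of admissible feature maps. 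Together with the linear meta-learner on top, that class is exactly the function class $\mathcal{F}$ of \Cref{prop:rademacher}, taken over $d_{\mathrm{eff}}=G+d_{\mathcal{A}}+p_\theta$ coordinates.

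\textbf{Step 1 (PAC-Bayes core).} Fix the feature map. By \Cref{ass:bounded} the squared loss is bounded by $(B_xB+B_y)^2$; we rescale it to $[0,1]$. Maurer's form of the McAllester bound \cite{maurer2004pacbayes} then gives, with probability $1-\delta/2$ and simultaneously for all $Q$, the square-root term $\sqrt{(\KL(Q\|P)+\log(2\sqrt N/\delta))/(2(N-1))}$. Here $\log(2\sqrt N/\delta)$ comes from the $2\sqrt N$ moment bound on $\E e^{2(N-1)\,\mathrm{kl}}$, with $\delta$ halved for the union with Step 2. The prior $P=\mathcal{N}(\mathbf{0},(\lambda^\star)^{-1}\mathbf{I})$ must be data-independent. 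Following the remark before the theorem, I would estimate $\lambda^\star$ on a held-out split, or union-bound over a geometric grid of $\lambda$ values that brackets $[\tau_{\mathrm{sp}},\lmax(\mathbf{C}_{\mathrm{sig}})]$ from \Cref{prop:lambda_star_supp}. The grid option costs only an additive $\log\log N$, which is absorbed into the constants. $\KL(Q\|P)$ between the two Gaussians is then left explicit, as in the statement.

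\textbf{Step 2 (redundancy term $\Phi$).} Next I bound $\sup_{\theta,\mathcal{S}}$ of the generalization gap of the learned feature map by standard symmetrization: $2\hat\Rademacher_N$ plus a McDiarmid deviation, and Talagrand contraction for the Lipschitz loss, together give the constant $4B$. \Cref{prop:rademacher} supplies $\hat\Rademacher_N(\mathcal{F})\le \frac{B}{\sqrt N}\sqrt{\tr(\Sighat_{\mathrm{eff}})}$. I then bound $\tr(\Sighat_{\mathrm{eff}})\le d_{\mathrm{eff}}\,\lmax(\Sighat_{\mathrm{eff}})$, and Gershgorin on $\mathbf{C}_{\mathrm{eff}}$ (as in \Cref{prop:spectral_full}(ii), via \Cref{lem:block}) gives $\lmax\le 1+(G-1)\mu+\mathcal{O}(\varepsilon)$ for the projected block. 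The gate coordinates contribute at most $p_\theta$ through the bounded-norm covering argument of \Cref{sec:gate_theory}. Multiplying these yields the stated $\Phi$. Because $\mathcal{S}$ is chosen using the data, I would take a union bound over the at most $2^K$ possible subsets, or alternatively condition on cluster recovery, which holds with high probability by \Cref{lem:hsic_conc} once $N\gtrsim (C_\kappa/\varepsilon)^2$. This costs only lower-order terms.

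\textbf{Step 3 (comparison).} Without projection the same computation runs with $K$ predictor columns in place of $G$. The leading factor becomes $\sqrt{K+\cdots}$ and the Gershgorin factor involves $s_{\max}$ (\Cref{lem:lmax}). The ratio of the predictor-dependent parts is therefore at least $\sqrt{K/G}\ge\sqrt{s_{\max}}$, since $K=\sum_g s_g$ and $K/G$ is the average cluster size, which I would use as the benchmark. The main obstacle is Step 2. The Rademacher bound in \Cref{prop:rademacher} is stated for a fixed design, whereas the gate makes the design depend on $\theta$. A clean argument needs a covering of $\theta$-space with Lipschitz constant $L_g$, giving a Dudley integral, before the trace bound can be applied. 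I would first try a chaining argument that yields $\sqrt{p_\theta\log(R_\theta L_g N)/N}$ and absorb the logarithm into $\mathcal{O}(\cdot)$. I expect the comparison factor to hold only for the $G$-versus-$K$ part of $d_{\mathrm{eff}}$, that is, when $d_{\mathcal{A}}+p_\theta$ does not dominate $G$.
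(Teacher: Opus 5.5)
\textbf{There is a genuine gap, partly inherited from the statement itself.}

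Your outline is the paper's own three-step sketch: a McAllester/Maurer term for $\mathbf{w}\sim Q$, then a Rademacher term from \Cref{prop:rademacher} with the Gershgorin control of \Cref{prop:spectral_full}(ii), then a comparison. The problem is that Step~3 rests on a false inequality. Since $K=\sum_g s_g\le G\,s_{\max}$, the average cluster size $K/G$ is \emph{at most} $s_{\max}$, not at least. You yourself invoke the average cluster size to justify the inequality, but it proves the opposite. So $\sqrt{K/G}\ge\sqrt{s_{\max}}$ is reversed and holds (with equality) only when all clusters have the same size. For example, $K=14$, $G=6$ with sizes $(9,1,1,1,1,1)$ gives $K/G=7/3<9$.

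Moreover, under your own trace counting, the $\sqrt{d_{\mathrm{eff}}}$ part of the ratio of the two $\Phi$'s is $\sqrt{(K+d_{\mathcal{A}}+p_\theta)/(G+d_{\mathcal{A}}+p_\theta)}$. This is strictly below $\sqrt{K/G}$ whenever $d_{\mathcal{A}}+p_\theta>0$, and it is essentially $1$ when $p_\theta$ is of the order of the $15$K gate parameters. You notice this in your last sentence but still assert the factor. The last clause of the theorem therefore cannot be derived. The paper's sketch attributes it to \Cref{prop:spectral_full}(iii), a condition-number ratio that never enters $\Phi$, so that does not help either. For the $\sqrt{d_{\mathrm{eff}}}$ factor, the correct chain with $d=d_{\mathcal{A}}+p_\theta$ is $\sqrt{(K+d)/(G+d)}\le\sqrt{K/G}\le\sqrt{s_{\max}}$.

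Steps~1--2 also do not yield \eqref{eq:pacbayes_supp} as written, for three reasons.

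\emph{Unbounded loss.} Rescaling the squared loss to $[0,1]$ via $(B_xB+B_y)^2$ presupposes $\norm{\mathbf{w}}_2\le B$. But $P$ and $Q$ are Gaussians on all of $\R^{d}$, and \Cref{ass:bounded} bounds $x_{\mathrm{meta}}$, $y$ and $\theta$, not $\mathbf{w}$. The loss is therefore unbounded on the support, and Maurer's theorem does not apply. For the same reason, your uniform deviation over feature maps would have to be uniform over $\mathbf{w}\in\mathrm{supp}(Q)$, where the norm constraint defining $\mathcal{F}$ fails. Every repair (clipping the loss, truncating $P,Q$ to a ball, or an unbounded-loss PAC-Bayes inequality) changes either $\mathcal{L}$ or the KL/log terms.

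\emph{Data-dependence costs are not lower order.}
\begin{itemize}
\item A union bound over $2^K$ subsets adds $K\log 2$ under the square root. That is a $\sqrt{K/N}$ term, exactly the order the projection is supposed to remove; you would need a prior over size-$G$ subsets, costing about $G\log(eK/G)$.
\item Chaining over $\theta$ gives $\sqrt{p_\theta\log(R_\theta L_gN)/N}$. This is the only way $p_\theta$ can enter at all, since $x_{\mathrm{meta}}$ has just $G+d_{\mathcal{A}}$ coordinates.
\item McDiarmid contributes a $\sqrt{\log(1/\delta)/N}$ term.
\item The $\lambda$-grid adds $\log\log N$ and moves the prior away from $\lambda^{\star}$.
\item Halving $\delta$ already turns $\log(2\sqrt N/\delta)$ into $\log(4\sqrt N/\delta)$.
\end{itemize}
None of these can be absorbed. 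The statement has explicit constants ($4B$, $\log(2\sqrt N/\delta)$), no $\log N$ and no $\delta$ in $\Phi$; only $\varepsilon$ sits inside an $\mathcal{O}(\cdot)$.

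\emph{Gershgorin covers only the predictor block.} Gershgorin on $\mathbf{C}_{\mathrm{eff}}$ bounds only that $G\times G$ block. The aggregator coordinates (mean, median, quantiles of the same retained predictions) are strongly correlated with those columns and are not unit-normalized. So $\lmax(\Sighat_{\mathrm{eff}})\le 1+(G-1)\mu+\mathcal{O}(\varepsilon)$ is unsupported. Bound $\tr(\Sighat_{\mathrm{eff}})$ directly as a sum of diagonal entries instead; this also shows the $\mu$-factor is pure slack.

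What your argument can actually deliver is a bound of the same shape, for a clipped or truncated loss, with extra logarithmic and confidence terms. It does not give \eqref{eq:pacbayes_supp} verbatim. The paper's sketch passes over exactly the same points.
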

\begin{proof}[Proof sketch]
\textbf{Step 1:} The McAllester
bound~\cite{mcallester1999pacbayes,maurer2004pacbayes} gives the
$\sqrt{\KL/(2(N-1))}$ term.
\textbf{Step 2.} The hypothesis class lives on $\R^{d_{\mathrm{eff}}}$;
\Cref{prop:rademacher} and the standard generalization
theorem~\cite[Thm.~3.1]{mohri2018foundations} yield
$|\mathcal{L}(f)-\widehat{\mathcal{L}}(f)|\le 2\hat\Rademacher_N(\mathcal{F}\!\circ\!\ell)+3B_y\sqrt{\log(2/\delta)/(2N)}$.
Plugging the spectral bound $\tr(\Sighat_{\mathrm{eff}})$ from
\Cref{prop:spectral_full}(ii) and the $\lmax$ control gives
$\hat\Rademacher_N\le\Phi(G,\mu,\varepsilon)$ up to constants.
\textbf{Step 3.} For Gaussian $P,Q$, $\KL(Q\|P)$ has the closed form
$\tfrac{1}{2}[\lambda^{\star}\norm{\wlam}_{2}^{2}+\tr(\lambda^{\star}\Sigma_Q)
-d_{\mathrm{eff}}-\log\det(\lambda^{\star}\Sigma_Q)]$, finite under
\Cref{ass:bounded} and \Cref{prop:lambda_star_supp}. Combining the three
steps yields \eqref{eq:pacbayes_supp}. The factor-$\sqrt{K/G}$
improvement follows from \Cref{prop:spectral_full}(iii).
\end{proof}

\subsection{Comparison to prior PAC-Bayes ensemble bounds}

Masegosa~\etal~\cite{masegosa2020pacbayesensembles} established a second-order PAC-Bayes bound for weighted majority votes that captures pairwise correlations between members. \Cref{thm:pacbayes_supp} extends this to stacked generalization by tying the correlation structure to the Gram spectrum via \Cref{prop:spectral_full}. To our knowledge, this is the first PAC-Bayes bound for stacking that (a) quantifies prediction-space redundancy and (b) admits a closed-form Ridge prior tied to the empirical spectrum through $\lambda^{\star}$.

\begin{corollary}[Excess risk: \methodpp vs.\ our \method prototype]
\label{cor:excess_risk_compare}
Under identical assumptions, the bound for \methodpp is strictly tighter than for our \method prototype whenever $G_{\CKA}<G_{\mathrm{Pearson}}$. On ImageNet-1K, $G_{\CKA}/G_{\mathrm{Pearson}}\approx 6/9\approx 0.67$, giving $\sqrt{1/0.67}\approx 1.22\times$ improvement.
\end{corollary}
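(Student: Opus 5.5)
The plan is to derive the corollary directly from \Cref{thm:pacbayes_supp}, treating both pipelines as instances of the same bound that differ only in the retained cluster count. The prototype retains $G_{\mathrm{Pearson}}$ representatives and \methodpp retains $G_{\CKA}$, so I write the two redundancy terms side by side,
\[
\Phi_{\bullet}=\frac{4B}{\sqrt{N}}\sqrt{G_{\bullet}+d_{\mathcal{A}}+p_\theta}\cdot\sqrt{1+(G_{\bullet}-1)\mu+\mathcal{O}(\varepsilon)},
\]
and show that every other term in \eqref{eq:pacbayes_supp} is no larger for \methodpp. The capacity contribution of the gate is handled by \Cref{cor:gate_generalises}: it states $\Phi_{\methodpp}\le\Phi_{\method}$ at equal $G$, so the only remaining comparison is the one driven by $G$.

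\textbf{Step 1 (redundancy term).} Both factors of $\Phi$ are strictly increasing in $G$, so $G_{\CKA}<G_{\mathrm{Pearson}}$ gives a strict inequality between the $\Phi$ terms. For the quantitative claim I take the regime the theorem itself uses for its $\sqrt{K/G}$ statement, namely $G\gg d_{\mathcal{A}}+p_\theta$ and $G\mu=\mathcal{O}(1)$. There the ratio is $\Phi_{\method}/\Phi_{\methodpp}\approx\sqrt{G_{\mathrm{Pearson}}/G_{\CKA}}$. Substituting the retained counts from \Cref{tab:ablation_new} (9 for the prototype, 6 for \methodpp) gives $\sqrt{9/6}\approx1.22$.

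\textbf{Step 2 (KL term).} This step matches the PAC-Bayes complexity term. I use the closed form from Step 3 of the proof of \Cref{thm:pacbayes_supp}. The trace, dimension and log-det parts sum over $d_{\mathrm{eff}}=G+d_{\mathcal{A}}+p_\theta$ coordinates, and each coordinate contributes a nonnegative amount $x-1-\log x$. Dropping redundant coordinates therefore cannot increase the KL, provided that restricting to the retained clusters does not increase $\lambda^{\star}\|\wlam\|_2^2$. I would argue this last point from \Cref{prop:spectral_full}(ii) together with \Cref{cor:ridge_kappa}: the better-conditioned Gram yields a ridge solution of no larger norm.

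\textbf{Main obstacle.} The hardest part is the empirical term $\widehat{\mathcal{L}}(Q)$. "Strictly tighter" as a statement about bounds requires $\widehat{\mathcal{L}}_{\methodpp}(Q)\le\widehat{\mathcal{L}}_{\method}(Q)$, and pruning could in principle raise the training loss. I would first try to absorb this by the cluster assumption, \Cref{ass:hsic_cluster} together with \Cref{lem:block}. Each removed predictor is within $\mathcal{O}(\varepsilon)$ of its representative in $\mathbf{C}$, so the retained design spans the full column space up to $\mathcal{O}(\varepsilon)$, and the empirical loss changes only by $\mathcal{O}(\varepsilon)$. Failing that, I would state the corollary as a comparison of the complexity terms $\KL+\Phi$ alone. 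That reading is the one the numerical $1.22\times$ claim actually concerns.
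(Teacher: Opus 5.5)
\textbf{Step 1 matches the paper; Step 2 is a genuine gap.} The paper gives no separate proof of this corollary. It reads the result off the leading-order scaling $\Phi=\mathcal{O}(\sqrt{G/N})$, in the same way as the $\sqrt{K/G}$ claim of \Cref{thm:pacbayes_supp} but with $G_{\mathrm{Pearson}}$ in place of $K$, which gives $\sqrt{9/6}\approx 1.22$. That is your Step 1.

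Every stronger reading you offer depends on Step 2: both the full bound and your fallback $\KL+\Phi$. Step 2 is false, because pruning does not shrink $\lambda^{\star}\|\wlam\|_2^2$. Take a cluster of $s$ identical normalized columns, so $\mathbf{C}=\mathbf{1}\mathbf{1}^{\top}$ and $\mathbf{b}=\tfrac{1}{N}X^{\top}\mathbf{y}=b\mathbf{1}$.
\begin{itemize}
\item Ridge on all $s$ columns gives $\hat{\mathbf w}=\tfrac{b}{s+\lambda}\mathbf{1}$, so $\|\hat{\mathbf w}\|_2^2=sb^2/(s+\lambda)^2$.
\item Ridge on one representative gives $\|\hat{\mathbf w}\|_2^2=b^2/(1+\lambda)^2$.
\item The ratio (pruned over full) is $(s+\lambda)^2/\bigl(s(1+\lambda)^2\bigr)$. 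It exceeds $1$ for every $\lambda<\sqrt{s}$ and tends to $s$ as $\lambda\to 0$: ridge spreads weight over duplicates, and pruning concentrates it.
\end{itemize}
Meanwhile the condition number of $\mathbf{C}+\lambda\mathbf{I}$ drops from $(s+\lambda)/\lambda$ to $1$. So no conditioning result (\Cref{prop:spectral_full}(ii), \Cref{cor:ridge_kappa}) can give the norm monotonicity you need. For large $b$ the quadratic term dominates, and $\KL(Q\|P)$ itself increases.

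The coordinate-wise $x-1-\log x$ argument also does not transfer. The pruned Laplace covariance is the inverse of a principal sub-block, not a sub-block of the unpruned inverse. Its eigenvalues are therefore not a sub-collection of the unpruned ones, and data processing does not apply because the pruned $Q$ is not a marginal of the unpruned one. In addition, $\lambda^{\star}$ is recomputed from a different spectrum in each pipeline.

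\textbf{The empirical-loss fallback fails too.} Even for exact duplicates, pruning changes the effective penalty on the cluster direction from $\lambda/s$ to $\lambda$: fitted values scale by $s/(s+\lambda)$ versus $1/(1+\lambda)$. That is not an $\mathcal{O}(\varepsilon)$ effect. Moreover, the columns that actually separate the two pipelines are, by the paper's own account, pairs with $\rho\in[0.3,0.5]$ but $\CKA>0.85$. The $C_{kk'}=1\pm\mathcal{O}(\varepsilon)$ of \Cref{lem:block} cannot describe them; if it did, Pearson filtering at any threshold below $1-\mathcal{O}(\varepsilon)$ would already have removed them. So your span argument has nothing to act on.

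\textbf{What survives is only the paper's reading:} compare $\Phi$ alone, with $B,\mu,\varepsilon,d_{\mathcal{A}},p_\theta$ held equal by fiat. Do not try to derive that equality from \Cref{cor:gate_generalises}. A strictly larger hypothesis class cannot have a smaller capacity term. By the explicit formula the prototype, with $p_\theta=0$ and six hand-crafted features, has the smaller $\Phi$ at equal $G$. Once $p_\theta$ is of order $10^4$ it has the smaller $\Phi$ even at $9$ versus $6$.

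Finally, present $1.22$ as the leading-order $\sqrt{G}$ scaling, not as a consequence of $G\gg d_{\mathcal{A}}+p_\theta$. That regime is badly false at $G\in\{6,9\}$, $d_{\mathcal{A}}=12$, $p_\theta\approx 1.5\times10^{4}$, where the $\sqrt{G+d_{\mathcal{A}}+p_\theta}$ factor contributes a ratio of about $1.0001$. Also, $G\mu=\mathcal{O}(1)$ does not cancel the $\sqrt{1+(G-1)\mu}$ factors; it only gives a ratio $\ge\sqrt{9/6}$, not $\approx\sqrt{9/6}$.
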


\section{Laplace-Approximate Bayesian Blender}
\label{sec:bayes_blend_supp}

\subsection{Setup}

Given $M$ fitted meta-learners with losses $\mathcal{L}(g_m)$ and loss-Hessians $\mathbf{H}_m$, the Laplace approximation
gives~\cite{mackay1992laplace}
\(\log p(\mathbf{y}\mid g_m)\approx -\mathcal{L}(g_m)-\tfrac{1}{2}\log\det\mathbf{H}_m+\mathrm{const}.\)
The posterior weights are
\begin{equation}
\tilde w_m\propto\exp\!\bigl(-\mathcal{L}(g_m)-\tfrac{1}{2}\log\det\mathbf{H}_m\bigr),
\label{eq:tildew}
\end{equation}
and $\widehat{\mathbf{y}}=\sum_m\tilde w_m\widehat{\mathbf{y}}^{(m)}$.

\subsection{Variance reduction: the oracle bound}

\begin{theorem}[Optimal blending reduces variance.]
\label{thm:blending_full}
Let $\hat g_m(x)$, $m\!\in\![M]$, have covariance
$\boldsymbol{\Sigma}\!\succ\!0$. For
$\hat g_{\mathrm{blend}}=\sum_m w_m\hat g_m$ with $\mathbf{w}\!\in\!\Delta^{M}$:
\textup{(i)}~$\min_{\mathbf{1}^{\!\top}\mathbf{w}=1}\mathrm{Var}(\hat g_{\mathrm{blend}})
=(\mathbf{1}^{\!\top}\boldsymbol{\Sigma}^{-1}\mathbf{1})^{-1}\le\min_m\Sigma_{mm}$;
\textup{(ii)}~$w_m^{\star}=[\boldsymbol{\Sigma}^{-1}\mathbf{1}]_m/(\mathbf{1}^{\!\top}\boldsymbol{\Sigma}^{-1}\mathbf{1})$;
\textup{(iii)}~equality in (i) iff $\boldsymbol{\Sigma}$ is rank-1.
\end{theorem}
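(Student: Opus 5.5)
The plan is to treat parts (i) and (ii) as one equality-constrained quadratic program. Parts (i) and (ii) optimize over the affine set $\{\mathbf{1}^{\!\top}\mathbf{w}=1\}$, not the simplex, so I will work on that set and check nonnegativity separately. Since $\hat g_{\mathrm{blend}}=\sum_m w_m\hat g_m$, I have $\mathrm{Var}(\hat g_{\mathrm{blend}})=\mathbf{w}^{\!\top}\boldsymbol{\Sigma}\mathbf{w}$. Because $\boldsymbol{\Sigma}\succ0$, this is strictly convex and coercive on the constraint set, so a unique minimizer exists. I would find it from the Lagrangian $\mathbf{w}^{\!\top}\boldsymbol{\Sigma}\mathbf{w}-2\nu(\mathbf{1}^{\!\top}\mathbf{w}-1)$. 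Stationarity gives $\boldsymbol{\Sigma}\mathbf{w}=\nu\mathbf{1}$, hence $\mathbf{w}=\nu\boldsymbol{\Sigma}^{-1}\mathbf{1}$. The constraint then fixes $\nu=(\mathbf{1}^{\!\top}\boldsymbol{\Sigma}^{-1}\mathbf{1})^{-1}$, which is well defined because $\boldsymbol{\Sigma}^{-1}\succ0$ gives $\mathbf{1}^{\!\top}\boldsymbol{\Sigma}^{-1}\mathbf{1}>0$. This yields (ii), and substituting back gives $\mathbf{w}^{\star\top}\boldsymbol{\Sigma}\mathbf{w}^\star=\nu^2\,\mathbf{1}^{\!\top}\boldsymbol{\Sigma}^{-1}\mathbf{1}=(\mathbf{1}^{\!\top}\boldsymbol{\Sigma}^{-1}\mathbf{1})^{-1}$.

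For the inequality in (i), I would note that each vertex $\mathbf{e}_m$ is feasible and has variance $\Sigma_{mm}$. Minimality of $\mathbf{w}^\star$ then gives $(\mathbf{1}^{\!\top}\boldsymbol{\Sigma}^{-1}\mathbf{1})^{-1}\le\Sigma_{mm}$ for every $m$, and hence $\le\min_m\Sigma_{mm}$. As an alternative check, Cauchy--Schwarz in the $\boldsymbol{\Sigma}$-inner product gives $1=(\mathbf{e}_m^{\!\top}\mathbf{1})^2\le(\mathbf{e}_m^{\!\top}\boldsymbol{\Sigma}\mathbf{e}_m)(\mathbf{1}^{\!\top}\boldsymbol{\Sigma}^{-1}\mathbf{1})$. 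This form is useful for (iii) because its equality case is explicit.

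Part (iii) is the main obstacle, because as literally stated it is vacuous: a positive-definite $\boldsymbol{\Sigma}$ with $M\ge2$ cannot have rank one. I would prove the correct characterization and then explain how it relates to the rank-one intuition.
\begin{itemize}
\item Equality holds iff the unique minimizer is a vertex, $\mathbf{w}^\star=\mathbf{e}_{m^\ast}$ with $m^\ast=\arg\min_m\Sigma_{mm}$.
\item Equivalently, $\boldsymbol{\Sigma}^{-1}\mathbf{1}\propto\mathbf{e}_{m^\ast}$, i.e.\ $\boldsymbol{\Sigma}\mathbf{e}_{m^\ast}\propto\mathbf{1}$.
\item Concretely, $\mathrm{Cov}(\hat g_{m^\ast},\hat g_{m'})=\Sigma_{m^\ast m^\ast}$ for all $m'$. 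This is the Cauchy--Schwarz equality case.
\end{itemize}
I would then remark that this condition says every other learner equals $\hat g_{m^\ast}$ plus noise uncorrelated with it. In the degenerate limit where $\boldsymbol{\Sigma}$ approaches a rank-one matrix $\mathbf{v}\mathbf{v}^{\!\top}$ with $\mathbf{v}\propto\mathbf{1}$, all learners are identical and the bound is attained. I would present this as the intended reading of ``rank-1'', with the covariance condition as the precise statement.

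Finally, the nonnegativity of $\mathbf{w}^\star$ needed to land in $\Delta^M$ does not hold in general. I would state (i)--(ii) on the affine hull as written. I would then note that the simplex-constrained minimum lies between $(\mathbf{1}^{\!\top}\boldsymbol{\Sigma}^{-1}\mathbf{1})^{-1}$ and $\min_m\Sigma_{mm}$, since the vertices are still feasible, so the qualitative variance-reduction claim survives.
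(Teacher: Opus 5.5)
Your argument for (i) and (ii) is the paper's argument. You use the same Lagrangian on the affine set $\mathbf{1}^{\top}\mathbf{w}=1$ to get $\mathbf{w}^{\star}\propto\boldsymbol{\Sigma}^{-1}\mathbf{1}$, and the inequality follows because each $\mathbf{e}_m$ is feasible. Your existence/uniqueness remark and the Cauchy--Schwarz form are correct additions.

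On (iii) you depart from the paper, and you are right to. The paper's proof only asserts ``equality iff $\boldsymbol{\Sigma}$ is rank-1 (Sherman--Morrison--Woodbury)''. Under $\boldsymbol{\Sigma}\succ 0$ with $M\ge 2$, this would mean equality never occurs. That is false: take $\Sigma_{11}=\Sigma_{12}=1$, $\Sigma_{22}=2$. This matrix is positive definite with $\boldsymbol{\Sigma}^{-1}\mathbf{1}=\mathbf{e}_1$, so $\mathbf{1}^{\top}\boldsymbol{\Sigma}^{-1}\mathbf{1}=1=\min_m\Sigma_{mm}$ and the bound is attained.

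Your characterization is the correct statement: $\mathbf{w}^{\star}=\mathbf{e}_{m^{\ast}}$, equivalently $\boldsymbol{\Sigma}\mathbf{e}_{m^{\ast}}=\Sigma_{m^{\ast}m^{\ast}}\mathbf{1}$. Both routes you give to it are valid: uniqueness of the strictly convex minimizer, or the Cauchy--Schwarz equality case.

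Restricting the rank-one heuristic to $\mathbf{v}\propto\mathbf{1}$ is essential, not cosmetic. Take $\mathbf{v}=(1,2)^{\top}$ and $\boldsymbol{\Sigma}_\epsilon=\mathbf{v}\mathbf{v}^{\top}+\epsilon\mathbf{I}$. The feasible $\mathbf{w}=(2,-1)^{\top}$ has variance $5\epsilon\to 0$, while $\min_m\Sigma_{mm}\ge 1$. So a general rank-one limit gives strict inequality, not equality.

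Your simplex caveat is also warranted. For example, $\Sigma_{11}=1$, $\Sigma_{12}=1.2$, $\Sigma_{22}=2$ gives $\mathbf{w}^{\star}=(4/3,-1/3)^{\top}$. The paper conflates $\Delta^M$ with its affine hull. That is harmless for the inequality in (i), since the vertices lie in both sets. It is not harmless if (ii) is read as a weight in the simplex.
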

\begin{proof}
Lagrangian
$L=\mathbf{w}^{\!\top}\boldsymbol{\Sigma}\mathbf{w}-\nu(\mathbf{1}^{\!\top}\mathbf{w}-1)$
gives $\mathbf{w}^{\star}=(\nu/2)\boldsymbol{\Sigma}^{-1}\mathbf{1}$,
and $\nu/2=1/(\mathbf{1}^{\!\top}\boldsymbol{\Sigma}^{-1}\mathbf{1})$
fixes the scale. Optimal variance is
$(\mathbf{1}^{\!\top}\boldsymbol{\Sigma}^{-1}\mathbf{1})^{-1}\le\Sigma_{mm}$
since $\mathbf{e}_m\!\in\!\Delta^M$ is feasible. Equality iff $\boldsymbol{\Sigma}$
is rank-1 (Sherman-Morrison-Woodbury).
\end{proof}

\subsection{Suboptimality of heuristic weights}

\begin{proposition}[Inverse-RMSE gap]
\label{prop:invrmse_gap}
$\mathrm{Var}(\hat g_{\tilde w^{(\mathrm{inv})}})-\mathrm{Var}(\hat g_{w^{\star}})
\le\lmax(\boldsymbol{\Sigma})\norm{\tilde w^{(\mathrm{inv})}-w^{\star}}_{2}^{2}$.
\end{proposition}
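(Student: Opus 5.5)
The plan is to exploit the fact that $\mathrm{Var}(\hat g_{\mathbf{w}})=\mathbf{w}^{\!\top}\boldsymbol{\Sigma}\mathbf{w}$ is a quadratic form minimized over the affine set $\{\mathbf{1}^{\!\top}\mathbf{w}=1\}$ at $w^{\star}$ (\Cref{thm:blending_full}(ii)), so the gap is controlled by an exact second-order expansion around the minimizer. Write $\tilde w:=\tilde w^{(\mathrm{inv})}$ and $\mathbf{d}=\tilde w-w^{\star}$. The inverse-RMSE weights are normalized to sum to one, so $\mathbf{1}^{\!\top}\mathbf{d}=0$. This is the only structural fact about $\tilde w$ we need.

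\textbf{Step 1 (exact expansion).} Expand
\[
\tilde w^{\!\top}\boldsymbol{\Sigma}\tilde w
= w^{\star\top}\boldsymbol{\Sigma}w^{\star}+2\,\mathbf{d}^{\!\top}\boldsymbol{\Sigma}w^{\star}+\mathbf{d}^{\!\top}\boldsymbol{\Sigma}\mathbf{d}.
\]

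\textbf{Step 2 (cross term vanishes).} By \Cref{thm:blending_full}(ii), $\boldsymbol{\Sigma}w^{\star}=\boldsymbol{\Sigma}\boldsymbol{\Sigma}^{-1}\mathbf{1}/(\mathbf{1}^{\!\top}\boldsymbol{\Sigma}^{-1}\mathbf{1})=c\,\mathbf{1}$ for a scalar $c$. Hence $\mathbf{d}^{\!\top}\boldsymbol{\Sigma}w^{\star}=c\,\mathbf{1}^{\!\top}\mathbf{d}=0$. This is the first-order optimality (Lagrange) condition: the gradient of the objective is normal to the constraint set, and $\mathbf{d}$ is tangent to it.

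\textbf{Step 3 (Rayleigh bound).} The gap therefore equals exactly $\mathbf{d}^{\!\top}\boldsymbol{\Sigma}\mathbf{d}$. Since $\boldsymbol{\Sigma}\succ0$ is symmetric, the Rayleigh quotient gives $\mathbf{d}^{\!\top}\boldsymbol{\Sigma}\mathbf{d}\le\lmax(\boldsymbol{\Sigma})\norm{\mathbf{d}}_2^2$, which is the claimed bound. As a byproduct, the same identity with $\lambda_{\min}(\boldsymbol{\Sigma})$ gives a matching lower bound, so the rate is sharp up to the condition number.

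\textbf{Main obstacle.} The only delicate point is the normalization in Step 2. If the heuristic weights were not normalized to $\mathbf{1}^{\!\top}\tilde w=1$, the cross term $2c\,\mathbf{1}^{\!\top}\mathbf{d}$ would survive. I would handle this by stating explicitly that $\tilde w^{(\mathrm{inv})}_m\propto\mathrm{RMSE}_m^{-1}$ is renormalized onto the simplex $\Delta^M$, which the blending setup already assumes. Note also that $w^{\star}$ may have negative entries, since it is optimal over the affine hull rather than the simplex. This does not matter, because the argument only uses the affine constraint.
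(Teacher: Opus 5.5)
Your proposal is correct and takes the paper's route: write the gap exactly as $(\tilde w-w^{\star})^{\top}\boldsymbol{\Sigma}(\tilde w-w^{\star})$, then bound it with the Rayleigh quotient. The paper asserts that identity only ``by convexity''; your Step 2, which uses $\boldsymbol{\Sigma}w^{\star}\propto\mathbf{1}$ together with $\mathbf{1}^{\top}(\tilde w-w^{\star})=0$, supplies the actual justification, including the sum-to-one normalization of the inverse-RMSE weights that the paper leaves implicit.
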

\begin{proof}
By convexity,
$\mathbf{w}^{\!\top}\boldsymbol{\Sigma}\mathbf{w}-(w^{\star})^{\!\top}\boldsymbol{\Sigma}w^{\star}
=(\mathbf{w}-w^{\star})^{\!\top}\boldsymbol{\Sigma}(\mathbf{w}-w^{\star})
\le\lmax(\boldsymbol{\Sigma})\norm{\mathbf{w}-w^{\star}}_{2}^{2}$.
\end{proof}

\begin{proposition}[Tightness of Laplace weights]
\label{prop:laplace_gap}
Under a quadratic loss approximation,
\[
\mathrm{Var}(\hat g_{\tilde w})-\mathrm{Var}(\hat g_{w^{\star}})
\le\mathcal{O}\!\bigl(\lmax(\boldsymbol{\Sigma})\exp(-2\Delta\mathcal{L}/\sigma^{2})\bigr),
\]
where
$\Delta\mathcal{L}=\min_{m\ne m^{\star}}\mathcal{L}(g_m)-\mathcal{L}(g_{m^{\star}})$.
\end{proposition}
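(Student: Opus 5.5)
We plan to reduce the gap to a weighted quadratic form and then bound the weight discrepancy through the softmax structure of \eqref{eq:tildew}. Since both $\tilde w$ and $w^{\star}$ satisfy $\mathbf{1}^{\!\top}\mathbf{w}=1$ and $w^{\star}$ is the constrained minimizer from \Cref{thm:blending_full}(ii), the cross term $(w^{\star})^{\!\top}\boldsymbol{\Sigma}(\tilde w-w^{\star})=\nu/2\cdot\mathbf{1}^{\!\top}(\tilde w-w^{\star})$ vanishes. The gap is therefore exactly $(\tilde w-w^{\star})^{\!\top}\boldsymbol{\Sigma}(\tilde w-w^{\star})\le\lmax(\boldsymbol{\Sigma})\norm{\tilde w-w^{\star}}_2^2$, which is the same step as in \Cref{prop:invrmse_gap}. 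The whole task then reduces to showing $\norm{\tilde w-w^{\star}}_2^2=\mathcal{O}(\exp(-2\Delta\mathcal{L}/\sigma^2))$.

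\textbf{Step 1: interpret the losses.} Under the quadratic loss approximation, take $\mathcal{L}(g_m)$ to be a Gaussian negative log-likelihood scaled by $\sigma^2$, so that $\mathcal{L}(g_m)/\sigma^{2}$ is the effective exponent. Also assume the log-determinant terms $\tfrac12\log\det\mathbf{H}_m$ differ across $m$ by a bounded constant $c_H$, which can be absorbed into the $\mathcal{O}$. Then
\[
\tilde w_m\le e^{c_H}\exp\bigl(-(\mathcal{L}(g_m)-\mathcal{L}(g_{m^{\star}}))/\sigma^2\bigr)\le e^{c_H}e^{-\Delta\mathcal{L}/\sigma^2}\quad (m\ne m^{\star}).
\]
It follows that $\norm{\tilde w-\mathbf{e}_{m^{\star}}}_2^2\le 2(M-1)^2e^{2c_H}e^{-2\Delta\mathcal{L}/\sigma^2}$. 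This produces the squared exponent, since $1-\tilde w_{m^\star}=\sum_{m\ne m^\star}\tilde w_m$.

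\textbf{Step 2: connect $\mathbf{e}_{m^{\star}}$ to $w^{\star}$.} This is the main obstacle: the oracle $w^{\star}=\boldsymbol{\Sigma}^{-1}\mathbf{1}/(\mathbf{1}^{\!\top}\boldsymbol{\Sigma}^{-1}\mathbf{1})$ is in general not a vertex. We would resolve it with a modelling link, again under the quadratic approximation: identify the loss of meta-learner $m$ with its variance, i.e.\ $\mathcal{L}(g_m)\propto\Sigma_{mm}$, and assume a dominant-learner regime. In that regime the best learner's variance is separated from the others by $\Delta\mathcal{L}$, and the correlations are such that $w^{\star}$ itself concentrates on $m^{\star}$ up to $\mathcal{O}(e^{-\Delta\mathcal{L}/\sigma^2})$.

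If that link proves too strong, the fallback is to restate the comparison against the best vertex $\mathbf{e}_{m^\star}$. In that case the triangle inequality
\[
\norm{\tilde w-w^\star}\le\norm{\tilde w-\mathbf{e}_{m^\star}}+\norm{\mathbf{e}_{m^\star}-w^\star}
\]
leaves an additive, $\Delta\mathcal{L}$-independent term. The result would then hold with the $\mathcal{O}$ understood relative to the vertex oracle.

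\textbf{Step 3: combine.} Insert the bound from Step 1 into the quadratic-form identity to get the stated rate, with constants depending on $M$ and $c_H$.
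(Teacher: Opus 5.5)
Your opening reduction is correct: the cross term vanishes because $\boldsymbol{\Sigma}w^{\star}\propto\mathbf{1}$ and $\mathbf{1}^{\!\top}(\tilde w-w^{\star})=0$. Step~1 is a cleaner version of the paper's own sketch, which also bounds the marginal-likelihood ratio by $e^{-\Delta\mathcal{L}/\sigma^{2}}$ times a determinant ratio and then ``substitutes into the variance expression.'' The genuine gap is Step~2, and it cannot be closed the way you suggest. The oracle $w^{\star}=\boldsymbol{\Sigma}^{-1}\mathbf{1}/(\mathbf{1}^{\!\top}\boldsymbol{\Sigma}^{-1}\mathbf{1})$ depends only on $\boldsymbol{\Sigma}$, whereas $\tilde w$ depends only on losses and Hessians. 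For fixed $\boldsymbol{\Sigma}$ and $\Delta\mathcal{L}\to\infty$ you get $\tilde w\to\mathbf{e}_{m^{\star}}$. The left-hand side then tends to $\Sigma_{m^{\star}m^{\star}}-(\mathbf{1}^{\!\top}\boldsymbol{\Sigma}^{-1}\mathbf{1})^{-1}$, which is strictly positive unless $\mathbf{e}_{m^{\star}}$ is exactly the constrained minimizer, i.e.\ $\boldsymbol{\Sigma}\mathbf{e}_{m^{\star}}\propto\mathbf{1}$. So you need an exact structural condition, not an approximate concentration of $w^{\star}$.

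Your link ``loss $\propto$ variance, dominant learner'' in fact yields a counterexample. Take $M=2$, uncorrelated learners, $\mathcal{L}(g_m)=c\,\Sigma_{mm}$ and $\boldsymbol{\Sigma}=\diag(s,\,s+\Delta\mathcal{L}/c)$. Then $w^{\star}_2=s/(2s+\Delta\mathcal{L}/c)$, and since $\tilde w_2$ is exponentially small,
\[
\mathrm{Var}(\hat g_{\tilde w})-\mathrm{Var}(\hat g_{w^{\star}})=(\tilde w_2-w^{\star}_2)^{2}(\Sigma_{11}+\Sigma_{22})\approx\frac{s^{2}}{2s+\Delta\mathcal{L}/c}.
\]
This decays only like $1/\Delta\mathcal{L}$, while the claimed bound is $\mathcal{O}\bigl((s+\Delta\mathcal{L}/c)\,e^{-2\Delta\mathcal{L}/\sigma^{2}}\bigr)$. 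Minimum-variance weights are rational in the entries of $\boldsymbol{\Sigma}$, so a separation that enters the variances linearly moves $w^{\star}$ toward the vertex only polynomially.

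The fallback does not help either. Besides changing the statement, comparing against $\mathbf{e}_{m^{\star}}$ breaks your opening identity, because $\mathbf{e}_{m^{\star}}$ is not the constrained minimizer. With $d=\tilde w-\mathbf{e}_{m^{\star}}$,
\[
\mathrm{Var}(\hat g_{\tilde w})-\mathrm{Var}(\hat g_{\mathbf{e}_{m^{\star}}})=2\sum_{m\ne m^{\star}}\tilde w_m\bigl(\Sigma_{mm^{\star}}-\Sigma_{m^{\star}m^{\star}}\bigr)+d^{\!\top}\boldsymbol{\Sigma}d,
\]
and the linear term is generically of order $e^{-\Delta\mathcal{L}/\sigma^{2}}$, so the squared exponent is lost.

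The missing ingredient is to assume the nesting condition outright: $\Sigma_{mm^{\star}}=\Sigma_{m^{\star}m^{\star}}$ for all $m$, i.e.\ each other meta-learner is $\hat g_{m^{\star}}$ plus noise uncorrelated with it. Then $\boldsymbol{\Sigma}\mathbf{e}_{m^{\star}}=\Sigma_{m^{\star}m^{\star}}\mathbf{1}$, hence $w^{\star}=\mathbf{e}_{m^{\star}}$ by \Cref{thm:blending_full}(ii). Your identity together with Step~1 then gives the claim with constant $2(M-1)^{2}e^{2c_H}$. Without such a hypothesis the inequality is false for generic $\boldsymbol{\Sigma}$. Do not try to get the vertex criterion from \Cref{thm:blending_full}(iii): a rank-one $\boldsymbol{\Sigma}$ contradicts $\boldsymbol{\Sigma}\succ 0$, and the correct criterion is the column condition above.

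The paper's sketch goes from the likelihood-ratio decay straight to the variance gap without confronting this point, so the link cannot be borrowed from it. You correctly located the real obstacle, but Step~2 assumes it away rather than resolving it.
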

\begin{proof}
The proof follows from the Laplace approximation to the posterior: under a quadratic loss, $\mathcal{L}(g_m) \approx \mathcal{L}(g_{m^{\star}}) + \tfrac{1}{2}(g_m - g_{m^{\star}})^{\!\top}\mathbf{H}_{m^{\star}}(g_m - g_{m^{\star}})$. The Hessian $\mathbf{H}_m$ controls the width of the posterior, and the marginal likelihood ratio between models decays as $\exp(-\Delta\mathcal{L}/\sigma^{2})$ times a determinant ratio. Substituting into the variance expression and bounding yields the stated $\mathcal{O}(\exp(-2\Delta\mathcal{L}/\sigma^{2}))$ gap. For the full derivation, see~\cite{mackay1992laplace}.
\end{proof}

\begin{remark}[Why Laplace beats inverse-RMSE]
\label{rem:laplace_vs_inv}
Inverse-RMSE treats meta-learners with identical training loss but different Hessian curvatures identically; Laplace correctly down-weights those with flat (nearly singular) Hessians, which indicate fold-specific overfitting. Empirically, this yields the $\Delta\ECE=-0.005$ in \Cref{tab:ablation_new}.
\end{remark}

\section{Leakage-Freeness of the Nested OOF Construction}
\label{sec:oof_leakage}

\begin{definition}[Leakage-freeness]
\label{def:leakage}
$\Xmeta$ is \emph{leakage-free} if, for all $i\!\in\![N]$,
$x_{\mathrm{meta},i}\!\perp\!y_i\mid\mathcal{D}\setminus F_{\ell(i)}$.
\end{definition}

\begin{proposition}[OOF construction is leakage-free.]
\label{prop:leakage_free}
The full \methodpp pipeline produces a leakage-free result $\Xmeta$ provided the CKA threshold, gate parameters $\theta$, and blending weights $\tilde w$ are refit per outer fold.
\end{proposition}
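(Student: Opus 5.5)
The plan is to reduce leakage-freeness of the whole pipeline to a measurability statement. For each $i$ with fold $\ell=\ell(i)$, I will show that $x_{\mathrm{meta},i}$ is a deterministic, or independently randomized, function of the inputs $x_i$ and $\mathcal{D}\setminus F_\ell$ only. Two facts then close the argument. First, by the i.i.d.\ sampling in \Cref{sec:notation}, $(x_i,y_i)$ is independent of $\mathcal{D}\setminus F_\ell$. Second, the object $x_{\mathrm{meta},i}$ never reads $y_i$ or any label in $F_\ell$. Conditioning on $\mathcal{D}\setminus F_\ell$ then fixes every fitted quantity. What remains is a function of $x_i$ alone, together with possibly other points of $F_\ell$ through their inputs only. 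This yields the conditional independence of \Cref{def:leakage} in the paper's intended sense: $y_i$ enters nothing that forms $x_{\mathrm{meta},i}$. I would state explicitly that the definition is read as ``no dependence on $y_i$ beyond that already carried by $x_i$''. Otherwise even a single OOF prediction $\hat p^{(k)}(x_i)$ would fail, since it is correlated with $y_i$ through $x_i$.

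I will walk through the pipeline stage by stage, in the order S1--S4, and verify the measurability claim at each step.

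\textbf{Base predictions.} By construction, $[\Poof]_{ik}=\hat p^{(k)}(x_i;\mathcal{D}\setminus F_{\ell(i)})$. The training sets are in the outer fold complement, and evaluation uses only $x_i$.

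\textbf{S1.} Per outer fold, the CKA matrix, the Nyström landmarks, the median-heuristic bandwidths, and the risk ordering used in the projection $\Pi^{\mathrm{CKA}}_\tau$ are computed from the OOF columns restricted to $\mathcal{D}\setminus F_\ell$. This requires a nested inner OOF. The labels used for the risk ranking are therefore outside $F_\ell$, and the retained set $\mathcal{S}^{(\ell)}$ is $\sigma(\mathcal{D}\setminus F_\ell)$-measurable.

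\textbf{S2.} The gate parameters $\theta^{(\ell)}$ are trained on $\mathcal{D}\setminus F_\ell$ and are therefore measurable in the same way. The aggregators $\mathcal{A}_i$ in \eqref{eq:agg} are pointwise functions of $\{\boldsymbol{\pi}^{(k)}_i\}_{k\in\mathcal{S}^{(\ell)}}$. The one delicate entry is $k^\star$, the best single model in the KL term: it must be chosen by risk on $\mathcal{D}\setminus F_\ell$.

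\textbf{S3 and S4.} The threshold $\tau_{\mathrm{sp}}$, $\SNR$, and $\lambda^\star$ come from the Gram of the training-fold rows. The Laplace weights $\tilde w^{(\ell)}$ in \eqref{eq:tildew} use losses and Hessians evaluated on $\mathcal{D}\setminus F_\ell$.

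Composition of measurable maps then gives the claim for \eqref{eq:gate} and for the blended output.

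The main obstacle is the data-dependent, non-pointwise steps. Their role in this argument is exactly what the refit-per-fold proviso guards against. The estimates of $\lambda^\star$, $\sigma^2$, and the CKA bandwidths each pool all $N$ rows. If they were computed globally, $y_i$ would enter through the risk ordering in S1 and through the Hessians in S4. I would handle this by formalizing a nested scheme in which every statistic used for row $i$ is indexed by $\ell(i)$, and by checking that no statistic touches labels in $F_\ell$. For purely unsupervised statistics such as the median bandwidth or the Gram spectrum, touching inputs of $F_\ell$ is harmless, because they do not involve $y_i$. I would remark that these statistics could still be refit for safety.

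Finally, I would note that any randomness, such as Nyström landmark sampling or LSH buckets, is drawn independently of $\mathcal{D}$, so it preserves the conditional independence.
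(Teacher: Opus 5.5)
Your argument is essentially the paper's: both reduce leakage-freeness to the structural fact that every ingredient of $x_{\mathrm{meta},i}$ is fit on $\mathcal{D}\setminus F_{\ell(i)}$ and never reads $y_i$, with the refit-per-fold proviso doing the work. Those ingredients are the OOF entries, the retained set $\mathcal{S}$, the gate $\theta$, and the fitted statistics.

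Your version is more careful on two points the paper's one-paragraph proof glosses over. First, you condition additionally on $x_i$; without this, even $\hat p^{(k)}(x_i;\mathcal{D}\setminus F_{\ell(i)})$ is not independent of $y_i$, contrary to the paper's first sentence. Second, you require a nested inner OOF, because the training-split rows of $\Poof$ come from base models whose training sets contain $F_{\ell(i)}$.
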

\begin{proof}
$[\Poof]_{ik}=\hat p^{(k)}(x_i;\,\mathcal{D}\setminus F_{\ell(i)})$ is by construction independent of $y_i$ given the complement training fold. The CKA kernel matrix is computed on these OOF columns and hence depends on $y_i$ only through the complement. The retained set $\mathcal{S}$, gate parameters $\theta$, and standardization statistics are all fit within the outer-fold training split. Therefore, every entry of $x_{\mathrm{meta},i}$ is a deterministic function of variables
independent of $y_i$ given $\mathcal{D}\setminus F_{\ell(i)}$.
\end{proof}

\begin{corollary}[Unbiasedness of OOF risk estimates]
\label{cor:unbiased_oof}
$\hat R_k=\tfrac{1}{N}\sum_i\ell(\hat p^{(k)}(x_i),y_i)$ is unbiased for $R_k=\E_{(x,y)}[\ell(\hat p^{(k)}(x),y)]$ given the complement training folds.
\end{corollary}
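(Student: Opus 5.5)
The statement is \Cref{cor:unbiased_oof}. The plan is to condition on the complement folds and use linearity of expectation together with the i.i.d.\ structure of $\mathcal{D}$.

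\textbf{Step 1: one fold at a time.} Fix a fold $\ell$ and condition on $\mathcal{D}\setminus F_\ell$. Then the fitted predictor $\hat p^{(k)}(\cdot;\mathcal{D}\setminus F_\ell)$ is a fixed, deterministic function.

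\textbf{Step 2: use independence of the held-out points.} Because $\mathcal{D}\sim\mathcal{P}^N$ is i.i.d., each held-out pair $(x_i,y_i)$ with $i\in F_\ell$ is independent of $\mathcal{D}\setminus F_\ell$ and has law $\mathcal{P}$. This is the same independence used in \Cref{prop:leakage_free}; here we only need the part of it concerning the base-model columns of $\Poof$, not the full meta-features. It follows that
\[
\E\bigl[\ell(\hat p^{(k)}(x_i;\mathcal{D}\setminus F_\ell),y_i)\,\big|\,\mathcal{D}\setminus F_\ell\bigr]=R_k^{(\ell)},
\]
where $R_k^{(\ell)}:=\E_{(x,y)\sim\mathcal{P}}[\ell(\hat p^{(k)}(x;\mathcal{D}\setminus F_\ell),y)]$ is the population risk of the fold-$\ell$ predictor.

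\textbf{Step 3: average over the folds.} Summing over $i$ and applying linearity gives
\[
\E[\hat R_k\mid\{\mathcal{D}\setminus F_\ell\}_\ell]=\sum_\ell \frac{|F_\ell|}{N}R_k^{(\ell)}.
\]
We then read $R_k$ in the statement as exactly this fold-weighted population risk of the complement-trained predictors. This is the only coherent meaning of ``given the complement training folds.''

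\textbf{Main obstacle: the conditioning.} The difficulty is making the conditioning precise. The complements $\mathcal{D}\setminus F_\ell$ for different $\ell$ overlap and together contain every sample, so conditioning jointly on all of them would make $y_i$ deterministic.

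I would resolve this by conditioning term by term. Each summand is conditioned only on its own complement fold, and the conclusion is stated as unbiasedness for $\sum_\ell\frac{|F_\ell|}{N}R_k^{(\ell)}$. Equivalently, taking full expectations, $\hat R_k$ is unbiased for the expected risk of a predictor trained on $N(1-1/L)$ samples. This is the standard cross-validation unbiasedness statement.

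If $\ell$ is unbounded, I would add a remark requiring integrability of the loss so that the expectations are finite.
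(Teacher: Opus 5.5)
Your argument is the standard cross-validation proof. You condition each held-out term on its own training complement, use that the held-out pair is a fresh draw from $\mathcal{P}$ independent of that complement, and apply the tower property. This differs from the paper, which gives no separate argument and presents the statement as a consequence of \Cref{prop:leakage_free}.

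Your route is the one that actually works. Unbiasedness needs the \emph{distributional} fact that $(x_i,y_i)\mid\mathcal{D}\setminus F_{\ell(i)}\sim\mathcal{P}$. The paper's condition $x_{\mathrm{meta},i}\perp y_i\mid\mathcal{D}\setminus F_{\ell(i)}$ is a different thing. It fails for any base model whose predictions are informative about $y$, since given the complement $\hat p^{(k)}(x_i)$ is just a fixed function of $x_i$. And, as the example below shows, it does not imply unbiasedness even when it holds. So do not call your independence ``the same'' as the one in \Cref{prop:leakage_free}; derive it from i.i.d.\ sampling alone, as you in fact do.

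Two repairs are needed.

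\textbf{The Step~3 display.} Delete it. For $L\ge 2$ the complements jointly contain every sample, so its left-hand side is just $\hat R_k$ itself, as you note yourself. State the result fold-wise instead. With $\hat R_k^{(\ell)}=\frac{1}{|F_\ell|}\sum_{i\in F_\ell}\ell(\hat p^{(k)}(x_i;\mathcal{D}\setminus F_\ell),y_i)$ you have $\E[\hat R_k^{(\ell)}\mid\mathcal{D}\setminus F_\ell]=R_k^{(\ell)}$. Since $\hat R_k=\sum_\ell\frac{|F_\ell|}{N}\hat R_k^{(\ell)}$, the tower property gives $\E\bigl[\hat R_k-\sum_\ell\frac{|F_\ell|}{N}R_k^{(\ell)}\bigr]=0$. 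The ``$N(1-1/L)$ samples'' version follows from this under equal fold sizes and a permutation-invariant training rule; it is weaker, not equivalent.

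\textbf{The hidden assumption in Step~2.} This is the more substantive point. Step~2 silently requires the partition $\{F_\ell\}$ to be chosen independently of $\mathcal{D}$. The paper's folds are \emph{stratified}, hence label-dependent, which couples the held-out labels to the label counts of the complement. Exact unbiasedness can then fail. Take $N=4$, $L=2$, $y_i\sim\mathrm{Bernoulli}(1/2)$, a base model that predicts the majority label of its training fold (ties to $1$), $0$--$1$ loss, and stratified folds of size $2$. Every fold predictor is constant, so it has population risk $1/2$ and trivially satisfies the paper's leakage-free condition for this column. Yet $\E[\hat R_k]=7/16$, whereas an unstratified fixed partition gives exactly $1/2$.

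So either assume a data-independent partition explicitly, or handle stratification separately. In that case, condition on the label vector and fold assignment, and conclude unbiasedness only for the class-conditional risks of each fold predictor, weighted by the held-out fold's class proportions.
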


\section{Calibration Guarantees}
\label{sec:calibration}

\begin{proposition}[Expected calibration error of the blender]
\label{prop:ece_bound}
Under the Laplace blending of \eqref{eq:tildew},
\[
\ECE(\methodpp)\le \max_m\ECE(g_m).
\]
Equality holds iff $\tilde w$ is concentrated on a single meta-learner.
\end{proposition}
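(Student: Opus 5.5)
The plan is to derive the bound from a convexity argument on the calibration functional, applied to the blended predictor $\widehat{\mathbf{y}}=\sum_m\tilde w_m\widehat{\mathbf{y}}^{(m)}$ from \eqref{eq:tildew}. Throughout, the weights $\tilde w$ are treated as fixed, i.e.\ we condition on the fitted meta-learners and blending weights. By \Cref{prop:leakage_free} these are computed from data independent of the evaluation points.

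\textbf{Step 1 (working definition of ECE).} Write $f_m(X)=g_m(X)$ and $\bar f=\sum_m\tilde w_m f_m$. Use the population $\ell_1$ calibration error
\[
\ECE(f)=\E\bigl|\E[Y\mid f(X)]-f(X)\bigr|,
\]
with the binned 15-bin version handled afterwards by the same argument applied to the bin-conditional means. Introduce the joint $\sigma$-field $\mathcal{G}=\sigma(f_1(X),\dots,f_M(X))$, with respect to which $\bar f$ is measurable.

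\textbf{Step 2 (coarsening via Jensen).} Since $\sigma(\bar f)\subseteq\mathcal{G}$, the tower property gives $\E[Y\mid\bar f]=\E\bigl[\E[Y\mid\mathcal{G}]\mid\bar f\bigr]$. Conditional Jensen for $|\cdot|$ then yields
\[
\ECE(\bar f)\le\E\bigl|\E[Y\mid\mathcal{G}]-\bar f\bigr|.
\]

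\textbf{Step 3 (convexity via the triangle inequality).} Expanding $\bar f=\sum_m\tilde w_m f_m$ and using $\sum_m\tilde w_m=1$, the triangle inequality gives
\[
\E\bigl|\E[Y\mid\mathcal{G}]-\bar f\bigr|\le\sum_m\tilde w_m\,\E\bigl|\E[Y\mid\mathcal{G}]-f_m\bigr|.
\]
This is the precise sense in which the ECE functional is convex. The right-hand side is at most $\max_m$ of the summands.

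\textbf{Step 4 (main obstacle).} The summands $\E|\E[Y\mid\mathcal{G}]-f_m|$ must be replaced by $\ECE(g_m)=\E|\E[Y\mid f_m]-f_m|$. Jensen runs the wrong way here: conditioning on the finer $\mathcal{G}$ can only increase the error. So an extra hypothesis is unavoidable. I would state it as \emph{conditional sufficiency}: $\E[Y\mid\mathcal{G}]=\E[Y\mid f_m]$ for each $m$, which says the other meta-learners carry no extra calibration information beyond $f_m$.

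I would argue this is plausible in the regime the paper considers. All $g_m$ are ridge/elastic-net fits on the same post-CKA design $\Xmeta$. Under \Cref{ass:hsic_cluster} and the near-collinearity of their outputs (\Cref{lem:block}), one may reduce to the case where the $f_m$ are functions of a common score. Under this hypothesis Steps 2–3 close and give $\ECE(\methodpp)\le\sum_m\tilde w_m\ECE(g_m)\le\max_m\ECE(g_m)$. Without it, I expect the statement to admit counterexamples: two oppositely biased but individually calibrated-looking predictors can average to a miscalibrated one. I would therefore make the assumption explicit rather than try to prove an unconditional version.

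\textbf{Step 5 (equality clause).} The "if" direction is immediate: a point-mass $\tilde w$ makes $\bar f=f_{m^\star}$. For "only if", equality forces equality both in the triangle inequality of Step 3 and in $\sum_m\tilde w_m a_m\le\max_m a_m$. I expect this only to force all meta-learners carrying positive weight to have equal ECE and aligned signed calibration residuals. For example, identical $g_m$'s give equality with spread-out $\tilde w$. So I would prove the "if" direction and, for "only if", state it under a genericity condition, namely distinct ECE values across meta-learners.
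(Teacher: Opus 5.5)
Your skeleton matches the paper's. The paper's proof asserts that ECE is ``a convex functional of the reliability diagram.'' It then applies Jensen to the per-bin residuals to get $\ECE(\sum_m\tilde w_m g_m)\le\sum_m\tilde w_m\ECE(g_m)$, bounds the average by the maximum, and never addresses the equality clause. Your Step~4 pinpoints the step that argument skips. The blend's reliability diagram is formed by conditioning on (binning) the blend's own output, not the members' outputs. The reliability diagram of a mixture is therefore not the mixture of reliability diagrams, and per-bin Jensen compares residuals over different partitions.

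Your suspicion is correct: the proposition as stated is false. Let $X$ be uniform on $\{1,2\}$, $Y=2-X$, $g_1\equiv\tfrac12$, $g_2=Y$, with weights $(w,1-w)$, $w\in(0,1)$. Every Laplace weight vector with $M=2$ has this form, since the weights are proportional to exponentials. Both members have zero ECE. The blend takes the distinct values $1-w/2$ on $\{Y=1\}$ and $w/2$ on $\{Y=0\}$, so its ECE is $w/2>0$. For $w=\tfrac12$ the values $0.75$ and $0.25$ also fall in different bins of the 15-bin estimator. So you are right not to attempt the unconditional version. Your Steps~2--3 under conditional sufficiency are a correct proof of a repaired statement, which the paper does not supply.

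Three things to fix.

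First, your plausibility heuristic does not deliver the hypothesis. In the counterexample, both $g_m$ are functions of the common score $Y$. What you actually need is that each $g_m$ generates the same $\sigma$-field as $\mathcal G$, e.g.\ strictly monotone transforms of one score. Exact sufficiency is unrealistic for distinct fitted ridge or elastic-net learners. A better statement keeps Steps~2--3 and adds one triangle inequality, giving the assumption-free bound
\[
\ECE(\bar f)\le\sum_m\tilde w_m\Bigl(\ECE(g_m)+\E\bigl|\E[Y\mid\mathcal G]-\E[Y\mid g_m]\bigr|\Bigr),
\]
which is attained with equality in the example above.

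Second, the 15-bin ECE does not follow ``by the same argument.'' The bin of $\bar f$ is not a function of the bins of the $g_m$, so you must coarsen from $\mathcal G$ itself. Recovering the binned $\ECE(g_m)$ in your chain then requires $\E[Y\mid\mathcal G]-g_m$ not to change sign within any bin of $g_m$. Population sufficiency does not give this, because the calibration residual can change sign inside a bin.

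Third, in Step~5 the ``if'' direction holds only for a point mass on a maximizer of $\ECE(g_m)$. A point mass on any other $m'$ gives $\ECE(g_{m'})<\max_m\ECE(g_m)$. Moreover, Laplace weights are strictly positive, so for $M\ge2$ they are never concentrated. The clause as written therefore asserts strict inequality for every blend, which your identical-learner example refutes. Under sufficiency plus your genericity condition (distinct ECE values), the correct conclusion is strict inequality for every Laplace blend with $M\ge2$.
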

\begin{proof}
ECE is a convex functional of the reliability diagram~\cite{guo2017calibration}. Jensen's inequality on the per-bin accuracy residuals gives
$\ECE(\sum_m\tilde w_m g_m)\le\sum_m\tilde w_m\ECE(g_m)$. Since $\tilde w$ is a probability vector ($\sum_m \tilde w_m = 1$), the right-hand side is at most $\max_m \ECE(g_m)$. This proves the bound.
\end{proof}

\begin{remark}[No temperature scaling needed]
Temperature scaling~\cite{guo2017calibration} corrects post hoc by rescaling logits. Laplace blending down-weights meta-learners with flat Hessians—the same ones that cause overconfidence—eliminating the bias at the source. \Cref{tab:imagenet1k} confirms ECE $=0.018$ without temperature scaling, matching the SWAG result with it.
\end{remark}

\section{Computational Complexity}
\label{sec:complexity}

\begin{table}[ht]
\centering
\scriptsize
\caption{\textbf{Per-operation complexity for \methodpp.} $N$: samples; $K$: initial pool; $G\!\equiv\!K_{\mathrm{eff}}$: retained predictors; $m=\sqrt{N}$: Nyström landmarks; $L$: outer folds; $M$: meta-learners; $d_{\mathrm{eff}}=G+d_{\mathcal{A}}+p_\theta$.}
\label{tab:complexity}
\setlength{\tabcolsep}{0.5pt}
\begin{tabular}{@{}lll@{}}
\toprule
\textbf{Phase} & \textbf{Operation} & \textbf{Complexity} \\
\midrule
0.\ OOF generation       & Train $K$ models on $L$ folds        & $\mathcal{O}(LK\cdot T_{\mathrm{base}})$ \\
1a.\ Exact CKA           & Full pairwise $\tr(\mathbf{K}_k\mathbf{H}\mathbf{K}_{k'}\mathbf{H})$ & $\mathcal{O}(K^{2}N^{2})$ \\
1b.\ Nyström CKA         & Rank-$m$ approximation               & $\mathcal{O}(K^{2}N\log N)$ \\
1c.\ LSH + Nyström       & Bucketed near-duplicate candidates   & $\mathcal{O}(KN\log N\log K)$ \\
2.\ Feature augmentation & Statistics + gate over $G$ predictors& $\mathcal{O}(GN+p_\theta N)$ \\
3.\ Spectrum-adaptive Ridge (closed form)
                          & $\lambda^{\star}$ + single fit       & $\mathcal{O}(d_{\mathrm{eff}}^{2}N)$ \\
3$'$.\ Nested Ridge (baseline)
                          & Outer $\times$ inner CV fit          & $\mathcal{O}(L|\Lambda|M\,d_{\mathrm{eff}}^{2}N)$ \\
4.\ Laplace blending     & Hessian eigendecomp.\ $\mathbf{H}_m$ & $\mathcal{O}(M\,d_{\mathrm{eff}}^{3})$ \\
\midrule
\textbf{Total (\methodpp, excl.\ base)} & & $\mathcal{O}(K^{2}N\log N+M\,d_{\mathrm{eff}}^{3})$ \\
\textbf{Total (\method prototype)}      & & $\mathcal{O}(K^{2}N+L|\Lambda|M\,d_{\mathrm{eff}}^{2}N)$ \\
Test-time inference     & Per sample                            & $\mathcal{O}(G+d_{\mathcal{A}}+p_\theta)$ \\
\bottomrule
\end{tabular}
\end{table}

\begin{remark}[Why \methodpp is faster overall than our \method prototype] Despite added CKA computation, \methodpp \emph{reduces} total training
cost because closed-form $\lambda^{\star}$ eliminates the $L|\Lambda|$ factor in Phase~3. At ImageNet-1K scale ($N\!=\!128$K,
$L\!=\!5$, $|\Lambda|\!=\!50$, $M\!=\!3$, $d_{\mathrm{eff}}\!\approx\!30$), the speedup is $3.2\times$, matching \Cref{tab:ablation_new}.
\end{remark}

\begin{remark}[Scaling to ultra-large pools]
For $K\!\gtrsim\!200$, the $\mathcal{O}(K^{2}N\log N)$ Nyström bottleneck can be further reduced by a two-stage pipeline: (1) LSH in feature space
buckets near-duplicate pairs in $\mathcal{O}(KN\log N\log K)$; (2) exact Nyström CKA is computed only within buckets, with a low controllable false-negative rate. This is the recommended variant for foundation-model-scale stacking.
\end{remark}

\section{Summary of Theoretical Results}
\label{sec:summary}

\begin{table}[ht]
\centering
\scriptsize
\caption{\textbf{Summary of theoretical guarantees.} ``New'' marks contributions of \methodpp; others are sharpened or refactored versions of the \method prototype analysis.}
\label{tab:theory_summary}
\setlength{\tabcolsep}{3pt}
\begin{tabular}{@{}llll@{}}
\toprule
\textbf{Result} & \textbf{Quantity bounded} & \textbf{Key dependency} & \textbf{Status} \\
\midrule
\Cref{prop:hsic_dominance_supp}   & CKA vs.\ Pearson           & strict dominance                                   & \textbf{New} \\
\Cref{lem:hsic_conc}              & $|\widehat\HSIC-\HSIC|$     & $\mathcal{O}(\sqrt{\log(1/\delta)/N})$             & \textbf{New} \\
\Cref{prop:spectral_full}         & $\kap(\Chat_{\mathrm{eff}})$ & $\le\!(1{+}(G{-}1)\mu)/(1{-}(G{-}1)\mu)$           & Sharpened \\
\Cref{cor:ridge_kappa}            & $\kap_\lambda$ reduction    & $\propto s_{\max}-1$                               & From~\cite{wainwright2019highdim} \\
\Cref{prop:lambda_star_supp}      & Closed-form $\lambda^{\star}$ & $\lmax(\Chat)/\SNR(\Chat)$                       & \textbf{New} \\
\Cref{cor:cv_elim}                & CV elimination              & $3$--$5\times$ speedup                             & \textbf{New} \\
\Cref{prop:ridge_stability}       & $\norm{\wlam'-\wlam}_{2}$  & $\mathcal{O}(\norm{X}_{2}/\lambda^{2})$            & Standard \\
\Cref{cor:stability_proj}         & Stability improvement       & $\propto\sqrt{K/G}$                                & Sharpened \\
\Cref{prop:gate_ua}               & Universal gate approximation& any continuous $h^{\star}$                         & \textbf{New} \\
\Cref{cor:gate_generalises}       & \method$\,\subset\,$\methodpp & strict inclusion                                 & \textbf{New} \\
\Cref{prop:rademacher}            & $\hat\Rademacher_N(\mathcal{F})$ & $\propto\sqrt{d_{\mathrm{eff}}/N}$            & Sharpened \\
\Cref{thm:pacbayes_supp}          & PAC-Bayes excess risk       & $\Phi=\mathcal{O}(\sqrt{G/N})$                     & \textbf{New} \\
\Cref{cor:excess_risk_compare}    & \methodpp vs.\ \method      & $\sqrt{K/G_{\CKA}}$ tighter                       & \textbf{New} \\
\Cref{thm:blending_full}          & $\mathrm{Var}(\hat g_{\mathrm{blend}})$ & $\le\min_m\Sigma_{mm}$                  & Standard \\
\Cref{prop:invrmse_gap}           & Inv-RMSE suboptimality      & $\mathcal{O}(\lmax(\Sigma)\norm{\tilde w-w^{\star}}^{2})$ & Standard \\
\Cref{prop:laplace_gap}           & Laplace suboptimality       & $\mathcal{O}(\exp(-2\Delta\mathcal{L}/\sigma^{2}))$ & \textbf{New} \\
\Cref{prop:leakage_free}          & Leakage-freeness            & exact (structural)                                 & Extended \\
\Cref{prop:ece_bound}             & $\ECE(\methodpp)$           & $\le\max_m\ECE(g_m)$                               & \textbf{New} \\
\bottomrule
\end{tabular}
\end{table}

\paragraph{Overall narrative.}
CKA-based projection (\Cref{prop:hsic_dominance_supp}) strictly generalizes Pearson-based projection. The induced cluster structure (\Cref{ass:hsic_cluster}) tightens the spectral preconditioning bound (\Cref{prop:spectral_full}). The closed-form penalty (\Cref{prop:lambda_star_supp}) matches regularization to the MP gap and eliminates the outer CV loop without optimality loss (\Cref{cor:cv_elim}). The differentiable gate (\Cref{prop:gate_ua}) universally approximates the oracle aggregator while strictly subsuming the hand-crafted features of our \method prototype (\Cref{cor:gate_generalises}). The Laplace blender (\Cref{prop:laplace_gap}) tightens the variance-reduction bound relative to inverse-RMSE and improves calibration at the source (\Cref{prop:ece_bound}). Combined, these yield a PAC-Bayes excess-risk
bound (\Cref{thm:pacbayes_supp}) that is $\sqrt{K/G}\!\ge\!\sqrt{s_{\max}}$ tighter than the unprojected baseline and $\sqrt{G_{\mathrm{Pearson}}/G_{\CKA}}$ tighter than our \method prototype (\Cref{cor:excess_risk_compare}).

\section{Extended Analysis and Additional Experiments}
\label{sec:extended_analysis}

This appendix provides an extended empirical and theoretical analysis
complementing the main paper.
\Cref{sec:novelty_positioning} clarifies the positioning of \methodpp
relative to prior stacking and ensemble work.
\Cref{sec:baseline_coverage} justifies the baseline selection and
discusses why certain method classes are excluded.
\Cref{sec:additional_benchmarks} summarizes the full six-benchmark
evaluation protocol.
\Cref{sec:notation_formal} collects formal definitions that space
constraints prevented from appearing in the main text.
\Cref{sec:hyperparam_sensitivity} provides a hyperparameter sensitivity
analysis for the CKA threshold.
\Cref{sec:deployment_efficiency} reports latency, memory, and throughput
at inference.
\Cref{sec:kappa_performance} empirically validates the relationship
between condition number and downstream accuracy.

\subsection{Positioning Relative to Prior Stacking Work}
\label{sec:novelty_positioning}

Stacked generalization has a long history, originating with
Wolpert~\cite{wolpert1992stacked} and Breiman~\cite{breiman1996stacked}.
Modern instantiations such as greedy ensemble
selection~\cite{caruana2004ensemble}, regularized stacking with
Lasso/Elastic-Net~\cite{tibshirani1996lasso,zou2005elasticnet}, and
AutoML systems~\cite{NIPS2015_11d0e628,erickson2020autogluon} all operate
on the same prediction-space OOF matrix, but treat the meta-design matrix
as given, without addressing its conditioning.
Our \method prototype introduced
Pearson-based redundancy pruning as a preconditioning step; \methodpp
generalizes and replaces every heuristic in that pipeline with a
principled alternative.
The three contributions that are genuinely new relative to the full prior literature are as follows.

\paragraph{(N1) CKA as a strictly stronger redundancy criterion.}
All prior prediction-space pruning methods use Pearson correlation
$\rho$, which captures only second-order linear co-variation.
\Cref{prop:hsic_dominance_supp} proves that CKA with a universal
characteristic kernel is a strict generalization: it recovers all
dependence Pearson detects under Gaussianity (part i) and additionally
identifies non-linear redundancies that Pearson misses (parts ii-iii).
The practical consequence on ImageNet-1K is that seven model pairs with
$\rho\!\in\![0.3,0.5]$---which Pearson-based pruning retains as
diverse---have $\CKA\!>\!0.85$ and are correctly pruned by \methodpp,
improving $\kap$ by $28\%$ and top-1 by $+0.5\%$ (\Cref{tab:ablation_new}).

\paragraph{(N2) Closed-form spectrum-adaptive ridge penalty.}
The standard practice of selecting the Ridge penalty $\lambda$ via nested
cross-validation over a log-spaced grid is computationally expensive and
sensitive to fold noise.
\Cref{prop:lambda_star_supp} derives a closed-form penalty
$\lambda^{\star}=\lmax(\Chat)/\SNR(\Chat)$ from a Marchenko-Pastur
signal-noise decomposition of the Gram matrix, with a proof that it
minimizes expected excess risk up to $1+o(1)$ as $N\!\to\!\infty$.
This derivation is, to our knowledge, new in the stacking literature: it eliminates the entire CV loop and yields a $3.2\times$ wall-clock speedup (\Cref{tab:ablation_new}) at no accuracy cost.

\paragraph{(N3) PAC-Bayes bound coupling prediction-space redundancy to
meta-learner capacity.}
\Cref{thm:pacbayes_supp} is the first PAC-Bayes excess-risk bound for stacked generalization that (a) quantifies prediction-space redundancy via the empirical CKA Gram spectrum and (b) admits a closed-form Gaussian prior tied to $\lambda^{\star}$. Prior bounds~\cite{masegosa2020pacbayesensembles,mcallester1999pacbayes} treat constituent models as independent or capture only pairwise correlations; neither couples the generalization penalty to a measurable property of the prediction matrix. The \methodpp bound is $\sqrt{K/G}$ tighter than the unprojected baseline and $\sqrt{G_{\mathrm{Pearson}}/G_{\CKA}}$ tighter than our \method prototype (\Cref{cor:excess_risk_compare}).

\subsection{Baseline Selection and Scope}
\label{sec:baseline_coverage}

\Cref{tab:imagenet1k} includes twelve baselines spanning weight-space
fusion, prediction-space aggregation, and calibration-aware methods.
We describe the rationale for inclusion and exclusion of each class.

\paragraph{Included baselines.}
\textit{Weight-space methods:}
Model Soups~\cite{wortsman2022modelsoup} and SWAG~\cite{maddox2019swag}
are the current state of the art in checkpoint averaging and Gaussian
weight-posterior approximation, respectively.
Snapshot ensembles~\cite{huang2017snapshot} are included as a
cost-efficient cyclic learning-rate baseline.
\textit{Prediction-space methods:}
AutoGluon~\cite{erickson2020autogluon} represents the leading AutoML
stacking system with tuned meta-learners and feature engineering.
Greedy ensemble selection~\cite{caruana2004ensemble} is the canonical
combinatorial pruning baseline and remains competitive in modern AutoML
despite its age.
Ridge stacking, Lasso, and elastic net cover the standard regularized
meta-learner family.
\textit{Calibration baselines:}
Deep ensembles~\cite{lakshminarayanan2017deep} with post-hoc temperature
scaling~\cite{guo2017calibration} represent the current best practice for
calibrated multi-model inference and are marked $\dagger$ in the table.
MC Dropout~\cite{gal2016dropout} is included in OOD detection
(\Cref{sec:efficiency_ood}).

\paragraph{Neural meta-learners.}
Neural meta-learners (\eg, stacking with an MLP or attention-based
aggregator) are excluded for a principled reason: they overfit severely
at the OOF matrix sizes available in this setting
($N_{\mathrm{meta}}\!\approx\!25$K after the 80/20 split).
We verified this empirically: a three-layer MLP meta-learner with
dropout achieves $83.2\%$ top-1, $1.0\%$ below Ridge stacking,
consistent with the finding in AutoGluon~\cite{erickson2020autogluon}
that linear meta-learners dominate in sub-50K OOF regimes.
This result is noted in \Cref{sec:experiments}.

\paragraph{Diversity-aware training methods.}
Repulsive ensembles~\cite{dangelo2021repulsive} and hyperparameter ensembles~\cite{wenzel2020hyperensembles} require modifying the training procedure of every base model. \methodpp operates strictly post-hoc on pre-trained public checkpoints; no base model is retrained. These methods are therefore complementary rather than directly comparable and are discussed in \Cref{sec:related} accordingly.

\subsection{Benchmark Coverage and Evaluation Protocol}
\label{sec:additional_benchmarks}

The six benchmarks in the main paper were selected to stress-test \methodpp along distinct axes of generalization. \Cref{tab:benchmark_map} summarizes each benchmark's primary axis and the corresponding result table.

\begin{table}[ht]
\centering
\footnotesize
\caption{\textbf{Benchmark selection rationale.}  Each benchmark probes a distinct generalization axis. All evaluations use held-out splits disjoint from meta-training.}
\label{tab:benchmark_map}
\setlength{\tabcolsep}{4pt}
\begin{tabular}{L{2.8cm} L{3.2cm} L{2.8cm} C{1.2cm}}
\toprule
\textbf{Benchmark} & \textbf{Axis tested} & \textbf{Key metric} & \textbf{Table} \\
\midrule
ImageNet-1K
  & Clean accuracy, calibration
  & Top-1, ECE, NLL
  & \Cref{tab:imagenet1k} \\
ImageNet-C
  & Synthetic distribution shift
  & mCE, $\Delta$ vs.\ clean
  & \Cref{tab:imagenetc} \\
ImageNet-O
  & Open-world OOD detection
  & AUROC, FPR@95
  & \Cref{sec:efficiency_ood} \\
ADE20K
  & Dense output space (segmentation)
  & mIoU
  & \Cref{tab:dense} \\
COCO
  & Dense output space (detection)
  & AP
  & \Cref{tab:dense} \\
iNaturalist-2021
  & Class-frequency imbalance
  & Top-1 (Head/Mid/Tail)
  & \Cref{tab:inat} \\
DomainNet-126
  & Natural covariate shift
  & Transfer accuracy
  & \Cref{tab:domainnet} \\
\bottomrule
\end{tabular}
\end{table}

ImageNet-O serves as the external held-out evaluation: it contains natural images from ImageNet classes that were deliberately excluded from ImageNet-1K training, so no base model or meta-learner has seen in-distribution examples from these classes. ADE20K and COCO validation sets are also fully external to the meta-training partition; predictions are generated by freezing all base models and fitting the meta-layer solely on the training-set OOF matrix.

\subsection{Formal Definitions}
\label{sec:notation_formal}

The following definitions consolidate the notation used across the main paper and appendix. All symbols are consistent with \Cref{sec:notation}; this section provides additional detail where the main paper was necessarily brief.

\paragraph{Out-of-fold matrix.}
Fix a stratified $L$-fold partition $\{F_\ell\}_{\ell=1}^{L}$ of
$[N]=\{1,\ldots,N\}$.
The base predictor $f_k$ is trained on $\mathcal{D}\setminus F_\ell$ and
evaluated on $F_\ell$, giving a leakage-free OOF prediction
$[\Poof]_{ik}=\hat p^{(k)}(x_i;\,\mathcal{D}\setminus F_{\ell(i)})$,
where $\ell(i)$ is the fold index of sample $i$.
\Cref{prop:leakage_free} proves that this construction satisfies
$x_{\mathrm{meta},i}\!\perp\!y_i\mid\mathcal{D}\setminus F_{\ell(i)}$,
which is the formal leakage-freeness condition (\Cref{def:leakage}).

\paragraph{Normalized Gram matrix.}
Columns of $\Poof$ are mean-centered and $\ell_2$-normalized to
$\norm{\mathbf{p}_k}_2=\sqrt{N}$.
The normalized Gram is
$\Chat=\tfrac{1}{N}\Poof^{\!\top}\Poof\!\in\!\R^{K\times K}$
with $\Chat_{kk}=1$.
The condition number is
$\kap(\Chat)=\lmax(\Chat)/\lmin(\Chat)$,
where $\lmin$ excludes eigenvalues below numerical tolerance $\epsilon_0$.

\paragraph{Effective post-projection pool.}
After CKA-based pruning, the retained set is
$\mathcal{S}\!\subseteq\![K]$ with $|\mathcal{S}|=K_{\mathrm{eff}}$.
The effective OOF matrix is $\Peff=\Poof[:,\mathcal{S}]\!\in\!\R^{N\times K_{\mathrm{eff}}}$
and the effective Gram is
$\Chat_{\mathrm{eff}}=\tfrac{1}{N}\Peff^{\!\top}\Peff$.

\paragraph{Blending weights.}
Given $M$ meta-learners with OOF losses $\mathcal{L}(g_m)$ and
loss-minimum Hessians $\mathbf{H}_m$, the Laplace-approximate posterior
weights are
$\tilde w_m\!\propto\!\exp(-\mathcal{L}(g_m)-\tfrac{1}{2}\log\det\mathbf{H}_m)$
(see \Cref{sec:bayes_blend_supp} for the full derivation).
The final blended prediction is
$\widehat{\mathbf{y}}=\sum_{m=1}^{M}\tilde w_m\widehat{\mathbf{y}}^{(m)}$.

\subsection{CKA Threshold Sensitivity}
\label{sec:hyperparam_sensitivity}

The CKA threshold $\tau_{\CKA}$ controls the aggressiveness of
redundancy pruning: lower values prune more models; higher values retain
more.
\Cref{tab:tau_sensitivity} reports top-1 accuracy and retained model
count $K_{\mathrm{eff}}$ across six threshold values on ImageNet-1K,
with all other components fixed.

\begin{table}[ht]
\centering
\footnotesize
\caption{\textbf{CKA threshold sensitivity} on ImageNet-1K. All other \methodpp components are fixed at their default settings. Performance is robust across $\tau_{\CKA}\!\in\![0.80,0.90]$; the default $\tau=0.85$ lies at the accuracy plateau centre. Below $0.75$, pruning removes genuinely diverse models; above $0.90$, near-duplicate models are retained.}
\label{tab:tau_sensitivity}
\setlength{\tabcolsep}{8pt}
\begin{tabular}{lcccccc}
\toprule
$\tau_{\CKA}$     & 0.70 & 0.75 & 0.80 & \textbf{0.85} & 0.90 & 0.95 \\
\midrule
Top-1 (\%)         & 84.8 & 85.0 & 85.3 & \textbf{85.4} & 85.3 & 85.1 \\
$K_{\mathrm{eff}}$ & 4    & 5    & 5    & \textbf{6}    & 7    & 9    \\
\bottomrule
\end{tabular}
\end{table}

Accuracy lies within $0.1\%$ of the optimum for the entire range $\tau\!\in\![0.80,0.90]$, a span of ten percentage points, confirming that the method is not sensitive to the precise threshold value. For settings where a validation set is unavailable, a data-driven choice via the "elbow" of the sorted $\CKA$ spectrum provides a parameter-free alternative; this is equivalent to choosing $\tau$ at the largest gap in the CKA histogram.

The Ridge penalty $\lambda^{\star}$ and blending weights $\tilde w_m$ require no tuning: they are determined in closed form from the empirical Gram spectrum (\Cref{prop:lambda_star_supp}) and the Laplace Hessian (\Cref{eq:tildew}) respectively. The gate parameters $\theta$ are optimized jointly with the meta-learner and do not introduce additional hyperparameters beyond the standard learning rate, which is fixed $10^{-3}$ across all experiments.

\subsection{Deployment Efficiency: Latency and Memory}
\label{sec:deployment_efficiency}

\Cref{tab:deployment} reports end-to-end inference latency, peak GPU
memory, and throughput on a single A100 GPU (batch size 256).
All measurements include base-model forward passes, meta-feature
augmentation, and blending; the meta-layer overhead is negligible
($<\!15$K parameters for the gate, $\approx\!2$\,KB for blending weights).

\begin{table}[ht]
\centering
\scriptsize
\caption{%
  \textbf{Inference efficiency on a single A100 GPU} (batch size 256).
  Latency is end-to-end per-sample time; peak GPU memory is measured
  with \texttt{torch.cuda.max\_memory\_allocated}; throughput is
  images per second.
  \methodpp achieves $1.69\times$ higher throughput and $37\%$ lower
  peak memory than the full 14-model ensemble, while surpassing it
  in accuracy by $+1.5\%$ top-1.%
}
\label{tab:deployment}
\setlength{\tabcolsep}{4pt}
\begin{tabular}{lccc}
\toprule
\textbf{Method}
  & \textbf{Latency (ms)}$\downarrow$
  & \textbf{Peak Mem (GB)}$\downarrow$
  & \textbf{Throughput (img/s)}$\uparrow$ \\
\midrule
Best Single (ConvNeXt-B) & 1.9  & 4.1  & 526 \\
Full Ensemble (14 models)& 19.6 & 28.4 & 312 \\
Ridge Stacking (14)      & 19.8 & 28.4 & 309 \\
Deep Ensembles (5)       & 8.6  & 11.2 & 441 \\
\method (9 models)       & 12.1 & 16.6 & 398 \\
\midrule
\methodpp (6 models)
  & \textbf{11.4} & \textbf{17.9} & \textbf{528} \\
\bottomrule
\end{tabular}
\end{table}

Two observations are noteworthy.
First, \methodpp's throughput ($528$\,img/s) matches the best single model despite running six parallel forward passes; this is because the retained backbones are smaller on average than the full-ensemble pool (CKA pruning preferentially removes large models that are non-linearly redundant with smaller ones).
Second, peak memory ($17.9$\,GB) is $37\%$ lower than the full ensemble ($28.4$\,GB) and $60\%$ lower than any method using all 14 models, making \methodpp the only multi-backbone method that fits within a single 20\,GB GPU in half-precision.

\subsection{Condition Number as a Performance Predictor}
\label{sec:kappa_performance}

A central claim of \methodpp is that ill-conditioning of the OOF Gram matrix is a primary driver of ensemble performance degradation. \Cref{fig:kappa_vs_acc} tests this claim directly by plotting top-1 accuracy against CKA Gram condition number $\kap$ across all twelve baselines and the four \methodpp ablation configurations.

\begin{figure}[ht]
\centering
\begin{tikzpicture}
\begin{axis}[
  width=0.72\linewidth, height=0.44\linewidth,
  xlabel={CKA Gram condition number $\kap$ (log scale)},
  ylabel={Top-1 accuracy (\%)},
  xmode=log,
  xmin=50, xmax=1200,
  ymin=82.8, ymax=85.8,
  grid=major, grid style={dotted,gray!40},
  legend pos=south east,
  legend style={font=\scriptsize},
  tick label style={font=\scriptsize},
  label style={font=\scriptsize},
]
\addplot[only marks, mark=o, mark size=2pt, blue!60!black,
         mark options={fill=blue!30}]
  coordinates {
    (847,83.9)(847,84.0)(847,84.2)(847,84.3)
    (720,84.1)(310,83.6)(900,83.8)(290,83.7)
  };
\addplot[only marks, mark=triangle*, mark size=2.2pt, orange!80!black]
  coordinates {
    (392,84.5)(281,85.0)(281,85.2)(281,85.1)
  };
\addplot[only marks, mark=star, mark size=4pt, red, thick]
  coordinates {(218,85.4)};
\legend{Baselines, \methodpp\ ablation steps, \methodpp\ (full)}
\end{axis}
\end{tikzpicture}
\caption{%
  \textbf{Gram condition number vs.\ top-1 accuracy}
  across all methods and ablation configurations on ImageNet-1K.
  Spearman rank correlation $\rho_s=-0.91$ ($p\!<\!0.001$,
  $n=16$ points).
  Lower $\kap$ consistently predicts higher accuracy,
  validating conditioning as the central design axis.
  \methodpp (red star) occupies the Pareto-optimal corner:
  lowest $\kap$ ($218$) and highest top-1 ($85.4\%$).
  Ablation steps (orange triangles) trace a monotone path from
  $\kap=392$ (\method) to $\kap=218$ (\methodpp+S4) as components
  are added.%
}
\label{fig:kappa_vs_acc}
\end{figure}

The Spearman rank correlation between $\kap$ and top-1 accuracy is
$\rho_s=-0.91$ ($p\!<\!0.001$), confirming a strong monotone
relationship across all methods: lower condition number consistently
predicts higher accuracy regardless of the specific aggregation strategy.
This relationship holds within the ablation as well
(\Cref{tab:ablation_new}): $\kap$ decreases monotonically from
$392$ to $218$ as components S1--S4 are added, tracking the top-1
improvement from $84.5\%$ to $85.4\%$.
Together, these results provide direct empirical support for the
central thesis that preconditioning the prediction-space Gram matrix
is the primary driver of ensemble performance improvement.


\end{document}